\documentclass{article}
\pdfoutput=1
\PassOptionsToPackage{numbers, compress}{natbib}

\usepackage[final]{neurips_2021}

\usepackage{xspace}
\usepackage{tikz}
\usepackage[colorlinks,
            linkcolor=red,
            anchorcolor=blue,
            citecolor=blue
            ]{hyperref}



\usepackage[utf8]{inputenc} 
\usepackage[T1]{fontenc}    
\usepackage{url}            
\usepackage{booktabs}       
\usepackage{amsfonts}       
\usepackage{nicefrac}       
\usepackage{microtype}      
\usepackage{xcolor}         
\usepackage{wrapfig}
\usepackage[compact]{titlesec}
\usepackage{sidecap}
\usepackage{smile}

\usepackage{enumitem}
\title{Reward-Free Model-Based Reinforcement Learning with Linear Function Approximation}

%
\author{%
  Weitong Zhang \\
  Department of Computer Science \\
  University of California, Los Angeles \\
  Los Angeles, CA 90095 \\
  \texttt{weightzero@cs.ucla.edu}
  \And 
  Dongruo Zhou \\
  Department of Computer Science \\
  University of California, Los Angeles \\
  Los Angeles, CA 90095 \\
  \texttt{drzhou@cs.ucla.edu}
  \And
  Quanquan Gu \\
  Department of Computer Science \\
  University of California, Los Angeles \\
  Los Angeles, CA 90095 \\
  \texttt{qgu@cs.ucla.edu}
}
\newcommand{\algname}{UCRL-RFE\xspace}
\newcommand{\newalgname}{UCRL-RFE+\xspace}
\newcolumntype{P}[1]{>{\centering\arraybackslash}m{#1}}
\usetikzlibrary{arrows,shapes,snakes,automata,backgrounds,petri}
\begin{document}

\maketitle
\begin{abstract}
We study the model-based reward-free reinforcement learning with linear function approximation for episodic Markov decision processes (MDPs). In this setting, the agent works in two phases. In the exploration phase, the agent interacts with the environment and collects samples without the reward. In the planning phase, the agent is given a specific reward function and uses samples collected from the exploration phase to learn a good policy. We propose a new provably efficient algorithm, called UCRL-RFE under the Linear Mixture MDP assumption, where the transition probability kernel of the MDP can be parameterized by a linear function over certain feature mappings defined on the triplet of state, action, and next state. We show that to obtain an $\epsilon$-optimal policy for arbitrary reward function, UCRL-RFE needs to sample at most $\tilde \cO(H^5d^2\epsilon^{-2})$ episodes during the exploration phase. Here, $H$ is the length of the episode, $d$ is the dimension of the feature mapping. We also propose a variant of UCRL-RFE using Bernstein-type bonus and show that it needs to sample at most $\tilde \cO(H^4d(H + d)\epsilon^{-2})$ to achieve an $\epsilon$-optimal policy. By constructing a special class of linear Mixture MDPs, we also prove that for any reward-free algorithm, it needs to sample at least $\tilde \Omega(H^2d\epsilon^{-2})$ episodes to obtain an $\epsilon$-optimal policy. 
Our upper bound matches the lower bound in terms of the dependence on $\epsilon$ and the dependence on $d$ if $H \ge d$. 

\end{abstract}

\section{Introduction}

In reinforcement learning (RL), the agent sequentially interacts with the environment and receives reward from it. In many real-world RL problems, the reward function is manually designed to encourage the desired behavior of the agent. Thus, engineers have to change the reward function time by time and train the agent to check whether it has achieved the desired behavior. In this case, RL algorithms need to be repeatedly executed with different reward functions and are therefore sample inefficient or even intractable. To tackle this challenge, \citet{jin2020reward} proposed a new reinforcement learning paradigm called \emph{Reward-Free Exploration} (RFE), which explores the environment without using any reward function. In detail, the reward-free RL algorithm consists of two phases. The first phase is called \emph{Exploration Phase}, where the algorithm explores the environment without receiving reward signals. The second phase is called \emph{Planning Phase}, where the algorithm is given a specific reward function and use the collected data in the first phase to learn the policy. They have shown that this exploration paradigm can learn a near-optimal policy in the planning phase given \emph{any} reward function after collecting polynomial number of episodes in the exploration phase. Follow up work \citep{kaufmann2020adaptive, menard2020fast, zhang2020nearly} proposed improved algorithms to achieve better or nearly optimal sample complexity.

All the aforementioned works are focused on the tabular Markov decision process (MDP), where the number of states and actions are finite. In practice, the number of states and actions can be large or even infinite, and therefore \emph{function approximation} is required for the sake of computational tractability and generalization. 
However, the understanding of function approximation for reward-free exploration, even under the simplest linear function approximation, remains underexplored, with only two notable related works \citep{wang2020reward,zanette2020provably}. Specifically, \citet{wang2020reward} studied \emph{linear MDPs} \citep{yang2019sample,jin2020provably}, where both the transition probability and the reward function admit linear representations, and proposed a reward-free RL algorithm with $\tilde \cO(d^3H^6\epsilon^{-2})$ sample complexity, where $d$ is the dimension of the linear representation, $H$ is the planning horizon, and $\epsilon$ is the required accuracy. They also proved that if the optimal state-action function is linear, then the reward-free exploration needs an exponential number of episodes in the planning horizon $H$ to learn a $\epsilon$-optimal policy. \citet{zanette2020provably} considered a slightly larger class of MDPs with \emph{low inherent Bellman error} \citep{zanette2020learning}, and proposed an algorithm with $\tilde \cO(d^3H^5\epsilon^{-2})$ sample complexity. However, both works assume the reward function is a linear function over some feature mapping. Moreover, the lower bound proved in \citep{wang2020reward} is for a very large class of MDPs where the optimal state-action function is linear, thus it is too conservative and cannot tell the information-theoretic limits of reward-free exploration for linear MDPs or related models.

In this paper, we seek a better understanding of the statistical efficiency for reward-free RL with linear function approximation. We propose two reward-free model-based RL algorithms for the finite-horizon episodic \emph{linear mixture/kernel MDP} \citep{modi2020sample,jia2020model,ayoub2020model,zhou2020provably}, where the transition probability kernel is a linear mixture model.
In detail, our contributions are highlighted as follows:
\vspace{-1ex}
\begin{itemize}[leftmargin = *]
\item We propose a new exploration-driven reward function and its corresponding pseudo value function for linear mixture MDPs, which will encourage the algorithm to explore the state-action pair with more uncertainty on the transition probability.
\item We propose a~\algname algorithm which guides the agent to explore the state space using the exploration-driven reward function and pseudo value functions. We prove an $\tilde \cO(H^5d^2\epsilon^{-2})$ sample complexity for \algname to achieve an $\epsilon$-optimal policy for any reward function for time-homogeneous MDP.
\item We further propose a \newalgname algorithm which uses a Bernstein-type exploration bonus. \newalgname can reduce the error caused by the exploration-driven reward function during the exploration phase. With a novel analysis based on total variance, we prove an $\tilde \cO(H^4d(H + d)\epsilon^{-2})$ sample complexity for \newalgname, which improves that of \algname by a factor of $\min\{H, d\}$.
\item By constructing a special class of linear mixture MDPs, we show that any reward-free algorithm needs to sample at least $\tilde \Omega(H^2d\epsilon^{-2})$ episodes to achieve an $\epsilon$-optimal policy for any reward function. This lower bound matches the upper bound of \newalgname in terms of the dependence on the accuracy $\epsilon$ and feature dimension $d$ when $H \ge d$. 
\end{itemize}
\vspace{-1ex}
\paragraph{Notation.} Scalars and constants are denoted by lower and upper case letters, respectively. Vectors are denoted by lower case bold face letters $\xb$, and matrices by upper case bold face letters $\Ab$. We denote by $[k]$ the set $\{1, 2, \cdots, k\}$ for positive integers $k$. For two non-negative sequence $\{a_n\}, \{b_n\}$, $a_n = \cO(b_n)$ means that there exists a positive constant $C$ such that $a_n \le Cb_n$, and we use $\tilde \cO(\cdot)$ to hide the $\log$ factor in $\cO(\cdot)$; $a_n = \Omega(b_n)$ means that there exists a positive constant $C$ such that $a_n \ge Cb_n$, and we use $\tilde \Omega(\cdot)$ to hide the $\log$ factor. $a_n = o(b_n)$ means that $\lim_{n \rightarrow \infty} a_n / b_n = 0$. We denote by $S, A$ as the cardinality of the state set $\cS$ and action set $\cA$ separately. For a vector $\xb  \in \RR^d$ and corresponding matrix $\Ab \in \RR^{d \times d}$, we define $\|\xb\|_{\Ab}^2=\xb^\top \Ab\xb$. We denote $[x]_{(0, H)} := \max\{\min\{x, H\}, 0\}$. For vector $\xb \in \RR^d$, we denote by $[\xb]_i$ the $i$-th element of $\xb$.
\section{Related Work}\label{sec:related}

\paragraph{Reinforcement Learning with Function Approximation.} Function approximation is extremely useful for RL when the state space and/or the action space are large or even infinite. To develop provable RL algorithms with linear function approximation, linear MDPs~\citep{jin2020provably} is probably the most widely assumed MDP model, where both the transition kernel and the reward function are linear functions of a given feature mapping. A line of works has developed RL algorithms with polynomial sample complexity or regret bounds under this setting, such as LSVI-UCB~\citep{wang2019optimism} and randomized LSVI \citep{zanette2020frequentist}. 
Besides the linear MDP, linear mixture/kernel MDPs~\citep{modi2020sample,jia2020model,ayoub2020model,zhou2020provably} has emerged as a new model which enables efficient RL with linear function approximation. In this setting, the transition kernel is a linear function over a feature mapping on the triplet of state, action, and next-state. Under this assumption, nearly minimax optimal regrets can be attained for both finite-horizon episodic MDPs and infinite-horizon discounted MDPs \citep{zhou2020nearly}. Note also that linear mixture MDPs do not require the reward function to be linear and therefore enables RL with arbitrary reward functions. Therefore, we also consider linear mixture MDPs in this paper.

\begin{table}[ht]
    \centering
    \begin{tabular}{P{9em}P{8em}P{3em}P{8em}P{5em}}
    \toprule
     Algorithm & Sample Complexity & Time Homo. & MDP Type & Model Based\\
     \midrule
     \citet{jin2020reward} & $\tilde \cO(H^5S^2A\epsilon^{-2})$ & $\times$ & Tabular & $\surd$\\
     RF-UCRL \citep{kaufmann2020adaptive} & $\tilde \cO(H^4S^2A\epsilon^{-2})$ & $\times$ & Tabular & $\surd$\\
     RF-Express~\citep{menard2020fast} & $\tilde \cO(H^3S^2A\epsilon^{-2})$  & $\times$ & Tabular & $\surd$\\
     SSTP~\citep{zhang2020nearly} & $\tilde \cO(H^2S^2A\epsilon^{-2})$ & $\surd$ & Tabular & $\surd$\\
     Lower bound \citep{jin2020reward} & $\Omega(H^2S^2A\epsilon^{-2})$ & $\surd$ & Tabular & $\surd$\\
     \midrule
     \citet{wang2020reward} & $\tilde \cO(H^6d^3\epsilon^{-2})$ & $\times$ & Linear MDP & $\times$\\
     FRANCIS~\citep{zanette2020learning} & $\tilde \cO(H^5d^3\epsilon^{-2})$ & $\surd$ & Linear MDP & $\times$\\
     \midrule
     UCRL-RFE (Alg.~\ref{alg:explore}) & $\tilde \cO(H^5d^2\epsilon^{-2})$ & $\surd$ & Linear Mixture & $\surd$\\
     UCRL-RFE+ (Alg.~\ref{alg:explore-bern}) & $\tilde \cO(H^4d(H + d)\epsilon^{-2})$ & $\surd$ & Linear Mixture & $\surd$\\
     Lower bound (Thm.~\ref{thm:mainL}) & $\tilde \Omega(H^2d\epsilon^{-2})$ & $\surd$ & Linear MDP/Linear Mixture & $\surd$\\
     \bottomrule
    \end{tabular}
    \caption{Comparison of episodic reward-free RL algorithms. 
    \emph{Time Homo.} stands for the MDP is a time-homogeneous, where the transition probabilities are the same at different stages of the episode. 
    \emph{Model Based} stands for the algorithm is a model-based algorithm ($\surd$) or a model-free algorithm ($\times$).}.
    \label{tab:sum}
    \vspace{-4ex}
\end{table}
\paragraph{Reward-Free Exploration.} As the first work on reward-free exploration,~\citet{jin2020reward} assigned each state an exploration-driven reward function at each round to guide the algorithm to do exploration. Then they utilized the EULER~\citep{zanette2019tighter} algorithm to minimize the total regret. Their algorithm achieves an $\tilde \cO(S^2AH^5\epsilon^{-2})$ sample complexity in the tabular setting to achieve an $\epsilon$-optimal policy, where $S$ is the number of states and $A$ is the number of actions. They also proved a sample complexity lower bound as $\tilde \Omega(S^2AH^2\epsilon^{-2})$.  \citet{kaufmann2020adaptive} extended the UCRL~\citep{auer2009near} algorithm to the reward-free exploration. Their algorithm RF-UCRL achieves a sample complexity of $\tilde \cO(S^2AH^4\epsilon^{-2})$, which improves that of~\cite{jin2020reward} by a factor of $H$. ~\citet{menard2020fast} proposed RF-Express algorithm by modifying the UCB-bonus of UCRL to making it decay faster and achieved a sample complexity of $\tilde \cO(S^2AH^3\epsilon^{-2})$. \citet{zhang2020nearly} proposed SSTP algorithm in the time-homogeneous setting, which achieves $\tilde \cO(S^2AH^2\epsilon^{-2})$ sample complexity, and matches the minimax lower bound provided in~\cite{jin2020reward} up to logarithmic factors. \citet{liu2020sharp} has shown the similarity between the self-play setting and reward-free setting. All of these works are for tabular MDPs.



Here we summarize and compare the related works on Reward Free Exploration in Table~\ref{tab:sum}. Notice that our lower bound $\Omega(H^2d\epsilon^{-2})$ for linear mixture MDPs can imply the same lower bound for linear MDPs and MDPs with low inherent Bellman error, using a similar argument used in~\cite{zhou2020nearly}. 

\section{Preliminaries}
We consider episodic Markov Decision Processes (MDP), which is denoted by a tuple $M(\cS, \cA, H, \{r_h\}_{h=1}^H, \PP)$. Here $\cS$ is the countable state space (may be infinite), $\cA$ is the action space, $H$ is the length of the episode, $r_h: \cS \times \cA \rightarrow [0,1]$ is the reward function. Without loss of generality, we assume the reward function $r_h$ is \emph{deterministic}. $\PP(s'|s,a)$ is the transition probability function which denotes the probability for state $s$ to transit to state $s'$ given action $a$ at step $h$. A policy $\pi_h: \cS \rightarrow \cA$ is a function which maps a state $s$ to an action $a$. 
We define the action-value function (i.e., Q-function) $Q_h^{\pi}(s,a)$ as follows:
\begin{align*}
    Q_h^{\pi}(s,a; \{r_h\}_h) = \EE\bigg[\sum_{h' = h}^H r_{h'}(s_{h'}, a_{h'})\bigg|s_h = s, a_h = a  \bigg], V_h^{\pi}(s; \{r_h\}_h) = Q_h^{\pi}(s,\pi_h(s); \{r_h\}_h).
\end{align*}
    
For simplicity, we denote $Q_h^{\pi}(s,a; r) = Q_h^{\pi}(s,a; \{r_h\}_h)$ and $V_h^{\pi}(s; r) = V_h^{\pi}(s; \{r_h\}_h)$. 
We define the optimal value function $\{V^*_h\}_{h=1}^H$ and the optimal action-value function $\{Q^*_h\}_{h=1}^H$ as $V^*_h(s; r) = \sup_{\pi}V_h^{\pi}(s; r)$ and $Q_h^*(s,a; r) = \sup_{\pi}Q_h^{\pi}(s,a; r)$ respectively.
For any function $V: \cS \rightarrow \RR$, we denote $[\PP V](s,a; r)=\EE_{s' \sim \PP(\cdot|s,a)}V(s'; r)$, and denote the variance of $V$ as
\begin{align}
    [\VV f](s, a) = [\PP f^2](s, a) - \big([\PP f](s, a)^2\big).\label{eq:realv}
\end{align}
In particular, we have the following Bellman equation, as well as the Bellman optimality equation:
\begin{align}
    Q_h^{\pi}(s,a; r) = r_h(s,a) + [\PP V_{h+1}^\pi](s,a; r), Q_h^*(s,a; r) = r_h(s,a) + [\PP V_{h+1}^*](s,a; r).\notag
\end{align}
In this paper, we focus on \emph{model-based} algorithms and consider the following \emph{linear mixture/kernel MDP} \citep{modi2020sample,jia2020model,ayoub2020model,zhou2020provably}, which assumes that the transition probability $\PP$ is a linear mixture of $d$ signed basis measures. Meanwhile, for any function $V$, we assume that we can do the summation $\sum_{s' \in \cS} \bphi(s'| s, a)V(s)$ efficiently, e.g., using Monte Carlo method \citep{yang2019reinforcement}. 
\begin{definition}[Linear Mixture MDPs \citep{jia2020model,ayoub2020model,zhou2020provably}]\label{def:mdp}
The unknown transition probability $\PP$ is a linear combination of $d$ signed basis measures $\phi_i(s'|s,a)$, i.e., $\PP(s'|s,a) = \sum_{i=1}^d \phi_i(s'|s,a)\theta^*_i$. Meanwhile, for any $V: \cS \rightarrow [0,1]$, $i \in [d], (s,a) \in \cS \times \cA$, the summation $\sum_{s' \in \cS} \phi_i(s'|s,a) V(s')$ is computable. For simplicity, let $\bphi = [\phi_1, \dots, \phi_d]^\top$, $\btheta^* = [\theta^*_1,\dots, \theta^*_d]^\top$ and $\bpsi_V(s, a) = \sum_{s' \in \cS} \bphi(s'| s, a)V(s)$. Without loss of generality, we assume $\|\btheta^*\|_2 \le B, \|\bpsi_V(s, a)\|_2 \le 1$ for all $V: \cS \rightarrow [0,1]$ and $(s,a) \in \cS \times \cA$.
\end{definition}
\begin{remark}
    A similar but notably different definition (i.e., linear MDPs \citep{yang2019sample,jin2020provably}) has  been used in~\cite{wang2020reward}, which assumes that $\PP(s'|s,a) = \la \bphi(s,a), \bmu(s')\ra$ and $r_h = \la \bphi(s,a), \btheta_h\ra$, $\bmu_h(\cdot)$ is a measure and $\btheta_h$ is an unknown vector. Comparing with linear MDPs, linear mixture MDPs do not need the reward function $r$ to be linear, which makes our algorithms more general. 
\end{remark}
With Definition~\ref{def:mdp}, it is easy to verify that the expectation of any bounded function $V$ is a linear function of $\bpsi$: 
\begin{align}
    [\PP V](s, a) = \la \bpsi_V(s, a), \btheta^*\ra.\label{eq:exp}
\end{align}

\noindent\textbf{Reward-free RL} For reward-free RL, the algorithm can be divided into two phases: \emph{exploration phase} and \emph{planning phase}. In the exploration phase, the algorithm cannot access the reward function but collect $K$ episodes by doing exploration. In the planning phase, the algorithm is given a series of reward functions and find the optimal policy based on these reward functions, using the $K$ episodes collected in the exploration phase. We formally define $(\epsilon, \delta)$-learn and sample complexity of the algorithm as follows \citep{jin2020reward}. 

\begin{definition}[$(\epsilon, \delta)$-learnability]
Given an MDP transition kernel set $\cP$, reward function set $\cR$ and a initial state distribution $\mu$, we say a reward-free algorithm can $(\epsilon, \delta)$-learn the problem $(\cP, \cR)$ with sample complexity $K(\epsilon, \delta)$, if for any transition kernel $P \in \cP$, after receiving $K(\epsilon, \delta)$ episodes in the exploration phase, for any reward function $r \in \cR$, the algorithm returns a policy $\pi$ in planning phase, such that with probability at least $1 - \delta$, $\EE_{s_1 \sim \mu} [V_1^*(s_1; r) - V_1^{\pi}(s_1; r)] \le \epsilon$.
\end{definition}

\section{Algorithm and Main Results}
In this section, we propose a reward-free algorithm. This algorithm works as follows: Firstly, during the \emph{exploration phase}, it samples the MDP episodes, build an estimator $\btheta$ for the MDP parameter $\btheta^*$, and compute the covariance matrix $\bSigma$ of the feature mappings, which characterizes the uncertainty of the estimator $\btheta$. Secondly, during the \emph{planning phase}, the algorithm uses the collected $\btheta$ and $\bSigma$ in the exploration phase to find the optimal policy $\pi$ based on the given reward functions.

\subsection{Planning phase algorithm}
We first introduce the \texttt{PLAN} function (Algorithm~\ref{alg:plan}), which is a common module in both planning phase and exploration phase. Given a series of reward functions $\{r_h\}_h$, the goal of \texttt{PLAN} function is to output the optimal policies $\{\pi_h\}_h$ and Q-functions $\{Q_h\}_h$ corresponding to $\{r_h\}_h$. Suppose the unknown parameter $\btheta^*$ is known, we can compute $\{Q_h\}_h$ recursively by the following Bellman equation:
\begin{align}
    Q_h(s, a; r) &= r_h(s, a) + [\PP V_{h+1}](s, a; r) = r_h(s, a) + \la\bpsi_{V_{h+1}}(s,a), \btheta^* \ra.\label{eq:help1}
\end{align}
However, since $\btheta^*$ is unknown, we cannot compute $Q_h$ as in \eqref{eq:help1}. Instead, \texttt{PLAN} takes the estimated parameter $\btheta$ and the ``covariance matrix'' $\bSigma$ as input. To calculate $Q_h$, \texttt{PLAN} replaces $\btheta^*$ with the estimated $\btheta$ and plus an additional exploration bonus term $\beta\|\bpsi_{V_{h+1}}(\cdot, \cdot)\|_{\bSigma^{-1}}$ to \eqref{eq:help1}, as in Line \ref{ln:ucb} of Algorithm~\ref{alg:plan}. Then \texttt{PLAN} takes the greedy policy of the calculated optimistic $Q_h$ and proceeds to the previous step. Finally, the algorithm returns policy $\pi$ in Line~\ref{ln:policy} as well as the estimated value functions $\{V_h\}_h$.



\begin{algorithm}[tb]
\caption{\algname Planning Module (\texttt{PLAN})}
\label{alg:plan}
\begin{algorithmic}[1]
\REQUIRE Estimated parameter and covariance $\btheta, \bSigma$, reward $\{r_h\}_{h=1}^H$, parameter $\beta$.
\STATE For consistency, set $Q_{H+1}(\cdot, \cdot) \leftarrow V_{H+1}(\cdot) \leftarrow 0$
\FOR{$h = H, H-1, \cdots, 1$}
\STATE Compute Q function as
\scalebox{0.95}{$Q_h(\cdot, \cdot) \leftarrow \big[r_h(\cdot, \cdot) + \big\la\bpsi_{V_{h+1}}(\cdot, \cdot), \btheta\big\ra + \beta\|\bpsi_{V_{h+1}}(\cdot, \cdot)\|_{\bSigma^{-1}}\big]_{(0, H)}$}\label{ln:ucb}
\STATE Compute value function $V_h(\cdot) \leftarrow \max_{a \in \cA}  Q_h(\cdot, a)$
\STATE Compute policy as $\pi_h(\cdot) \leftarrow \argmax_{a \in \cA}  Q_h(\cdot, a)$.\label{ln:policy}
\ENDFOR
\ENSURE Policy $\pi \leftarrow \{\pi_h\}_{h=1}^H$ and $\{V_h\}_{h=1}^H$
\end{algorithmic}
\end{algorithm}

\subsection{Exploration phase algorithm} 
Based on the introduced \texttt{PLAN} function, we propose the \algname algorithm in Algorithm~\ref{alg:explore}. In general, \algname guides the agent to explore the unknown state space without the information of the reward functions. In detail, for the $k$-th episode, \algname first defines the \emph{exploration driven reward function} as follows:
\begin{align}
     r_h^k(s, a) = \min\bigg\{1, \frac{2\beta}{H}\sqrt{\max_{f\in \cS \mapsto [0, H - h]}\|\bpsi_f(s, a)\|_{\bSigma^{-1}_{1, k}}}\bigg\},\label{eq:reward}
\end{align}
where $\bSigma_{1,k}$ is the ``covariance matrix'' of the feature mapping. Intuitively speaking, $r_h^k(s, a)$ represents the maximum possible uncertainty level of the state-action pair $(s,a)$ caused by the randomness of the MDP transition function, which is \emph{independent} of the true reward functions. Therefore, in order to obtain a good estimation of the optimal policy for any \emph{given} reward functions, it suffices to obtain the optimal policy for $r_h^k(s, a)$. Thus, after obtaining $\{r_h^k\}_h$, \algname finds the corresponding near-optimal policies $\{\pi_h^k\}_h$ using \texttt{PLAN} function, with the estimated parameter $\btheta_k$ and the ``covariance matrix'' $\bSigma_{1, k}$ as input. \algname uses $\{\pi_h^k\}_h$ as its exploration policy and observes the new episode $s_1^k, a_1^k,\dots, s_H^k, a_H^k$ induced by $\{\pi_h^k\}_h$. 

Next, \algname needs to compute the parameters $\btheta_{k+1}$ and $\bSigma_{1, k+1}$ for planning in the next episode. Similar to UCRL-VTR proposed by \citep{jia2020model, ayoub2020model}, \algname also uses a ``value-targeted regression (VTR)" estimator, which computes $\btheta_{k+1}$ as the minimizer to a ridge regression problem with the target being the past value functions. The main difference between \algname and UCRL-VTR is that, due to the lack of true reward functions, \algname can not use the estimated value functions as its regression targets. Instead, \algname defines the following \emph{pseudo value function} $u_h^k$: 
\begin{align}
    u_h^k = \argmax_{f \in \cS \mapsto [0, H - h]} \bpsi_f^\top(s_h^k, a_h^k)\bSigma^{-1}_{1, k}\bpsi_f(s_h^k, a_h^k).\label{eq:maxv}
\end{align}
Here, $u_h^k$ maximizes the ``uncertainty" caused by the transition kernel, which will help the agent to explore the state space. Now given the pseudo value functions, Algorithm~\ref{alg:explore} computes the estimated $\btheta_{k+1}$ as the minimizer to the following ridge regression problem:
\begin{align}
    \btheta_{k+1}&\leftarrow \argmin_{\btheta} \lambda\|\btheta\|_2^2 + \sum_{k'=1}^{k}\sum_{h=1}^H\Big(\big\la\btheta, \bpsi_{u_h^{k'}}(s_h^{k'}, a_h^{k'}) \big\ra - u_h^{k'}(s_{h+1}^{k'})\Big)^2,\label{eq:exptheta}
\end{align}
which has a closed-form solution as in Line~\ref{ln:reg-end}. It also updates the covariance matrix $\bSigma_{1, k+1}$ as in Line \ref{ln:reg-end}, by the observed feature mapping $\{\bpsi_{u_h^k}(s_h^k, a_h^k)\}_h$ in the current episode. In the end, after collecting $HK$ state-action samples, \algname calculates the policy $\{\pi_h\}$ as output based on $\btheta_{K+1}$ and $\bSigma_{1, K+1}$. 

\begin{algorithm}[!ht]
\caption{\algname (Hoeffding Bonus)}
\label{alg:explore}
\begin{algorithmic}[1]
\REQUIRE Confident parameter $\beta$, regularization parameter $\lambda$
\STATE \textbf{Phase I: Exploration Phase}
\STATE Initialize $\bSigma_{1, 1} \leftarrow \lambda\Ib, \bbb_1 \leftarrow \btheta_1 \leftarrow \zero$
\FOR{$k = 1, 2, \cdots, K$}
\STATE Compute the exploration driven reward function  $\{r_h^k(\cdot, \cdot)\}_{h=1}^H$ according to \eqref{eq:reward}\label{ln:reward-calc}
\STATE Compute exploration policy and value function as
\scalebox{0.78}{$
\displaystyle
    (\{\pi_h^k\}_{h=1}^H, \{V_h^k\}_{h=1}^H) \leftarrow  \text{\texttt{PLAN}} (\btheta_k, \bSigma_{1, k}, \{r_h^k\}_{h=1}^H, \beta)
$}
\STATE Receive the initial state $s_1^k \sim \mu$
\FOR{$h = 1, 2, \cdots, H$}
\STATE Take action $a_h^k \leftarrow  \pi_h^k(s_h^k)$ and receive $s_{h+1}^k$
\STATE Calculate $u_h^k$ for $s_h^k, a_h^k$ according to~\eqref{eq:maxv}\label{ln:calc-u}
\STATE Set $\bSigma_{h + 1, k} \leftarrow \bSigma_{h, k} + \bpsi_{u_h^k}(s_h^k, a_h^k)\bpsi_{u_h^k}(s_h^k, a_h^k)^\top, \bbb_{h+1, k} \leftarrow \bbb_{h, k} +\bpsi_{u_h^k}(s_h^k, a_h^k)u_h^k(s_{h+1}^k)$
\ENDFOR
\STATE Set $\bSigma_{1, k + 1} \leftarrow \bSigma_{H + 1, k}$, $\bbb_{1, k + 1} \leftarrow \bbb_{H + 1, k}, \btheta_{k+1} \leftarrow \bSigma_{1, k+ 1}^{-1}\bbb_{1, k+1}$\label{ln:reg-end}
\ENDFOR
\STATE \textbf{Phase II: Planning Phase}
\STATE Receive target reward function $\{r_h\}_{h=1}^H$
\STATE Compute policy as $
\displaystyle (\{\pi_h\}_{h=1}^H, \{V_h\}_{h=1}^H)\leftarrow  \texttt{PLAN}(\btheta_{K + 1}, \bSigma_{1, K + 1}, \{r_h\}_{h=1}^H, \beta)
$
\ENSURE Policy $\{\pi_h\}_{h=1}^H$
\end{algorithmic}
\end{algorithm}

\begin{remark}
Here we do a comparison between our \algname and the reward-free RL algorithm in \cite{wang2020reward}. The main difference is that \citet{wang2020reward} estimates $\btheta_k$ by regression with value function $V_h^k$ being the target, while our \algname does regression with the pseudo value function $u_h^k$ being the target. That is mainly due to the different problem settings (linear MDP v.s. linear mixture MDP). 


\end{remark}

\subsection{Implementation details}
In general, solving the maximization problem \eqref{eq:maxv} is hard. 
Here, we provide a simple approximate solution to the problem~\eqref{eq:reward} and~\eqref{eq:maxv} for the finite state space case ($|\cS| < \infty$). Instead of maximizing the $\ell_2$ norm-based objective $\big\|\bSigma^{-1/2}_{1, k}\bpsi_f(s_h^k, a_h^k)\big\|_2$, we write $\bpsi_f(s, a) = \bPhi(s, a)\fb$ with $\bPhi(s, a) = ( \bphi(s, a, S_1), \cdots, \bphi(s, a, S_{|\cS|}))$ and $\fb = (f(S_1), \cdots , f(S_{|\cS|}))^\top$, relax the $\ell_2$ norm into $\ell_1$ norm since $\|\xb\|_2 \ge \|\xb_1\|_1 /\sqrt{d}$ for any $\xb \in \RR^{d}$, and maximize the following $\ell_1$ norm-based objective
\begin{align}\label{eq:L1}
    \max_{\fb} \big\|\bSigma_{1, k}^{-1/2} \bPhi(s, a) \fb\big\|_1 \  \text{ subject to } \|\fb\|_\infty \le H - h.
\end{align}
\eqref{eq:L1} can be formulated as a linear programming, which can be solved by interior method~\citep{karmarkar1984new} or simplex method~\citep{dantzig1965linear} efficiently. 
Since $\|\xb\|_1/\sqrt{d} \le \|\xb\|_2 \le \|\xb\|_1$, the performance of this approximate solution is guaranteed. 
For the case where the state space is infinite, we can use state aggregation methods such as soft state aggregation \citep{michael1995reinforcement} to reduce the infinite state space to finite state space and then apply the above approximate solution to solve it.  

\subsection{Sample complexity}
Now we provide the sample complexity for Algorithm~\ref{alg:explore}.
\begin{theorem}[Sample complexity of \algname]\label{thm:h}
For Algorithm~\ref{alg:explore}, setting parameter $\beta = H\sqrt{d\log(3(1 + KH^3B^2)/\delta)} + 1,\quad \lambda = B^{-2}$, then for any $0<\epsilon < 1$, if $K = \tilde \cO(H^5d^2\epsilon^{-2})$, we have with probability at least $1 - \delta$ that, $\EE_{s \sim \mu} [V_1^*(s; r) - V_1^\pi(s; r)] \le \epsilon$.
\end{theorem}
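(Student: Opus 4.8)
The plan is to follow the optimism-based template for reward-free exploration, adapted to the value-targeted regression estimator. Two preliminary facts drive everything. First, a concentration bound: with probability $1-\delta/3$, $\|\btheta_k-\btheta^*\|_{\bSigma_{1,k}}\le\beta$ for all $k$. This holds because, conditioned on the history, the regression target $u_h^k(s_{h+1}^k)$ has conditional mean $\la\bpsi_{u_h^k}(s_h^k,a_h^k),\btheta^*\ra$ by \eqref{eq:exp} and bounded noise (since $u_h^k\in\cS\mapsto[0,H]$), so a self-normalized martingale bound over the $HK$ scalar observations yields the stated $\beta$. Second, optimism: using this bound and Cauchy--Schwarz, $|\la\bpsi_{V_{h+1}},\btheta_{K+1}-\btheta^*\ra|\le\beta\|\bpsi_{V_{h+1}}\|_{\bSigma_{1,K+1}^{-1}}$, which is precisely the bonus in Line~\ref{ln:ucb}; a backward induction on $h$ (respecting the clipping to $[0,H]$) then shows that the value functions $\{V_h\}$ returned by \texttt{PLAN} satisfy $V_h(s;r)\ge V_h^*(s;r)$ for every reward $r$.

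Next I would convert the planning error into an exploration value. Because $V_1(\cdot;r)\ge V_1^*(\cdot;r)$ and the output policy is greedy with respect to $Q_h$, the usual value-difference recursion gives
\[
V_1^*(s;r)-V_1^{\pi}(s;r)\le V_1(s;r)-V_1^{\pi}(s;r)\le 2\beta\,\EE_{\pi}\Big[\sum_{h=1}^{H}\big\|\bpsi_{V_{h+1}}(s_h,a_h)\big\|_{\bSigma_{1,K+1}^{-1}}\Big].
\]
Since each $V_{h+1}\in[0,H]$, the design of \eqref{eq:reward} ensures $\beta\|\bpsi_{V_{h+1}}(s_h,a_h)\|_{\bSigma_{1,K+1}^{-1}}\le\tfrac{H}{2}r_h^{K+1}(s_h,a_h)$, so the right-hand side is at most $H\,V_1^{\pi}(s;r^{K+1})\le H\,V_1^*(s;r^{K+1})$. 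Thus the planning error for an \emph{arbitrary} reward is controlled by the optimal value of the exploration reward evaluated with the final covariance.

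It then remains to bound $\EE_{s\sim\mu}V_1^*(s;r^{K+1})$. Because $\bSigma_{1,k+1}-\bSigma_{1,k}$ is positive semidefinite for every $k$, the reward $r_h^{K+1}$ is pointwise the smallest, hence $V_1^*(s;r^{K+1})\le\tfrac1K\sum_{k=1}^K V_1^*(s;r^k)$. Taking $s\sim\mu$, invoking optimism $V_1^*(s_1^k;r^k)\le V_1^k(s_1^k;r^k)$ (the same argument as above applied with $\bSigma_{1,k}$), and applying the recursion of the previous paragraph to each exploration round, I bound $\sum_k V_1^k(s_1^k;r^k)$ by $C\beta\sum_{k,h}\|\bpsi_{u_h^k}(s_h^k,a_h^k)\|_{\bSigma_{1,k}^{-1}}$ — here the $\argmax$ in \eqref{eq:maxv} is essential, since it makes the features fed into $\bSigma$ exactly the maximally uncertain directions — up to Azuma--Hoeffding terms for passing between population expectations and realized trajectories. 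Cauchy--Schwarz and the elliptical potential lemma then give $\sum_{k,h}\|\bpsi_{u_h^k}\|_{\bSigma_{1,k}^{-1}}\le\sqrt{HK\cdot 2d\log(1+KH/\lambda)}$, whence $\EE_{s\sim\mu}[V_1^*(s;r)-V_1^{\pi}(s;r)]\le\tilde\cO(\beta H^{3/2}\sqrt d/\sqrt K)$. Substituting $\beta=\tilde\cO(H\sqrt d)$ and forcing this to be at most $\epsilon$ yields $K=\tilde\cO(H^5d^2\epsilon^{-2})$.

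The main obstacle I anticipate is the bookkeeping in the last step: both $r_h^k$ and the bonus use the start-of-episode covariance $\bSigma_{1,k}$, whereas the elliptical potential lemma naturally applies to the per-step updated matrices $\bSigma_{h,k}$. I would bridge this by showing that $H$ rank-one updates with $\|\bpsi\|_2\le1$ change the determinant—and hence the Mahalanobis norm—by at most a constant factor within an episode, making $\|\cdot\|_{\bSigma_{1,k}^{-1}}$ and $\|\cdot\|_{\bSigma_{h,k}^{-1}}$ comparable. Obtaining the $H^5$ (rather than $H^6$) rate also relies on the elliptical-potential sum being $\cO(d\log)$ over all $HK$ steps together with the single factor of $\sqrt{H}$ lost to Cauchy--Schwarz; carefully tracking these $H$ powers is where the delicacy lies.
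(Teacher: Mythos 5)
Your proposal follows essentially the same route as the paper's proof: optimism and a value-difference recursion from the planning module (the paper's Lemma~\ref{lm:plan}), a self-normalized concentration bound for the VTR estimator (Lemma~\ref{lm:ci-h}), conversion of the planning error into the value of the exploration-driven reward, a transfer back to the exploration-phase value functions $V_1^k(s_1^k)$ via optimism plus Azuma--Hoeffding, and an elliptical-potential bound (Lemma~\ref{lm:sum-h}). Your detour through the monotonicity $V_1^*(s;r^{K+1})\le\frac1K\sum_k V_1^*(s;r^k)$ is just a repackaging of the paper's observation that $\bSigma_{1,K+1}^{-1}\preceq\bSigma_{1,k}^{-1}$, so the decomposition is the same.

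One step would fail as literally stated: you claim that $H$ rank-one updates change $\det\bSigma$ ``by at most a constant factor within an episode,'' making $\|\cdot\|_{\bSigma_{1,k}^{-1}}$ and $\|\cdot\|_{\bSigma_{h,k}^{-1}}$ uniformly comparable. This is false in general (e.g.\ when $\bSigma_{1,k}\approx\lambda\Ib$ with small $\lambda$, a single update can blow up the determinant by a factor $1+\|\bpsi\|_2^2/\lambda$; note also $\|\bpsi_{u_h^k}\|_2$ can be as large as $H$, not $1$). The correct, and standard, repair is the one the paper uses in Lemma~\ref{lm:batch}: the determinant can double at most $\cO(d\log(1+KHL^2/\lambda))$ times over the whole run, so the comparability $\det\bSigma_{h,k}\le 2\det\bSigma_{1,k}$ fails for at most $\tilde\cO(Hd)$ pairs $(h,k)$, and those exceptional terms are bounded crudely by $H$ each, contributing only a lower-order $\tilde\cO(\beta H d)$ additive term. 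With that substitution your argument goes through and yields the claimed $K=\tilde\cO(H^5d^2\epsilon^{-2})$.
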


\begin{remark}
Theorem \ref{thm:h} shows that \algname only needs $\text{poly}(d, H, \epsilon^{-1})$ sample complexity to find an $\epsilon$-optimal policy, which suggests that model-based reward-free algorithm is sample-efficient. Thanks to linear function approximation, the sample complexity only depends on the dimension of the feature mapping $d$ and the length of the episode and does not depend on the cardinalities of the state and action spaces.
\end{remark}

\begin{corollary}\label{col:h}
Under the same conditions as in Theorem \ref{thm:h}, if solving the relaxed optimization problem in \eqref{eq:L1}, Algorithm \ref{alg:explore} has $K = \tilde \cO(H^5d^3\epsilon^{-2})$ sample complexity.
\end{corollary}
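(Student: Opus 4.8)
The plan is to re-run the analysis behind Theorem~\ref{thm:h} essentially verbatim, changing only the single place where the \emph{exact} solvability of the inner maximizations \eqref{eq:maxv} and \eqref{eq:reward} is invoked, and to track the multiplicative loss incurred by replacing them with the $\ell_1$ relaxation \eqref{eq:L1}. Fix a step $(h,k)$ and a pair $(s,a)$, write $\xb(f) := \bSigma_{1,k}^{-1/2}\bpsi_f(s,a) \in \RR^d$, let $f^\star$ be the exact maximizer of $\|\xb(f)\|_2 = \|\bpsi_f(s,a)\|_{\bSigma_{1,k}^{-1}}$ over $f:\cS\mapsto[0,H-h]$, and let $\hat f$ be the maximizer of the relaxed objective $\|\xb(f)\|_1$ returned by \eqref{eq:L1}. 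Since $f^\star$ is feasible for \eqref{eq:L1} and $\|\xb\|_2 \le \|\xb\|_1 \le \sqrt d\,\|\xb\|_2$, the first building block is the two-sided bound
\begin{align*}
\max_{f}\|\bpsi_f(s,a)\|_{\bSigma_{1,k}^{-1}} = \|\xb(f^\star)\|_2 \le \|\xb(f^\star)\|_1 \le \|\xb(\hat f)\|_1 \le \sqrt d\,\|\xb(\hat f)\|_2 \le \sqrt d\,\|\xb(f^\star)\|_2.
\end{align*}
Thus the computable $\ell_1$ value $\|\xb(\hat f)\|_1$ sandwiches the true $\ell_2$ maximum between factors $1$ and $\sqrt d$, and in particular $\|\bpsi_{\hat f}(s,a)\|_{\bSigma_{1,k}^{-1}} \ge \tfrac{1}{\sqrt d}\max_f\|\bpsi_f(s,a)\|_{\bSigma_{1,k}^{-1}}$.

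With this in hand I would revisit the two roles the inner maximization plays. First, correctness of the exploration reward: the argument of Theorem~\ref{thm:h} only needs $r_h^k(s,a)$ to dominate the per-step bonus $\tfrac{2\beta}{H}\|\bpsi_{V_{h+1}}(s,a)\|_{\bSigma_{1,k}^{-1}}$ for every value function $V_{h+1}$ arising in \texttt{PLAN}, and this is preserved because such $V_{h+1}$ is a feasible $f$ and $\|\xb(\hat f)\|_1 \ge \max_f\|\bpsi_f\|_{\bSigma_{1,k}^{-1}}$ by the display above; hence using the $\ell_1$ value in place of the exact maximum keeps the reward optimistic and every optimism/telescoping step goes through unchanged. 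Second, the potential argument: the covariance is updated with the actually-used $\bpsi_{\hat u_h^k}$, so the elliptical potential lemma still yields $\sum_{k=1}^K\sum_{h=1}^H \min\{1,\|\bpsi_{\hat u_h^k}(s_h^k,a_h^k)\|^2_{\bSigma_{1,k}^{-1}}\} = \tilde\cO(d)$ with no change. The only loss enters at the bridge between these two: the suboptimality bound accumulates the \emph{true} uncertainty $\sum_{k,h}\max_f\|\bpsi_f(s_h^k,a_h^k)\|_{\bSigma_{1,k}^{-1}}$, whereas the potential controls the uncertainty of $\hat u_h^k$. Invoking $\max_f\|\bpsi_f\|_{\bSigma_{1,k}^{-1}} \le \sqrt d\,\|\bpsi_{\hat u_h^k}\|_{\bSigma_{1,k}^{-1}}$ and then Cauchy--Schwarz gives
\begin{align*}
\sum_{k,h}\max_f\|\bpsi_f(s_h^k,a_h^k)\|_{\bSigma_{1,k}^{-1}} \le \sqrt d\sum_{k,h}\|\bpsi_{\hat u_h^k}\|_{\bSigma_{1,k}^{-1}} \le \sqrt d\,\sqrt{HK}\,\sqrt{\textstyle\sum_{k,h}\|\bpsi_{\hat u_h^k}\|^2_{\bSigma_{1,k}^{-1}}} = \tilde\cO\big(d\sqrt{HK}\big),
\end{align*}
an extra factor of $\sqrt d$ compared with the $\tilde\cO(\sqrt{dHK})$ of the exact case.

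Propagating this single $\sqrt d$ through the remainder of the proof—which concludes by setting the averaged suboptimality, of the form $\tfrac{1}{K}\cdot\tilde\cO(\beta\cdot d\sqrt{HK})$ times the same $H$-factors as in Theorem~\ref{thm:h}, to be at most $\epsilon$ and solving for $K$—squares it into an extra factor of $d$, upgrading $\tilde\cO(H^5 d^2\epsilon^{-2})$ to $\tilde\cO(H^5 d^3\epsilon^{-2})$, which is the claim. The main obstacle is not any new estimate but making precise the assertion that the relaxation affects the bound \emph{only} through this one $\sqrt d$ factor: I would verify that every other ingredient is insensitive to replacing $f^\star$ by $\hat f$—the self-normalized concentration fixing $\beta$, the boundedness and range of the regression targets $\hat u_h^k$ (now valued in $[-(H-h),H-h]$ rather than $[0,H-h]$, which only costs constants), and the optimism of the clipped $Q$-values. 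Once that robustness is confirmed, the degradation is exactly the approximation ratio of the $\ell_1$ surrogate, namely $\sqrt d$ per uncertainty term and hence $d$ in the sample complexity.
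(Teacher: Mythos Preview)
Your proposal is correct and takes essentially the same approach as the paper: both arguments rerun the proof of Theorem~\ref{thm:h} and insert a single multiplicative $\sqrt d$ coming from the norm equivalence $\|\xb\|_1/\sqrt d \le \|\xb\|_2 \le \|\xb\|_1$, which squares to the extra factor of $d$ in the sample complexity. The only cosmetic difference is bookkeeping: the paper inserts the $\sqrt d$ at the step $I_1 \le H\sqrt d\, V_1^{\pi}(s;\{r_h^k\})$ (i.e., when comparing the planning-phase bonus to the exploration reward) and then cites Lemma~\ref{lm:sum-h} unchanged, whereas you keep $I_1 \le H V_1^\pi$ and absorb the $\sqrt d$ inside the summation bound via $\max_f\|\bpsi_f\|_{\bSigma^{-1}_{1,k}}\le \sqrt d\,\|\bpsi_{\hat u_h^k}\|_{\bSigma^{-1}_{1,k}}$ before applying the elliptical potential lemma; either placement yields the same $\tilde\cO(H^5 d^3 \epsilon^{-2})$.
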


\section{Improved Algorithm with Bernstein Bonus}
Theorem \ref{thm:h} suggests that \algname in Algorithm \ref{alg:explore} enjoys an $\tilde\cO(H^5d^2\epsilon^{-2})$ sample complexity to find an $\epsilon$-optimal policy. In this section, we seek to further improve the sample complexity.  

A key observation is that for any given reward functions $\{r_h\}_h$, the error between the exploration policy $\{\pi_h\}_h$ and the optimal policy can be decomposed into two parts: \emph{the exploration error} which is the difference between $\{r_h\}_h$ and the exploration driven reward function $\{r_h^k\}_h$, and \emph{the approximation error} which is the difference between the optimal value function $V_1^*(\cdot; r_h^k)$ and our estimated value function $V_1^{\pi_h^k}(\cdot; r_h^k)$ with respect to $\{r_h^k\}_h$. For the latter one, our exploration strategy adapted from VTR is often too conservative since it does not distinguish different value functions and state-action pairs from different episodes and steps. Therefore, inspired by~\citep{zhou2020nearly}, we propose a variant of \algname called \newalgname, which adopts a Bernstein-type bonus for exploration and achieves a better sample complexity.


\begin{algorithm}[!ht]
\caption{\newalgname (Bernstein Bonus)}
\label{alg:explore-bern}
\begin{algorithmic}[1]
\REQUIRE Parameter $\beta, \hat \beta, \tilde \beta, \check \beta$, regularization parameter $\lambda$
\STATE \textbf{Stage I: Exploration Phase}
\STATE Initialize $\bSigma_{1, 1} = \hat \bSigma_{1, 1} = \tilde \bSigma_{1, 1} =  \lambda\Ib, \bbb_1 = \hat \bbb_1 = \tilde \bbb_1 = \btheta_1 = \hat \btheta_1 = \tilde \btheta_1 =  \zero$
\FOR{$k = 1, 2, \cdots, K$}
\STATE Set $\{r_h^k(\cdot, \cdot)\}_{h=1}^H$ to \eqref{eq:reward}.\label{ln:reward-calc-bern}
\STATE Compute exploration policy and value function as
\scalebox{0.78}{$
\displaystyle (\{\pi_h^k\}_{h=1}^H, \{V_h^k\}_{h=1}^H) \leftarrow  \text{\texttt{PLAN}} (\hat \btheta_k, \hat \bSigma_{1, k}, \{r_h^k\}_{h=1}^H, \hat \beta)
$}\label{ln:plan}
\STATE Receive the initial state $s_1^k \sim \mu$.
\FOR{$h = 1, 2, \cdots, H$}
\STATE Take action $a_h^k =  \pi_h^k(s_h^k)$ and receive $s_{h+1}^k$\label{ln:exe}
\STATE Calculate $u_h^k, \nu_h^k$ for $s_h^k, a_h^k$ according to~\eqref{eq:maxv} and ~\eqref{eq:estv} separately\label{ln:estv}
\STATE Set $\bSigma_{h + 1, k} \leftarrow \bSigma_{h, k} + \bpsi_{u_h^k}(s_h^k, a_h^k)\bpsi_{u_h^k}(s_h^k, a_h^k)^\top$
\STATE Set $\hat \bSigma_{h + 1, k}$, $\tilde \bSigma_{h + 1, k}$, $\hat \bbb_{h + 1, k}$, $\tilde \bbb_{h + 1, k}$ using~\eqref{eq:reg-bern}
\ENDFOR
\STATE Set $\bSigma_{1, k+1} \leftarrow \bSigma_{H + 1, k}$
\STATE Set $\hat \bSigma_{1, k+1} \leftarrow \hat \bSigma_{H + 1, k}$, $\hat \bbb_{1, k+1} \leftarrow \hat \bbb_{H + 1, k}, \hat \btheta_{k+1} \leftarrow \hat \bSigma_{1, k+1}^{-1} \hat \bbb_{1, k+1}$\label{line:start}
\STATE Set $\tilde \bSigma_{1, k+1} \leftarrow \tilde \bSigma_{H + 1, k}$, $\tilde \bbb_{1, k+1} \leftarrow \tilde \bbb_{H + 1, k}, \tilde \btheta_{k+1} \leftarrow \tilde \bSigma_{1, k+1}^{-1} \tilde \bbb_{1, k+1}$\label{line:end}
\ENDFOR
\STATE Set $\btheta_{K + 1} \leftarrow \bSigma_{1, K + 1}^{-1}\sum_{k=1}^K\sum_{h=1}^H\bpsi_{u_h^k}(s_h^k, a_h^k)u_h^k(s_{h+1}^k)$
\STATE \textbf{Stage II: Planning Phase}
\STATE Receive target reward function $\{r_h\}_{h=1}^H$
\STATE Compute exploration policy as $
(\{\pi_h\}_{h=1}^H, \{V_h\}_{h=1}^H)
\leftarrow  \text{\texttt{PLAN}} (\btheta_{K+1}, \bSigma_{1, K+1}, \{r_h\}_{h=1}^H,  \beta)
$
\ENSURE Policy $\{\pi_h\}_{h=1}^H$
\end{algorithmic}
\end{algorithm}

\subsection{Exploration phase algorithm with Bernstein bonus}

\newalgname in presented in Algorithm~\ref{alg:explore-bern}. The algorithm structure is similar to that of \algname, which can be decomposed into the exploration phase and planning phase. There are two main differences. First, in contrast to \algname which uses $\btheta_k$ for the \texttt{PLAN} function in both exploration and planning phases, \newalgname only uses 
$\btheta_{K+1}$ for the \texttt{PLAN} function in the planning phase. 
For the exploration phase, \newalgname constructs a new estimator $\hat\btheta_k$ based on $\{V_{h+1}^{k'}\}_{k'\le k-1,h}$, which are the value functions of the exploration driven rewards. 
Second, to build $\hat\btheta_k$, one way is to choose it as the solution to the ridge regression problem with contexts $\bpsi_{V_{h+1}^{k'}}(s_h^{k'}, a_h^{k'})$ and targets $V_{h+1}^{k'}(s_{h+1}^{k'})$, similar to \eqref{eq:exptheta}. However, since the targets $V_{h+1}^{k'}(s_{h+1}^{k'})$ have different variances at different steps and episodes, we are actually facing a \emph{heteroscedastic linear regression} problem. Therefore, inspired by a recent line of work \cite{zhou2020nearly, wu2021nearly} which use Bernstein inequality for vector-valued self-normalized martingale to construct a tighter confidence ball for exploration, we also incorporate the variance to build choose $\hat\btheta_k$ as the solution to the following \emph{weighted ridge regression} problem, which is an enhanced estimator for the heteroscedastic case:
\begin{align}
    \hat\btheta_{k}\leftarrow \argmin_{\btheta}\lambda\|\btheta\|_2^2+\sum_{k'=1}^{k-1}\sum_{h=1}^H\Big(\big\la\btheta, \bpsi_{V_{h+1}^{k'}}(s_h^{k'}, a_h^{k'}) \big\ra - V_{h+1}^{k'}(s_{h+1}^{k'})\Big)^2/[\sigma_h^{k'}]^2, \label{help:111}
\end{align}
where $[\sigma_h^{k'}]^2$ is the variance of $V_{h+1}^{k'}(s_{h+1}^{k'})$. The idea to use variances to improve the sample complexity is closely related to the use of ``Bernstein bonus" in reward-free RL for the tabular MDPs \citep{kaufmann2020adaptive, zhang2020nearly, menard2020fast}. Since $\sigma_h^{k'}$ is unknown, we will use $\nu_h^{k'} = [\bar \sigma_h^{k'}]^2$ as a plug-in estimator to replace $[\sigma_h^{k'}]^2$ in \eqref{help:111}. After obtaining $\hat\btheta_{k}$, \newalgname sets the $\hat\bSigma_{1,k}$ as the covariance matrix of the features $\bpsi_{V_{h+1}^k}(s_h^k, a_h^k)/\bar \sigma_h^k$, and feeds it into the \texttt{PLAN} function with the exploration-driven reward functions and the confidence radius $\hat\beta$. \newalgname takes the output $\{\pi_h^k\}_h$ as the exploration policy, and $\{V_h^k\}_h$ as the value functions to construct the estimator $\hat\btheta_{k+1}$ for next episode. In the end, when it comes to the planning phase, after receiving reward functions $\{r_h\}_h$, \newalgname takes $\btheta_{K+1}$ as the solution to the ridge regression problem with contexts $\{\bpsi_{u_h^k}(s_h^k, a_h^k)\}_{k, h}$ and targets $\{u_h^k(s_{h+1}^k)\}_{k, h}$, and the covariance matrix $\bSigma_{1, K+1}$ as input, and uses $\texttt{PLAN}$ to find the near optimal policy $\{\pi_h\}_h$ with confidence radius $\beta$. It remains to specify $\nu_h^k$ in the weighted ridge regression. On the one hand, we need $\nu_h^k$ to be an upper bound of $[\sigma_h^k]^2$. On the other hand, we require $\nu_h^k$ to have a strictly positive lower bound to let \eqref{help:111} be valid. Therefore, we construct $\nu_h^k$ as follows:
\begin{align}
    \nu_h^k &= \max\{\alpha, \bar \VV_h^k(s_h^k, a_h^k)+ E_k^h(s_h^k, a_h^k)\},\label{eq:estv}
\end{align}
where $\bar \VV_h^k$ is the estimated variance of value function $V_h^k$ and $E_h^k$ is a correction term to calibrate the estimated variance, and $\alpha>0$ is a positive constant. To compute $\bar \VV_h^k(s_h^k, a_h^k)$, considering the following fact:
\begin{align}
    [\VV V_{h+1}^k] (s, a) &= [\PP [V_{h+1}^k]^2](s, a) - [\PP V_{h+1}^k](s, a)^2 = \la \btheta^*, \bpsi_{[V_{h+1}^k]^2}(s, a) \ra - \la\btheta^*,  \bpsi_{V_{h+1}^k}(s, a)\ra^2,\notag
\end{align}
it suffices to estimate $\la \btheta^*, \bpsi_{[V_{h+1}^k]^2}(s, a) \ra$ and $\la \btheta^*, \bpsi_{V_{h+1}^k}(s, a)\ra$ separately. For the first term, $\btheta^*$ can be regarded as the unknown parameter of a regression problem w.r.t. contexts $\bpsi_{[V_{h+1}^{k'}]^2}(s_h^{k'}, a_h^{k'})$ and targets $\bpsi_{[V_{h+1}^{k'}]^2}(s_h^{k'}, a_h^{k'})$. Therefore, the first term can be estimated by $\big\la \bpsi_{[V_{h+1}^k]^2}(s, a), \tilde \btheta_k\big\ra$, where
\begin{align}
    \tilde\btheta_{k}\leftarrow \argmin_{\btheta} \lambda\|\btheta\|_2^2 +\sum_{k'=1}^{k-1}\sum_{h=1}^H\Big(\big\la\btheta, \bpsi_{[V_{h+1}^{k'}]^2}(s_h^{k'}, a_h^{k'}) \big\ra - [V_{h+1}^{k'}(s_{h+1}^{k'})]^2\Big)^2. \notag
\end{align}
In addition, the second term $\la \btheta^*,  \bpsi_{V_{h+1}^k}(s, a)\ra$ can be approximated by $\la \bpsi_{V_{h+1}^k}(s, a), \hat \btheta_k\ra$. Therefore, the final estimator $[\bar\VV V_{h+1}^k] (s, a)$ is defined as
\begin{align}
    \bar \VV_h^k(s, a) = \Big[\big\la \bpsi_{[V_{h+1}^k]^2}(s, a), \tilde \btheta_k\big\ra\Big]_{(0, H^2)} - \Big[\big\la \bpsi_{V_{h+1}^k}(s, a), \hat \btheta_k\big \ra \Big]^2_{(0, H)}.\label{eq:estvar} 
\end{align}
For the correction terms $E_h^k$, we define it as follows:
\begin{align}
    E_h^k(s, a) = \min\Big\{H^2, \tilde \beta\big\|\bpsi_{[V_{h+1}^k]^2}(s, a)\big\|_{\tilde \bSigma_{1, k}^{-1}}\Big\} + \min\Big\{H^2, 2H\check \beta \big\|\bpsi_{V_{h+1}^k}(s, a)\Big\|_{\hat \bSigma_{1, k}^{-1}}\Big\}, \notag
\end{align}
where $\tilde\bSigma_{1,k}$ is the covariance matrix of the features $\bpsi_{[V_{h+1}^{k'}]^2}(s_h^{k'}, a_h^{k'})$, $\tilde\beta$, $\check\beta$ are two confidence radius. It can be shown that, with these definitions, $\bar \VV_h^k(s, a)+E_h^k(s, a)$ is an upper bound of $[\sigma_h^k]^2$.

Finally, to enable online update, \newalgname updates its covariance matrices recursively as follows, along with sequences $\hat\bbb_h^k, \tilde\bbb_h^k$: 
\begin{align}
    \hat \bSigma_{h + 1, k} &\leftarrow \hat \bSigma_{h, k} +  \bpsi_{V_{h+1}^k}(s_h^k, a_h^k)\bpsi_{V_{h+1}^k}(s_h^k, a_h^k)^\top / \nu_h^k\notag \\
    \tilde \bSigma_{h + 1, k} &\leftarrow \tilde \bSigma_{h, k} + \bpsi_{[V_{h+1}^k]^2}(s_h^k, a_h^k)\bpsi_{[V_{h+1}^k]^2}(s_h^k, a_h^k)^\top\notag \\
    \hat \bbb_{h + 1, k} &\leftarrow \hat \bbb_{h, k} + \bpsi_{V_{h+1}^k}(s_h^k, a_h^k)V_{h+1}^k(s_{h+1}^k) / \nu_h^k\notag \\
    \tilde \bbb_{h + 1, k} &\leftarrow \tilde \bbb_{h, k} + \bpsi_{[V_{h+1}^k]^2}(s_h^k, a_h^k)[V_{h+1}^k(s_{h+1}^k)]^2,\label{eq:reg-bern}
\end{align}
where $u_h^k$ is the pseudo value function in~\eqref{eq:maxv} and $\nu_h^k$ is defined in~\eqref{eq:estv}. Then \newalgname computes $\hat\btheta_k, \tilde\btheta_k$ as in Line \ref{line:start} to Line \ref{line:end} of Algorithm \ref{alg:explore-bern}. 


\subsection{Sample complexity}
Now we present the sample complexity for Algorithm~\ref{alg:explore-bern}. 

\begin{theorem}[Sample complexity of \newalgname]\label{thm:b}
For Algorithm~\ref{alg:explore-bern}, setting $\lambda = B^{-2}$, $\alpha = H^2/d$ in \eqref{eq:estv}, and the confidence radius as
\begin{align*}
    \hat \beta &= 8\sqrt{d\log(1 + KHB^2)\log(48K^2H^2/\delta)} + 4\sqrt{d}\log(48K^2H^2/\delta) + 1\\
    \check \beta &= 8d\sqrt{\log(1 + KHB^2)\log(48K^2H^2/\delta)} + 4\sqrt{d}\log(48K^2H^2/\delta) + 1\\
    \tilde \beta &= 8H^2\sqrt{d\log(1 + KHB^2)\log(48K^2H^2/\delta)} + 4H^2\log(48K^2H^2/\delta) + 1\\
    \beta &= H\sqrt{d\log(12(1 + KH^3B^2) / \delta)} + 1,
\end{align*}
then for any $0<\epsilon<1$, if $K = \tilde \cO(H^4d(H + d)\epsilon^{-2})$, then with probability at least $1 - \delta$, we have $\EE_{s \sim \mu} [V_1^*(s; r) - V_1^\pi(s; r)] \le \epsilon$.
\end{theorem}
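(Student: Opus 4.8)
The plan is to reduce the planning guarantee for an arbitrary reward $r$ to a bound on the exploration-driven value functions generated during the exploration phase, and then to control those by an exploration-regret argument that exploits the variance-weighted (Bernstein) confidence set. The planning-phase analysis is essentially identical to that of \algname in Theorem~\ref{thm:h}, since \newalgname also plans with the unweighted estimator $\btheta_{K+1}$, the covariance $\bSigma_{1,K+1}$, and the Hoeffding radius $\beta$; the entire improvement lives in the exploration phase. Concretely, on a good event where $\|\btheta_{K+1}-\btheta^*\|_{\bSigma_{1,K+1}}\le\beta$ (a self-normalized bound using that the targets $u_h^k(s_{h+1}^k)\in[0,H]$ are bounded, predictable martingale differences), Line~\ref{ln:ucb} of \texttt{PLAN} yields optimistic values $V_h(\cdot;r)\ge V_h^*(\cdot;r)$, and telescoping the optimistic Bellman error along the returned policy $\pi$ gives
\[
\EE_{s\sim\mu}[V_1^*(s;r)-V_1^\pi(s;r)]\le 2\beta\sum_{h=1}^H\EE_{\pi}\big[\|\bpsi_{V_{h+1}}(s_h,a_h)\|_{\bSigma_{1,K+1}^{-1}}\big]\le H\,\EE_{s\sim\mu}[V_1^*(s;r^{K+1})],
\]
where the last step uses that, by \eqref{eq:maxv}--\eqref{eq:reward}, $\bpsi_{u_h^{K+1}}$ maximizes the uncertainty over the function class $\cS\mapsto[0,H-h]$ that already contains $V_{h+1}$, so that $2\beta\|\bpsi_{V_{h+1}}(s,a)\|_{\bSigma_{1,K+1}^{-1}}\le H\,r_h^{K+1}(s,a)$ after the clipping in Line~\ref{ln:ucb}.

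Next I would bound $\EE_\mu[V_1^*(s;r^{K+1})]$. Since the covariance matrices are non-decreasing in $k$, $\|\bpsi_f\|_{\bSigma_{1,K+1}^{-1}}\le\|\bpsi_f\|_{\bSigma_{1,k}^{-1}}$ for every $f$, hence $r_h^{K+1}\le r_h^k$ pointwise and $V_1^*(s;r^{K+1})\le\frac1K\sum_{k=1}^K V_1^*(s;r^k)$. On the good event the Bernstein bound $\|\hat\btheta_k-\btheta^*\|_{\hat\bSigma_{1,k}}\le\hat\beta$ makes the exploration value $V_1^k$ from Line~\ref{ln:plan} optimistic for $r^k$, so $V_1^*(s;r^k)\le V_1^k(s;r^k)$ and it suffices to prove the exploration-regret bound $\sum_{k=1}^K\EE_\mu V_1^k(s;r^k)=\tilde\cO(\sqrt{H^2d(H+d)K})$. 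Decomposing $V_1^k$ into the realized reward plus the optimistic Bellman error produces a reward sum $\sum_{k,h}r_h^k(s_h^k,a_h^k)$ and a bonus sum $2\hat\beta\sum_{k,h}\|\bpsi_{V_{h+1}^k}(s_h^k,a_h^k)\|_{\hat\bSigma_{1,k}^{-1}}$, plus Azuma martingale terms; a short computation shows the bonus sum dominates for $H\ge d$.

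The crux is the bonus sum, and this is where the variance weighting and the law of total variance pay off. Writing $\|\bpsi_{V_{h+1}^k}\|_{\hat\bSigma_{1,k}^{-1}}=\sqrt{\nu_h^k}\,\|\bpsi_{V_{h+1}^k}/\sqrt{\nu_h^k}\|_{\hat\bSigma_{1,k}^{-1}}$, Cauchy--Schwarz together with the elliptical-potential (log-determinant) lemma gives $\sum_{k,h}\|\bpsi_{V_{h+1}^k}\|_{\hat\bSigma_{1,k}^{-1}}\lesssim\sqrt{d\log(\cdot)}\,\sqrt{\sum_{k,h}\nu_h^k}$, so with $\hat\beta=\tilde\cO(\sqrt d)$ the bonus sum is $\tilde\cO(d\sqrt{\sum_{k,h}\nu_h^k})$. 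I would then bound $\sum_{k,h}\nu_h^k$ using $\nu_h^k=\max\{\alpha,\bar\VV_h^k+E_h^k\}$ with $\alpha=H^2/d$: the floor contributes $KH\alpha=KH^3/d$, the correction $\sum_{k,h}E_h^k$ is lower order, and $\sum_{k,h}\bar\VV_h^k\approx\sum_{k,h}[\VV V_{h+1}^k](s_h^k,a_h^k)\lesssim KH^2$ by the law of total variance (the one-step variances along each length-$H$ episode, whose return lies in $[0,H]$, sum to $\cO(H^2)$). Thus $\sum_{k,h}\nu_h^k\lesssim KH^2(H+d)/d$ and the bonus sum is $\tilde\cO(\sqrt{H^2d(H+d)K})$; dividing by $K$ and multiplying by $H$ yields $K=\tilde\cO(H^4d(H+d)\epsilon^{-2})$.

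The hard part is justifying the two ingredients just used. First, the Bernstein radius $\hat\beta=\tilde\cO(\sqrt d)$ is valid only if the weights satisfy $\nu_h^k\ge[\VV V_{h+1}^k](s_h^k,a_h^k)$; this is the purpose of the correction $E_h^k$ and the estimator \eqref{eq:estvar}, and it is proved by a bootstrapping argument in which a crude $\tilde\cO(d)$ radius $\check\beta$ and the variance-regression radius $\tilde\beta=\tilde\cO(H^2\sqrt d)$ (from $[V_{h+1}^k]^2\in[0,H^2]$) certify that $\bar\VV_h^k+E_h^k$ upper-bounds the true conditional variance, which in turn licenses the tighter radius $\hat\beta$. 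Second, replacing the empirical variance sum by its on-policy expectation so that the law of total variance applies couples the variance sum to the bonus sum (the gap between $V_{h+1}^k$ and $V_{h+1}^{\pi^k}$ is itself controlled by the bonus), and closing this recursion---together with handling the within-episode mismatch between $\bSigma_{1,k}$ and the running covariance---is the most delicate bookkeeping. I would conclude with a union bound over all $k\le K$ and $h\le H$ so that every confidence event holds simultaneously with probability at least $1-\delta$, which is exactly what produces the $\log(K^2H^2/\delta)$ factors appearing in the stated radii.
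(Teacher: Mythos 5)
Your proposal follows essentially the same route as the paper's proof: the planning-phase reduction via optimism and covariance monotonicity to the exploration-driven value functions mirrors the proof of Theorem~\ref{thm:h}, and your exploration-regret argument---Cauchy--Schwarz against the variance weights, the law of total variance applied to $V^{\pi^k}$ with the $V^k$-vs-$V^{\pi^k}$ gap folded back into the bonus sum and the resulting quadratic recursion solved, plus the bootstrapped confidence radii $\check\beta,\tilde\beta\Rightarrow\hat\beta$---is exactly the content of Lemmas~\ref{lm:ci-b}, \ref{lm:sum-b}, \ref{lm:ieq1}, \ref{lm:ieq2}, and \ref{lm:totalvar}. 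The only cosmetic difference is that you pass through $V_1^*(\cdot;r^{K+1})$ rather than $V_1^{\pi}(\cdot;r^k)$ in the planning-to-exploration reduction, which changes nothing substantive.
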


\begin{remark}
Theorem \ref{thm:b} suggests that when $d \ge H$, the sample complexity of \newalgname is $\tilde \cO(H^4d^2\epsilon^{-2})$, which improves the sample complexity of \algname by a factor of $H$. On the other hand, when $H \ge d$, the sample complexity of \newalgname reduces to $\tilde \cO(H^5d\epsilon^{-2})$, which is better than that of \algname by a factor of $d$. At a high-level, the sample complexity improvement is attributed to the Bernstein-type bonus.
\end{remark}
\begin{corollary}\label{col:b}
Under the same conditions as in Theorem \ref{thm:b}, if solving the relaxed optimization problem in \eqref{eq:L1}, 
Algorithm \ref{alg:explore-bern} has $K = \tilde \cO(H^5d^3\epsilon^{-2})$ sample complexity.
\end{corollary}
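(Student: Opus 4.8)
The plan is to derive Corollary~\ref{col:b} directly from the proof of Theorem~\ref{thm:b}, since the only modification introduced by solving the relaxed program~\eqref{eq:L1} is in how the inner maximization $\max_{f}\|\bpsi_f(s,a)\|_{\bSigma_{1,k}^{-1}}$ appearing both in the exploration reward~\eqref{eq:reward} and in the pseudo value function~\eqref{eq:maxv} is computed. First I would establish the approximation guarantee for the relaxed solution: writing $\bpsi_f(s,a)=\bPhi(s,a)\fb$ and using $\|\xb\|_1/\sqrt d\le\|\xb\|_2\le\|\xb\|_1$, the maximizer $\hat\fb$ of~\eqref{eq:L1} satisfies
\begin{align}
    \frac{1}{\sqrt d}\,\max_{f}\big\|\bpsi_f(s,a)\big\|_{\bSigma_{1,k}^{-1}} \;\le\; \big\|\bpsi_{\hat f}(s,a)\big\|_{\bSigma_{1,k}^{-1}} \;\le\; \max_{f}\big\|\bpsi_f(s,a)\big\|_{\bSigma_{1,k}^{-1}},\notag
\end{align}
so the relaxed objective recovers the exact one up to a multiplicative factor of $\sqrt d$.

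Next I would track where the exactness of the maximizer is actually used in the analysis of Theorem~\ref{thm:b}. There are two such places: (i) the exploration reward must be an optimistic upper bound on the per-step uncertainty $\|\bpsi_V(s,a)\|_{\bSigma_{1,K+1}^{-1}}$ of every admissible value function $V$, which is what lets us charge the planning-phase suboptimality $V_1^*(\cdot;r)-V_1^\pi(\cdot;r)$ to the accumulated exploration reward; and (ii) the pseudo value function $u_h^k$ that drives the covariance matrix $\bSigma_{1,K+1}$ in the directions of largest uncertainty. Because the relaxed maximizer can fall short of the true maximum by a factor $\sqrt d$, to keep the reward a valid upper bound I would inflate it to a surrogate $\tilde r_h^k$ satisfying $r_h^k\le\tilde r_h^k\le\sqrt d\, r_h^k$. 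With this substitution the argument of Theorem~\ref{thm:b} goes through essentially verbatim, the only change being that every uncertainty estimate, and hence the final error bound $\EE_{s\sim\mu}[V_1^*(s;r)-V_1^\pi(s;r)]$, carries an extra factor of $\sqrt d$.

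Finally I would convert the degraded error bound into a sample complexity. Since the error bound scales like the square root of (sample complexity)$^{-1}$, an extra multiplicative factor of $\sqrt d$ in the error inflates the required number of episodes by a factor of $d$. Applying this to the bound of Theorem~\ref{thm:b} gives $\tilde\cO\big(H^4 d^2(H+d)\epsilon^{-2}\big)=\tilde\cO\big(H^5d^2\epsilon^{-2}+H^4d^3\epsilon^{-2}\big)$, which is dominated by $\tilde\cO(H^5 d^3\epsilon^{-2})$ for $H,d\ge 1$, yielding the claim. I would note that this coincides with the relaxed Hoeffding bound of Corollary~\ref{col:h}: the $\sqrt d$ relaxation error is large enough to wash out the variance-based improvement of \newalgname.

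I expect the main obstacle to be verifying that this single $\sqrt d$ factor genuinely suffices within the Bernstein (variance-aware) machinery rather than compounding. Specifically, the relaxation perturbs the value functions $V_{h+1}^k$ that serve as regression targets for $\hat\btheta_k$, which in turn feed the variance estimator $\bar\VV_h^k$ in~\eqref{eq:estvar} and the correction term $E_h^k$; I would need to check that inflating the reward does not also inflate the variance proxies $\nu_h^k$ in a way that breaks the total-variance argument, and that the weighted ridge regression confidence radii $\hat\beta,\tilde\beta,\check\beta$ remain valid under the approximate reward. The delicate point is confirming that these interacting effects collapse to a single factor $d$ in the sample complexity, so that the final bound is no worse than $\tilde\cO(H^5 d^3\epsilon^{-2})$.
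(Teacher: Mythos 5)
Your proposal is correct and takes essentially the same route as the paper: the paper's own proof of Corollary~\ref{col:b} is a two-line appeal to the argument of Corollary~\ref{col:h} --- the $\ell_1$ relaxation loses a multiplicative $\sqrt{d}$ in the uncertainty/bonus bounds, hence a factor $d$ in the sample complexity, giving $\tilde\cO(H^4d^2(H+d)\epsilon^{-2}) \le \tilde\cO(H^5d^3\epsilon^{-2})$, exactly as you compute. The compounding issue you flag at the end is not discussed in the paper at all, but it does resolve as you hope, since the Bernstein confidence-radius and variance lemmas only require the value functions $V_{h+1}^k$ to be measurable and bounded by $H$, not that the exploration reward be built from the exact maximizer, so the relaxation enters the analysis only through the two places you identified.
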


\section{Lower Bound of Sample Complexity}

In this section, we will provide a lower bound of sample complexity for reward-free RL under linear mixture MDP setting. 


\setlength{\intextsep}{0mm}
\begin{wrapfigure}[16]{r}{0.5\textwidth}
  \centering
\begin{tikzpicture}[node distance=1.5cm,>=stealth',bend angle=45,auto]
\tikzstyle{place}=[circle,thick,draw=blue!75,fill=blue!20,minimum size=10mm]
\begin{scope}
\node [place, label] (c0){$S_1$};
\coordinate [above right of = c0, label = center:{$\vdots$}] (c1) {};
\coordinate [below right of = c0, label = center:{$\vdots$}] (d00) {};
\node [place] (d01) [right of=d00]{$S_{2, 2}$};
\node [place] (c21) [right of=c1]{$S_{2, 1}$};
\path[->] (c0)
    edge [in=150,out=90, densely dotted] node[above]{\footnotesize{$\frac12 + c\la \ab_1, \tilde \btheta_i\ra$}} (c21)
    edge [in=-150,out=30] node[]{} (c21)
    edge [in=-150,out=-90, densely dotted] node[below, align = center]{\footnotesize{$\frac12 - c\la \ab_1, \tilde \btheta_i\ra$}} (d01)
    edge [in=150,out=-30] node[]{} (d01)
    ;
\path[->] (d01)
    edge [in=30,out=-30, min distance = 6cm, loop] node[left] {$1$} ()
    ;
\path[->] (c21)
    edge [in=30,out=-30, min distance = 6cm, loop] node[left] {$1$} ()
    ;
\end{scope}
\end{tikzpicture}
\caption{
The transition kernel $\PP$ of the 
class of hard-to-learn linear mixture MDPs. 
The kernel $\PP$ is parameterized by $\btheta_i = (\sqrt{2}, \alpha \tilde \btheta_i^\top / \sqrt{d})^\top$ for some small $\alpha$. $c = \alpha / (\sqrt{2}d)$. The learner knows the MDP structure, but does not know the parameter $\btheta_i$ (or $\tilde \btheta_i \in \cM$).
}
\label{fig:hardmdp}
\end{wrapfigure}
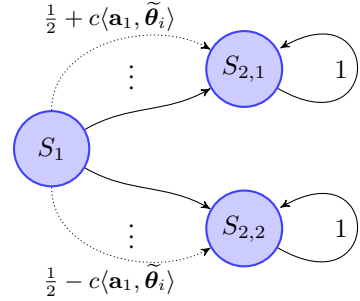
The proof is by construction. Given $d \ge 2$, we first define a binary vector set $\cM = \{\xb | \xb \in \RR^{d - 1}, [\xb]_i \in \{-1, 1\}\}$. 
We index each vector in $\cM$ as $\xb_1, \xb_2, \cdots, \xb_{|\cM|}$. Equipped with the set $\cM$, we construct a class of MDPs. As shown in Figure~\ref{fig:hardmdp}, there are in total three states $S_1, S_{2, 1}, S_{2, 2}$ and $|\cA| = |\cM|$ actions $a_1, a_2, \cdots a_{|\cA|}$. 
We define the feature mapping $\bphi(s' | s, a_i) \in \RR^d$ as follows:
\begin{align*}
    \bphi(S_{2, 1} | S_1, a_j) &= \begin{pmatrix} \frac{\sqrt{2}}4 & \frac{\ab_j^\top}{\sqrt{2d}} \end{pmatrix}^\top,\\
    \bphi(S_{2, 2} | S_1, a_j) &= \begin{pmatrix} \frac{\sqrt{2}}4 & -\frac{\ab_j^\top}{\sqrt{2d}}\end{pmatrix}^\top,\notag
\end{align*}
$\bphi(S_{2, j} | S_{2, j}, a_i) = \begin{pmatrix} 1/\sqrt{2} & \zero^\top\end{pmatrix}^\top$ for $j = 1, 2$, and $\bphi(s' | s, a) = \zero$ for all the remaining cases. 
Furthermore, we define a $d$-dimensional parameter set $\bTheta \subseteq \RR^{d+1}$ by $\bTheta = \big\{\btheta_i | \btheta_i = ( \sqrt{2}, \alpha\tilde \btheta_i^\top / \sqrt{d})^\top\big\}$ where $\tilde \btheta_i = \xb_i \in \cM$ and $\alpha$ is a small absolute constant. Therefore, for each parameter $\btheta_i$, we define the transition probability of the linear mixture MDP as $\PP(\cdot|\cdot,\cdot)=\la \bphi(\cdot|\cdot, \cdot), \btheta_i\ra$. Specifically, the transition between $S_1$ and $\{S_{2,1}, S_{2,2}\}$ is represented as
\begin{align*}
    \PP_{\btheta_i}(S_{2,1} | S_1, a_j) = \frac12 + \frac{\alpha}{\sqrt{2}d}\la\tilde \btheta_i, \ab_j\ra,\quad\PP_{\btheta_i}(S_{2,2} | S_1, a_j) = \frac12 - \frac{\alpha}{\sqrt{2}d}\la\tilde \btheta_i, \ab_j\ra.
\end{align*}
Meanwhile, we have $S_{2, 1}$ and $S_{2, 2}$ are both absorbing states. With the constructed hard-to-learn MDP class, we can prove the lower bound of sample complexity as follows:
\begin{theorem}\label{thm:mainL}
Given dimension $d \ge 50$ and $H \ge 2$, set $\epsilon \le (H - 1)/(2\sqrt{2})$ and $\delta \in (0, 1/2)$, then there exists a class of linear mixture MDPs, such that any reward-free RL algorithm that $(\epsilon, \delta)$-learns the problem $(\cP, \cR)$ where $\cR = \{\{r_h\}_{h=1}^H, 0 \le r_h \le 1\}$, needs to collect at least $K = C(1 - \delta)dH^2\epsilon^{-2}$ episodes during exploration, where $C$ is an absolute constant.
\end{theorem}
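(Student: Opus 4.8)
The plan is to convert the $\epsilon$-optimality requirement into a multiway hypothesis test over the parameter family $\bTheta$ and then invoke Fano's inequality. First I would fix the admissible adversarial reward $r_h(s,a)=\mathbf 1\{s=S_{2,1}\}$ for all $h$. Since $S_{2,1}$ and $S_{2,2}$ are absorbing and the only stochastic transition is the first one out of $S_1$, the value of any policy under this reward is determined solely by the action $\ab_\pi$ that $\pi$ plays at $S_1$: reaching $S_{2,1}$ at step $2$ earns reward $1$ at each of the remaining $H-1$ steps, so $V_1^\pi(S_1;r)=(H-1)\PP_{\btheta_i}(S_{2,1}\mid S_1,\ab_\pi)=(H-1)\big(\frac12+\frac{\alpha}{\sqrt2 d}\la\tilde\btheta_i,\ab_\pi\ra\big)$. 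The optimal action is $\ab_\pi=\tilde\btheta_i$ (both live in $\{-1,1\}^{d-1}$), hence the suboptimality gap equals $\frac{\sqrt2(H-1)\alpha}{d}\,\rho(\ab_\pi,\tilde\btheta_i)$, where $\rho$ denotes Hamming distance. Therefore an $\epsilon$-optimal policy must return an action within Hamming distance $\frac{\epsilon d}{\sqrt2(H-1)\alpha}$ of the true $\tilde\btheta_i$.

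Second I would calibrate $\alpha=\Theta(\epsilon/H)$ so that this Hamming radius is a small constant fraction of $d$, say $d/8$; the hypothesis $\epsilon\le(H-1)/(2\sqrt2)$ is precisely what keeps $\alpha$ small enough that every transition probability stays in $[0,1]$ and $\|\btheta_i\|_2\le B$. I would then pass to a Gilbert--Varshamov packing $\Pi\subseteq\cM$ with $|\Pi|\ge 2^{cd}$ and pairwise Hamming distance at least $d/4$ (the condition $d\ge 50$ guarantees $\log|\Pi|=\Omega(d)$ with a usable constant). Because the separation $d/4$ exceeds twice the $\epsilon$-optimal radius $d/8$, no single action can lie inside the $\epsilon$-optimal ball of two distinct packing elements; consequently a successful $(\epsilon,\delta)$ algorithm, run under the reward above, produces an index estimator $\hat\imath$ that is correct with probability at least $1-\delta$ whenever the true parameter lies in $\Pi$.

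Third, I would lower bound the testing error through Fano. The data are $K$ episodes, but only the first transition of each episode is informative, and it is a single Bernoulli draw whose success probability differs between two parameters $\tilde\btheta_i,\tilde\btheta_j$ by at most $\frac{\alpha}{\sqrt2 d}\,\big|\la\tilde\btheta_i-\tilde\btheta_j,\ab\ra\big|=\cO(\alpha)$. Using the divergence decomposition for the trajectory law of an adaptively chosen policy, the KL divergence between the two induced data distributions telescopes over episodes and is bounded by $\cO(K\alpha^2)$, uniformly over the (adaptive, shared) exploration rule. Substituting $\max_{i,j}\mathrm{KL}=\cO(K\alpha^2)$ and $\log|\Pi|=\Omega(d)$ into Fano's inequality yields an error of at least $1-\frac{\cO(K\alpha^2)+\log 2}{\Omega(d)}$; forcing this below $\delta$ demands $K\alpha^2=\Omega\big((1-\delta)d\big)$, and plugging in $\alpha=\Theta(\epsilon/H)$ gives the claimed $K=\Omega\big((1-\delta)dH^2\epsilon^{-2}\big)$.

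The main obstacle is the rigorous treatment of the adaptive, reward-free exploration inside the information bound: the exploration policy is selected online and is common to all hypotheses, so the $K$ Bernoulli observations cannot be treated as i.i.d.\ under a fixed action. The crux is to verify that each episode's KL contribution is uniformly $\cO(\alpha^2)$ no matter which (possibly worst-case aligned) action the algorithm plays, so that the chain-rule sum remains $\cO(K\alpha^2)$ independently of the exploration strategy. A secondary technical point is the clean reduction of the $(\epsilon,\delta)$-PAC guarantee to a testing statement: this is exactly where the packing geometry (separation $d/4$ versus optimality radius $d/8$) is needed, and where the $(1-\delta)$ factor in the final bound arises.
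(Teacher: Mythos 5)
Your proposal is correct and follows essentially the same route as the paper's proof: a packing of $\{-1,1\}^{d-1}$ with $\Omega(d)$ log-cardinality (the paper's Lemma~\ref{lm:packing}), a reduction from $(\epsilon,\delta)$-learning to identification of the packing element via a reward supported on one absorbing state with gap $\asymp (H-1)\alpha$ (Lemma~\ref{lm:free}), a chain-rule decomposition of the trajectory KL showing each episode contributes $\cO(\alpha^2)$ uniformly over the adaptive exploration policy, and Fano's inequality with the calibration $\alpha \asymp \epsilon/H$ (Lemma~\ref{lm:class}). The only cosmetic differences are that you phrase the separation in Hamming distance rather than inner products and use pairwise KLs where the paper uses a mixture reference measure in Fano; neither changes the argument.
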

\begin{remark}\label{remark}
When $d\le H$, the sample complexity of \newalgname is $\tilde \cO(H^5d\epsilon^{-2})$, which matches the lower bound in terms of both $\epsilon$ and $d$, ignoring the logarithmic terms. When $d > H$, the sample complexity of \newalgname is $\tilde \cO(H^4d^2\epsilon^{-2})$, which matches the lower bound only in terms of $\epsilon$.  
The factor of $d$ gap between the upper and lower bounds is due to the fact that our upper bound holds for the arbitrary number of actions. Such a gap also appears in best-arm identification in the linear bandits problem (See Eq. (3) in \citet{tao2018best} with $N = \cO(2^d)$). There is also a factor of $H^2$ gap between the upper and lower bounds, and we leave it as future work to remove this gap.
\end{remark}
\section{Conclusion}

We studied model-based reward-free exploration for learning the linear mixture MDPs. We proposed two algorithms, \algname, and \newalgname, which are guaranteed to have polynomial sample complexities in exploration to find a near-optimal policy in the planning phase for any given reward function.
To our knowledge, these are the first algorithms and theoretical guarantees for model-based reward-free RL with function approximation. We also give a sample complexity lower bound for any reward-free algorithm to learn linear mixture MDPs. The sample complexity of our algorithm \newalgname matches the lower bound in terms of the dependence on accuracy $\epsilon$ and feature dimension $d$ when $H \ge d$.

\section*{Acknowledgments and Disclosure of Funding}
We thank the anonymous reviewers for their helpful comments. 
WZ, DZ and QG are partially supported by the National Science Foundation CAREER Award 1906169, IIS-1904183 and AWS Machine Learning Research Award. The views and conclusions contained in this paper are those of the authors and should not be interpreted as representing any funding agencies.

\bibliographystyle{ims}
\bibliography{refs.bib}


\section*{Checklist}

\begin{enumerate}

\item For all authors...
\begin{enumerate}
  \item Do the main claims made in the abstract and introduction accurately reflect the paper's contributions and scope?
    \answerYes{}
  \item Did you describe the limitations of your work?
    \answerYes{See Remark \ref{remark} addressing the gap between the lower bound and the upper bound}
  \item Did you discuss any potential negative societal impacts of your work?
    \answerNA{We are focusing on the theoretical analysis for Reinforcement Learning, no negative social impact can be found at this point}
  \item Have you read the ethics review guidelines and ensured that your paper conforms to them?
    \answerYes{}
\end{enumerate}

\item If you are including theoretical results...
\begin{enumerate}
  \item Did you state the full set of assumptions of all theoretical results?
    \answerYes{}
	\item Did you include complete proofs of all theoretical results?
    \answerYes{}
\end{enumerate}

\item If you ran experiments...
\begin{enumerate}
  \item Did you include the code, data, and instructions needed to reproduce the main experimental results (either in the supplemental material or as a URL)?
    \answerNA{}
  \item Did you specify all the training details (e.g., data splits, hyperparameters, how they were chosen)?
    \answerNA{}
	\item Did you report error bars (e.g., with respect to the random seed after running experiments multiple times)?
    \answerNA{}
	\item Did you include the total amount of compute and the type of resources used (e.g., type of GPUs, internal cluster, or cloud provider)?
    \answerNA{}
\end{enumerate}

\item If you are using existing assets (e.g., code, data, models) or curating/releasing new assets...
\begin{enumerate}
  \item If your work uses existing assets, did you cite the creators?
    \answerNA{}
  \item Did you mention the license of the assets?
    \answerNA{}
  \item Did you include any new assets either in the supplemental material or as a URL?
    \answerNA{}
  \item Did you discuss whether and how consent was obtained from people whose data you're using/curating?
    \answerNA{}
  \item Did you discuss whether the data you are using/curating contains personally identifiable information or offensive content?
    \answerNA{}
\end{enumerate}

\item If you used crowdsourcing or conducted research with human subjects...
\begin{enumerate}
  \item Did you include the full text of instructions given to participants and screenshots, if applicable?
    \answerNA{}
  \item Did you describe any potential participant risks, with links to Institutional Review Board (IRB) approvals, if applicable?
    \answerNA{}
  \item Did you include the estimated hourly wage paid to participants and the total amount spent on participant compensation?
    \answerNA{}
\end{enumerate}

\end{enumerate}

\newpage
\appendix

\section{Proofs of Upper Bounds}\label{sec:proofU}

In this section, we provide the proofs of sample complexity upper bounds.

\subsection{Proof of Theorem~\ref{thm:h}}
We will first introduce a lemma to show that for the planning module Algorithm~\ref{alg:plan}, if it is guaranteed that the estimation $\btheta$ is close to the true parameter $\btheta^*$, then the estimated value function is optimistic. Also the gap between the optimal value function and the value function of the output policy $\{\pi_h\}_{h=1}^H$ could be controlled by the summation of UCB bonus term. 

\begin{lemma}\label{lm:plan}
Let $\btheta, \bSigma, \beta$ be as defined in Algorithm~\ref{alg:plan}. Suppose there exists some event $\bxi$ such that $\|\btheta^* - \btheta\|_{\bSigma} \le \beta$ on this event. Then on this event, for all $s \in \cS$, $V_1(s) \ge V_1^*(s; r)$, where $V_1$ is the output value function for Algorithm~\ref{alg:plan}. We also have that
\begin{align*}
    V_1(s) - V_1^\pi(s) \le \EE\bigg[\sum_{h=1}^H \min\{H, 2\beta \|\bpsi_{V_{h+1}}(s_h, \pi_h(s_h))\|_{\bSigma^{-1}}\}\Big| s, \pi\bigg],
\end{align*}
where the policy $\pi = \{\pi_h\}_{h=1}^H$ is generated by the planning module Algorithm~\ref{alg:plan} and $V_h$ is the value function calculated on Line~\ref{ln:policy} in Algorithm~\ref{alg:plan}.
\end{lemma}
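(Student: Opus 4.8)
The plan is to establish both claims by backward induction over the horizon $h = H+1, H, \dots, 1$, with the event $\bxi$ entering only through a single Cauchy--Schwarz estimate that controls the estimation error. The workhorse inequality I would isolate first is that, on $\bxi$, for every bounded $V:\cS \to [0,H]$ and every $(s,a)$,
\[
\big|\la \bpsi_V(s,a), \btheta - \btheta^*\ra\big| \le \|\bpsi_V(s,a)\|_{\bSigma^{-1}}\,\|\btheta - \btheta^*\|_{\bSigma} \le \beta\,\|\bpsi_V(s,a)\|_{\bSigma^{-1}},
\]
which follows by writing $\la x,y\ra = \la \bSigma^{-1/2}x, \bSigma^{1/2}y\ra$ and applying Cauchy--Schwarz. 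I would also use repeatedly that $[\PP V](s,a) = \la \bpsi_V(s,a), \btheta^*\ra$ (equation \eqref{eq:exp}), that $V \mapsto [\PP V]$ is monotone (since $\PP$ is a genuine probability kernel even though the $\phi_i$ are signed), and that the clipping $[\cdot]_{(0,H)}$ is order-preserving.

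For optimism I would induct on $V_h(s) \ge V_h^*(s; r)$, the base case $V_{H+1} = V_{H+1}^* = 0$ being immediate. For the step it suffices to show $Q_h(s,a) \ge Q_h^*(s,a;r)$ for every $(s,a)$. Starting from $Q_h^*(s,a;r) = r_h(s,a) + [\PP V_{h+1}^*](s,a)$, monotonicity of $\PP$ and the inductive hypothesis give $[\PP V_{h+1}^*](s,a) \le [\PP V_{h+1}](s,a) = \la \bpsi_{V_{h+1}}(s,a), \btheta^*\ra$, and the workhorse inequality bounds this by $\la \bpsi_{V_{h+1}}(s,a), \btheta\ra + \beta\|\bpsi_{V_{h+1}}(s,a)\|_{\bSigma^{-1}}$. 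Since $Q_h^*(s,a;r) \in [0,H]$ equals its own clipping, monotonicity of $[\cdot]_{(0,H)}$ yields $Q_h^*(s,a;r) \le Q_h(s,a)$, and taking the maximum over $a \in \cA$ closes the induction.

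For the gap bound I would set $\delta_h(s) = V_h(s) - V_h^\pi(s)$ and derive a one-step recursion. Writing $a = \pi_h(s)$, so that $V_h(s) = Q_h(s,a)$ and $V_h^\pi(s) = r_h(s,a) + [\PP V_{h+1}^\pi](s,a)$, I would drop the clipping from above (either $V_h(s)=0 \le V_h^\pi(s)$, or $V_h(s)$ is at most its pre-clipping argument) and apply the workhorse inequality a second time to replace $\la \bpsi_{V_{h+1}}, \btheta\ra$ by $\la \bpsi_{V_{h+1}}, \btheta^*\ra$ at the cost of another $\beta\|\bpsi_{V_{h+1}}\|_{\bSigma^{-1}}$, giving $\delta_h(s) \le [\PP \delta_{h+1}](s,a) + 2\beta\|\bpsi_{V_{h+1}}(s,a)\|_{\bSigma^{-1}}$. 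Since optimism yields $V_{h+1} \ge V_{h+1}^* \ge V_{h+1}^\pi$, the term $[\PP \delta_{h+1}](s,a)$ is nonnegative, so combining with the trivial bound $\delta_h(s)\le H$ upgrades the bonus to $\min\{H, 2\beta\|\bpsi_{V_{h+1}}(s,a)\|_{\bSigma^{-1}}\}$. Unrolling this recursion along trajectories generated by $\pi$, using the tower property and $\delta_{H+1}\equiv 0$, produces exactly the claimed telescoped expectation.

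The main obstacle is the careful bookkeeping around the clipping operator $[\cdot]_{(0,H)}$: in the optimism step I exploit that $Q_h^*$ already lies in $[0,H]$ so clipping only helps, whereas in the gap step I must argue that clipping from above never increases $V_h(s)$ and treat the degenerate case $V_h(s)=0$ separately. The only other delicate point is tracking the factor $2$ in front of $\beta$, which appears because the confidence radius is spent twice---once for the bonus already baked into $Q_h$ and once when converting $\btheta$ back to $\btheta^*$---and ensuring nonnegativity of $[\PP\delta_{h+1}]$ so that the two upper bounds $H$ and $2\beta\|\cdot\|_{\bSigma^{-1}}$ can be merged into a single $\min$.
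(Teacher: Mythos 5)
Your proof is correct and follows essentially the same route as the paper's: backward induction with Cauchy--Schwarz against the confidence event for optimism, then a one-step recursion for the gap that is merged with the trivial bound $H$ and unrolled along the trajectory generated by $\pi$. The only difference is minor bookkeeping around the clipping operator (you treat the lower truncation at $0$ explicitly, which the paper elides), so no substantive divergence.
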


Next we will give the lemmas on how to guarantee the condition of Lemma~\ref{lm:plan} and how to utilize the result of that lemma to control the final policy error $V_1^*(s_1; r) - V_1^\pi(s_1; r)$ where the policy $\pi$ is output of the planning phase. We start with Algorithm~\ref{alg:explore}, which uses the Hoeffding bonus.

Firstly, the next lemma shows how to guarantee the condition in Lemma~\ref{lm:plan}.
\begin{lemma}[Confidence interval, Hoeffding]\label{lm:ci-h}
For Algorithm~\ref{alg:explore}, let $\lambda, \beta$ be as defined in Theorem~\ref{thm:h}, then with probability at least $1 - \delta / 3$, $\|\btheta^* - \btheta_k\|_{\bSigma_{1, k}} \le \beta$ for any $k \in [K+1]$.
\end{lemma}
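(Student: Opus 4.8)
The plan is to recognize the estimator \eqref{eq:exptheta} as a regularized value-targeted least-squares problem and to control $\|\btheta^* - \btheta_k\|_{\bSigma_{1,k}}$ via a self-normalized concentration inequality for vector-valued martingales. First I would fix, for each episode $k'$ and step $h$, the filtration $\mathcal{F}_{k',h}$ generated by all randomness up to the choice of $a_h^{k'}$ but \emph{before} the transition to $s_{h+1}^{k'}$, and define the regression noise
\[
    \eta_h^{k'} = u_h^{k'}(s_{h+1}^{k'}) - \big\la \bpsi_{u_h^{k'}}(s_h^{k'}, a_h^{k'}), \btheta^*\big\ra .
\]
The key structural fact is the linear-mixture identity \eqref{eq:exp}: since $\EE[u_h^{k'}(s_{h+1}^{k'}) \mid \mathcal{F}_{k',h}] = [\PP u_h^{k'}](s_h^{k'},a_h^{k'}) = \la \bpsi_{u_h^{k'}}(s_h^{k'},a_h^{k'}), \btheta^*\ra$, the sequence $\{\eta_h^{k'}\}$ is a martingale difference sequence adapted to $\{\mathcal{F}_{k',h}\}$.

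The point requiring the most care — and the one I expect to be the main obstacle — is verifying that $u_h^{k'}$ is \emph{predictable}, i.e.\ $\mathcal{F}_{k',h}$-measurable. Unlike standard VTR, where the target is an estimated value function, here the target is the pseudo value function from the maximization \eqref{eq:maxv}. I would argue that $\bSigma_{1,k'}$ is frozen at the start of episode $k'$ and that \eqref{eq:maxv} is a deterministic function of $(s_h^{k'}, a_h^{k'})$ and $\bSigma_{1,k'}$ alone, so $u_h^{k'}$ does not depend on $s_{h+1}^{k'}$ and the martingale structure is legitimate. Granting this, boundedness is immediate: because $u_h^{k'}$ maps into $[0, H-h] \subseteq [0,H]$, we have $|\eta_h^{k'}| \le H$, so the noise is $H$-sub-Gaussian; and since $\bpsi$ is linear in its function argument, $u_h^{k'}/H$ maps into $[0,1]$ and Definition~\ref{def:mdp} yields the feature-norm bound $\|\bpsi_{u_h^{k'}}(s_h^{k'},a_h^{k'})\|_2 \le H$.

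With these ingredients I would invoke the standard self-normalized bound for vector-valued martingales (of Abbasi-Yadkori type), which holds uniformly over $k \in [K+1]$ with probability at least $1 - \delta/3$ and gives
\[
    \|\btheta_k - \btheta^*\|_{\bSigma_{1,k}} \le H\sqrt{2\log\Big(\det(\bSigma_{1,k})^{1/2}\det(\lambda\Ib)^{-1/2}/(\delta/3)\Big)} + \sqrt{\lambda}\,\|\btheta^*\|_2 ,
\]
where uniformity in $k$ is built into the inequality, so no union bound over episodes is needed. It then remains to match the constants. Taking $\lambda = B^{-2}$ and $\|\btheta^*\|_2 \le B$ makes the second term at most $1$, producing the additive $+1$ in $\beta$. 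For the log-determinant, the trace of $\bSigma_{1,k}$ is at most $\lambda d + KH\cdot H^2$ (at most $KH$ rank-one updates, each of squared norm $\le H^2$), so AM-GM on the eigenvalues gives $\det(\bSigma_{1,k})/\det(\lambda\Ib) \le (1 + KH^3B^2/d)^d$ and hence a log term bounded by $d\log(3(1+KH^3B^2)/\delta)$. Substituting reproduces exactly $\beta = H\sqrt{d\log(3(1+KH^3B^2)/\delta)} + 1$, which completes the argument.
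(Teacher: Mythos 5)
Your proposal is correct and follows essentially the same route as the paper: both identify the noise $u_h^{k'}(s_{h+1}^{k'}) - \la\bpsi_{u_h^{k'}}(s_h^{k'},a_h^{k'}),\btheta^*\ra$ as a bounded (hence $H$-sub-Gaussian) martingale difference via the linear-mixture identity \eqref{eq:exp}, bound $\|\bpsi_{u_h^{k'}}\|_2\le H$ and $\|\btheta^*\|_2\le B$, and invoke the Abbasi-Yadkori self-normalized confidence ellipsoid uniformly over $k$ with $\lambda=B^{-2}$. Your explicit verification that $u_h^{k'}$ is predictable (being a deterministic function of $s_h^{k'},a_h^{k'}$ and the frozen $\bSigma_{1,k'}$) and your trace/AM-GM bound on the log-determinant are just unpacked versions of what the paper delegates to its filtration construction and to the packaged form of the concentration theorem.
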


Secondly, based on the lemma above, we find that the policy error during the planning phase is controlled by a summation of the UCB terms. Since from the intuition, the exploration driven reward function~\eqref{eq:reward} is the UCB term divided by $H$, the policy error during the planning phase can be converted to the value function $V_1^k$ in the exploration phase. The next lemma shows that the summation of $V_1^k$ over $K$ iterations is sub-linear to $K$, thus the policy error during the planning phase should be small.
\begin{lemma}[Summation, Hoeffding]\label{lm:sum-h}
Set the parameters of Algorithm~\ref{alg:explore} as that of Theorem~\ref{thm:h}. If the condition in Lemma~\ref{lm:ci-h} holds, then with probability at least $1 - \delta / 3$, the summation of the value function $V_1^k(s_1^k)$ during the exploration phase is controlled by 
\begin{align*}
    \sum_{k=1}^K V_1^k(s_1^k) &\le 8\beta\sqrt{HKd\log(1 + KH^3B^2 / d)} \\
    &\quad+ 8\beta H d\log(1 + KH^3B^2) + 2H\sqrt{2HK\log(1 / \delta)}.
\end{align*}
\end{lemma}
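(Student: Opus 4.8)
The quantity to bound, $\sum_{k=1}^K V_1^k(s_1^k)$, is the cumulative optimistic value of the exploration-driven rewards along the executed trajectories. The plan is to expand each $V_1^k(s_1^k)$ along its realized episode (in the spirit of Lemma~\ref{lm:plan}, but on the sampled path rather than in expectation), turn the result into a sum of feature-uncertainty terms $\|\bpsi_{u_h^k}(s_h^k,a_h^k)\|_{\bSigma_{1,k}^{-1}}$, and control that sum by an elliptical-potential (log-determinant) argument.

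First I would invoke Line~\ref{ln:ucb} of Algorithm~\ref{alg:plan}: dropping the truncation $[\cdot]_{(0,H)}$ as an upper bound and writing $a_h^k=\pi_h^k(s_h^k)$, $V_h^k(s_h^k)\le r_h^k(s_h^k,a_h^k)+\la\bpsi_{V_{h+1}^k}(s_h^k,a_h^k),\btheta_k\ra+\beta\|\bpsi_{V_{h+1}^k}(s_h^k,a_h^k)\|_{\bSigma_{1,k}^{-1}}$. On the confidence event assumed from Lemma~\ref{lm:ci-h}, Cauchy--Schwarz yields $\la\bpsi_{V_{h+1}^k},\btheta_k\ra\le\la\bpsi_{V_{h+1}^k},\btheta^*\ra+\beta\|\bpsi_{V_{h+1}^k}\|_{\bSigma_{1,k}^{-1}}$, and by \eqref{eq:exp} the first term equals $[\PP V_{h+1}^k](s_h^k,a_h^k)$. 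Setting $\epsilon_h^k:=[\PP V_{h+1}^k](s_h^k,a_h^k)-V_{h+1}^k(s_{h+1}^k)$ and telescoping over $h$ with $V_{H+1}^k\equiv 0$ gives $V_1^k(s_1^k)\le\sum_h r_h^k(s_h^k,a_h^k)+2\beta\sum_h\|\bpsi_{V_{h+1}^k}(s_h^k,a_h^k)\|_{\bSigma_{1,k}^{-1}}+\sum_h\epsilon_h^k$. Since $V_{h+1}^k$ takes values in $[0,H-h]$ it is feasible in the maximization \eqref{eq:maxv}, so each $\|\bpsi_{V_{h+1}^k}(s_h^k,a_h^k)\|_{\bSigma_{1,k}^{-1}}$ is at most $\|\bpsi_{u_h^k}(s_h^k,a_h^k)\|_{\bSigma_{1,k}^{-1}}$; likewise the reward $r_h^k$ is, by \eqref{eq:reward}, this same per-step uncertainty scaled by $1/H$. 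Crucially, $\bpsi_{u_h^k}(s_h^k,a_h^k)$ is precisely the vector appended to the covariance matrix in Algorithm~\ref{alg:explore}.

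Summing over $k$ then leaves two pieces. The noise piece $\sum_{k,h}\epsilon_h^k$ is a martingale-difference sum (each $\epsilon_h^k$ has zero conditional mean and $|\epsilon_h^k|\le H$) over $HK$ terms, so Azuma--Hoeffding bounds it by $2H\sqrt{2HK\log(1/\delta)}$ with probability $1-\delta/3$; this is the third term of the claim. The bonus piece is $\sum_{k,h}\min\{H,2\beta\|\bpsi_{u_h^k}(s_h^k,a_h^k)\|_{\bSigma_{1,k}^{-1}}\}$. For this I would apply Cauchy--Schwarz over the $HK$ indices, reducing it to $2\beta\sqrt{HK}\sqrt{\sum_{k,h}\min\{1,\|\bpsi_{u_h^k}(s_h^k,a_h^k)\|_{\bSigma_{1,k}^{-1}}^2\}}$, and then bound $\sum_{k,h}\min\{1,\|\bpsi_{u_h^k}(s_h^k,a_h^k)\|_{\bSigma_{1,k}^{-1}}^2\}$ by $\cO(d\log(1+KH^3B^2/d))$ using $\|\bpsi_{u_h^k}\|_2\le H$.

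The one genuinely delicate step is that the bonus and reward use the covariance $\bSigma_{1,k}$ frozen at the \emph{start} of episode $k$, while the elliptical-potential lemma is naturally stated for the covariance $\bSigma_{h,k}$ that is updated at every step; since $\bSigma_{h,k}-\bSigma_{1,k}$ is positive semidefinite, $\|\cdot\|_{\bSigma_{1,k}^{-1}}\ge\|\cdot\|_{\bSigma_{h,k}^{-1}}$, so the inequality points the wrong way. I would bridge the two norms by a Sherman--Morrison computation giving $\|\bpsi\|_{\bSigma_{1,k}^{-1}}^2\le 2\|\bpsi\|_{\bSigma_{h,k}^{-1}}^2$ whenever the intra-episode potential $\sum_{h'<h}\|\bpsi_{u_{h'}^k}(s_{h'}^k,a_{h'}^k)\|_{\bSigma_{h',k}^{-1}}^2\le 1/2$. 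Episodes violating this are rare---the per-step potentials sum to $\cO(d\log(1+KH^3B^2/d))$ over the whole run, so at most $\cO(d\log(1+KH^3B^2/d))$ episodes are ``bad,'' and on each I simply use $V_1^k(s_1^k)\le H$; this accounts for the lower-order term $8\beta Hd\log(1+KH^3B^2)$. On the ``good'' episodes the factor-$2$ bridge turns the frozen-covariance sum into the standard per-step elliptical potential, and combining with the Cauchy--Schwarz step above produces the leading term $8\beta\sqrt{HKd\log(1+KH^3B^2/d)}$. This time-homogeneous bookkeeping between $\bSigma_{1,k}$ and $\bSigma_{h,k}$ is where the real work lies; the rest is standard.
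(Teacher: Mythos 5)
Your proposal is correct and follows essentially the same route as the paper: optimism plus the confidence event reduce $\sum_k V_1^k(s_1^k)$ to a sum of capped bonus terms $\min\{H,2\beta\|\bpsi_{u_h^k}(s_h^k,a_h^k)\|_{\bSigma_{1,k}^{-1}}\}$ (with the reward contributing the same quantity scaled by $1/H$), a martingale term handled by Azuma--Hoeffding, and an elliptical-potential bound with a determinant-doubling correction for the batched update of $\bSigma_{1,k}$ versus $\bSigma_{h,k}$. The only cosmetic differences are that the paper applies Azuma--Hoeffding to the realized-versus-expected bonus sums coming out of Lemma~\ref{lm:plan} rather than to the telescoped prediction errors $\epsilon_h^k$, and it counts the ``bad'' indices at the level of $(h,k)$ pairs via Lemma~\ref{lm:batch} rather than at the level of episodes; both yield the same bound.
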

Equipped with these lemmas, we are about to prove Theorem~\ref{thm:h}.

\begin{proof}[Proof of Theorem~\ref{thm:h}]
In the following proof, we condition on the events in Lemma~\ref{lm:ci-h} and Lemma~\ref{lm:sum-h} which holds with probability at least $1 - 2\delta / 3$ by taking the union bound. Applying Lemma~\ref{lm:plan} to the final planning phase, we have
\begin{align}
    V_1^*(s; r) - V_1^\pi(s; r) \le V_1(s; r) - V_1^\pi(s; r)\le \underbrace{\EE\bigg[\sum_{h=1}^H \min\{H, 2\beta \|\bpsi_{V_{h+1}}(s_h, \pi_h(s_h))\|_{\bSigma^{-1}_{1, K + 1}}\}\bigg]}_{I_1},\label{eq:t1}
\end{align}
where the expectation is taken condition on initial state $s$ and policy $\pi$ generated by the planning phase. Since $\bSigma_{1, k} \preceq \bSigma_{1, K+1}$ for all $k \in [K]$, we can guarantee that $\|\bpsi_{V_{h+1}}(s_h, \pi_h(s_h))\|_{\bSigma^{-1}_{1, K + 1}} \le \|\bpsi_{V_{h+1}}(s_h, \pi_h(s_h))\|_{\bSigma^{-1}_{1, k}}$. Recall the exploration driven reward function is defined by
\begin{align}
     r_h^k(s, a) = \min\bigg\{1, \frac{2\beta}{H}\sqrt{\max_{f\in \cS \mapsto [0, H - h]}\|\bpsi_f(s, a)\|_{\bSigma^{-1}_{1, k}}}\bigg\},\eqref{eq:part2}
\end{align}
one can easily verify that $\min\{H, 2\beta\|\bpsi_{V_{h+1}}(s_h, \pi_h(s_h))\|_{\bSigma^{-1}_{1, k}}\} \le Hr_h^k(s_h, \pi_h(s_h))$. Therefore for any $k \in [K]$ episode, we can bound the term $I_1$ using the value function $V_1^\pi(s; \{r_h^k\}_{h=1}^H)$ of the output policy $\pi$ in the planning phase given the $\{r_h^k\}_{h=1}^H$ as the reward function, i.e.
\begin{align}
    I_1 \le \EE\bigg[\sum_{h=1}^H H r_h^k(s_h, \pi_h(s_h))\bigg] = HV_1^\pi(s; \{r_h^k\}_{h=1}^k). \label{eq:part2}
\end{align}

Plugging the bound of $I_1$ back into~\eqref{eq:t1} then taking the expectation over the initial state distribution $\mu$, we have for any $k \in [K]$,
\begin{align*}
    \EE_{s \sim \mu} [V_1^*(s; r) - V_1^\pi(s; r)] &\le  H \EE_{s \sim \mu} [V^\pi_1(s; \{r_h^k\}_{h=1}^k)] \notag \\
    &= H\Big(V^\pi_1(s_1^k; \{r_h^k\}_{h=1}^k) - V^\pi_1(s_1^k; \{r_h^k\}_{h=1}^k)\Big)\\
    &\quad +  H\EE_{s \sim \mu} [V^\pi_1(s; \{r_h^k\}_{h=1}^k)].
\end{align*}
Hence 
\begin{align}
     \EE_{s \sim \mu} [V_1^*(s; r) - V_1^\pi(s; r)] &\le \frac HK \sum_{k=1}^K\Big(V_1^\pi(s_1^k; \{r_h^k\}_{h=1}^k) - V_1^\pi(s_1^k; \{r_h^k\}_{h=1}^k) \notag  \\
     &\qquad+  \EE_{s \sim \mu} [V_1^\pi(s; \{r_h^k\}_{h=1}^k)]\Big).\label{eq:zz2}
\end{align}
Since $V^\pi_1(s; \{r_h^k\}_{h=1}^k) \le H$ for all $k \in [K], s \in \cS$, by Azuma-Hoeffding's inequality, with probability at least $1 - \delta / 3$, 
\begin{align}
    \sum_{k=1}^K\Big(\EE_{s \sim \mu} [V_1^\pi(s; \{r_h^k\}_{h=1}^k)] - V_1^\pi(s_1^k; \{r_h^k\}_{h=1}^k)\Big) \le H\sqrt{2K\log(3 / \delta)}.\label{eq:zz1}
\end{align}
By plugging~\eqref{eq:zz1} into~\eqref{eq:zz2}, we have 
\begin{align*}
    \EE_{s \sim \mu} [V_1^*(s; r) - V_1^\pi(s; r)] \le \frac HK \sum_{k=1}^KV_1^\pi(s_1^k; \{r_h^k\}_{h=1}^k) + H^2\sqrt{2\log(3 / \delta) / K}.
\end{align*}
Applying Lemma~\ref{lm:plan} to the exploration phase, for any $k$-th episode, $V^\pi_1(s_1^k; \{r_h^k\}_{h=1}^k) \le V^*_1(s_1^k; \{r_h^k\}_{h=1}^k) \le V_1^k(s_1^k)$, thus replacing the value function $V^\pi_1$ with the estimated value function $V_1^k$, we have 
\begin{align}
    \EE_{s \sim \mu} [V_1^*(s; r) - V_1^\pi(s; r)] \le \frac HK \sum_{k=1}^KV_1^k(s_1^k) +  H^2\sqrt{2\log(3 / \delta) / K}.\label{eq:f1}
\end{align}
Finally by Lemma~\ref{lm:sum-h} we can bound the summation over $V_1^k$, hence 
\begin{align*}
    \EE_{s \sim \mu} [V_1^*(s; r) - V_1^\pi(s; r)] &\le  H^2\sqrt{2\log(3 / \delta)/ K} + 8\beta\sqrt{H^3d\log(1 + KH^3B^2 / d) / K} \\
    &\quad+ 8\beta dH^2\log(1 + KH^3B^2) / K + 2H^2\sqrt{2H\log(1 / \delta) / K}
\end{align*}
and by taking union bound, the result holds with probability at least $1 - \delta$. Recall the setting of $\beta \sim \tilde \cO(H\sqrt{d})$ as in Theorem~\ref{thm:h}, let $K = \tilde \cO(H^5d^2\epsilon^{-2})$,
the policy error $\EE_{s \sim \mu} [V_1^*(s; r) - V_1^\pi(s; r)]$ is bounded by $\epsilon$.
\end{proof}

\subsection{Proof of Corollary \ref{col:h}}

\begin{proof}[Proof of Corollary \ref{col:h}]
Following the proof of Theorem \ref{thm:h}, since for all $\xb \in \RR^d, \|\xb\|_1 \le \|\xb\|_2 \le \sqrt{d}\|\xb\|_1$ it follows that
\begin{align}
    \|\bpsi_{V_{h+1}}(s_h, \pi_h(s_h))\|_{\bSigma_{1, K+1}^{-1}} &= \|\bSigma_{1, K+1}^{-1/2}\bpsi_{V_{h+1}}(s_h, \pi_h(s_h))\|_2 \notag \\
    &\le \sqrt{d} \|\bSigma_{1, K+1}^{-1/2}\bpsi_{V_{h+1}}(s_h, \pi_h(s_h))\|_1. \label{eq:part1}
\end{align}

We denote $\tilde {u}_h^k$ as the result using the $\ell_1$ norm as the surrogate objective function in this optimization problem \eqref{eq:L1}, i.e. 
\begin{align*}
    \tilde {u}_h^k := \argmax_{f \in \cS \mapsto [0, H - h]}\|\bSigma^{-1/2}_{1, k}\bpsi_f(s_h^k, a_h^k)\|_1,
\end{align*}
then \eqref{eq:part1} yields
\begin{align*}
    \|\bpsi_{V_{h+1}}(s_h, \pi_h(s_h))\|_{\bSigma_{1, K+1}^{-1}} &\le\sqrt{d} \|\bSigma_{1, K+1}^{-1/2}\bpsi_{V_{h+1}}(s_h, \pi_h(s_h))\|_1\\
    &\le \sqrt{d} \|\bSigma_{1, K+1}^{-1/2}\bpsi_{\tilde {u}_h^k}(s_h, \pi_h(s_h))\|_1\\
    &\le \sqrt{d} \|\bSigma_{1, K+1}^{-1/2}\bpsi_{\tilde {u}_h^k}(s_h, \pi_h(s_h))\|_2\\
    &\le \sqrt{d} \|\bSigma_{1, K+1}^{-1/2}\bpsi_{u_h^k}(s_h, \pi_h(s_h))\|_2,
\end{align*}
where the second inequality comes from $\tilde {u}_h^k$ is the solution in \eqref{eq:L1}, the third inequality comes from the fact that $\|\xb\|_1 \le \|\xb\|_2$ and the forth inequality comes from the definition that $u_h^k$. Then \eqref{eq:part2} is changed to be 
\begin{align*}
    I_1 \le H\sqrt{d} V_1^{\pi}(s, \{r_h^k\}_{h=1}^k).
\end{align*}
Noticing that comparing to the original result, there's an additional $\sqrt{d}$ factor which yields \eqref{eq:part1}
\begin{align*}
    \EE_{s \sim \mu} [V_1^*(s; r) - V_1^\pi(s; r)] \le \frac {H\sqrt{d}}K \sum_{k=1}^KV_1^k(s_1^k) +  H^2\sqrt{2d\log(3 / \delta) / K}.
\end{align*}
Then it is easy to show that using $\ell_1$ as the surrogate objective function, the sample complexity of Algorithm \ref{alg:explore} turns out to be $\tilde \cO(H^5d^3\epsilon^{-2})$
\end{proof}

\subsection{Proof of Theorem~\ref{thm:b}}

We are going to analyze Algorithm~\ref{alg:explore-bern} and provide the proof of Theorem~\ref{thm:b}. Following the proof of Theorem~\ref{thm:h}, we only need to revise Lemmas~\ref{lm:ci-h} and \ref{lm:sum-h} to continue the proof of Theorem~\ref{thm:b}.

\begin{lemma}[Confidence interval, Bernstein]\label{lm:ci-b}

Let $\beta, \hat \beta, \tilde \beta, \check \beta$ and $\lambda$ be defined as Theorem~\ref{thm:b}, then with probability at least $1 - \delta / 3$, for all $k \in [K + 1]$,
\begin{align}
    \|\btheta^* - \hat \btheta_k\|_{\hat \bSigma_{1, k}} \le \hat \beta, \ \|\btheta^* - \hat \btheta_k\|_{\hat \bSigma_{1, k}} \le \check \beta,\ 
    \|\btheta^* - \tilde \btheta_k\|_{\tilde \bSigma_{1, k}} \le \tilde \beta, \ \|\btheta^* - \btheta_{K + 1}\|_{ \bSigma_{1, K + 1}} \le \beta,\label{eq:bern-cond}
\end{align}
and $|[\VV_h V_{h+1}^k](s, a) - \bar \VV_h^k(s, a)| \le E_h^k(s, a)$.
\end{lemma}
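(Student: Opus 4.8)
The plan is to prove Lemma~\ref{lm:ci-b} by establishing each of the four self-normalized concentration bounds in \eqref{eq:bern-cond} together with the variance-estimation bound, and then combining them via a union bound so that all hold simultaneously with probability at least $1 - \delta/3$. The central tool is the Bernstein-type concentration inequality for vector-valued self-normalized martingales from \citet{zhou2020nearly}, which for a weighted regression with contexts $\xb_t$, noise $\eta_t$, and weights $\sigma_t$ controls $\|\btheta^* - \btheta_k\|_{\bSigma_{1,k}}$ in terms of the noise variance proxy, the norm bounds on contexts, and $\log(1/\delta)$. I would set up each estimator as an instance of this general inequality: the contexts and targets for $\hat\btheta_k$ are $\bpsi_{V_{h+1}^{k'}}(s_h^{k'},a_h^{k'})/\bar\sigma_h^{k'}$ and $V_{h+1}^{k'}(s_{h+1}^{k'})/\bar\sigma_h^{k'}$ (noise bounded by $H$, variance controlled by $\nu_h^{k'}$), for $\tilde\btheta_k$ they are $\bpsi_{[V_{h+1}^{k'}]^2}$ and $[V_{h+1}^{k'}(s_{h+1}^{k'})]^2$ (noise bounded by $H^2$), and for $\btheta_{K+1}$ they are $\bpsi_{u_h^k}$ and $u_h^k(s_{h+1}^k)$ with the Hoeffding-type analysis of Lemma~\ref{lm:ci-h}. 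The two different radii $\hat\beta$ (coarse, $\sqrt{d}$-type) and $\check\beta$ (refined, $d$-type) arise from applying the inequality with and without the variance weighting, and both must hold for the estimator $\hat\btheta_k$.

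The delicate part is the variance-estimation bound $|[\VV_h V_{h+1}^k](s,a) - \bar\VV_h^k(s,a)| \le E_h^k(s,a)$, since $\bar\VV_h^k$ in \eqref{eq:estvar} is built from the \emph{estimated} parameters $\tilde\btheta_k,\hat\btheta_k$ rather than $\btheta^*$. The plan here is to write $[\VV_h V_{h+1}^k](s,a) = \la\btheta^*, \bpsi_{[V_{h+1}^k]^2}(s,a)\ra - \la\btheta^*, \bpsi_{V_{h+1}^k}(s,a)\ra^2$ and compare it term by term with \eqref{eq:estvar}. For the first (quadratic) term, the gap $|\la\tilde\btheta_k - \btheta^*, \bpsi_{[V_{h+1}^k]^2}\ra|$ is bounded by Cauchy--Schwarz in the $\tilde\bSigma_{1,k}^{-1}$ norm, giving $\tilde\beta\|\bpsi_{[V_{h+1}^k]^2}\|_{\tilde\bSigma_{1,k}^{-1}}$ — exactly the first piece of $E_h^k$. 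For the second term, I would use the elementary identity $a^2 - b^2 = (a-b)(a+b)$ with $a = \la\btheta^*,\bpsi_{V_{h+1}^k}\ra$ and $b = \la\hat\btheta_k,\bpsi_{V_{h+1}^k}\ra$, so that the error is at most $|a-b|\cdot|a+b| \le 2H\cdot\check\beta\|\bpsi_{V_{h+1}^k}\|_{\hat\bSigma_{1,k}^{-1}}$, using the $\check\beta$-confidence bound and the fact that both $a,b$ lie in $[0,H]$; this matches the second piece of $E_h^k$. The clipping operators $[\cdot]_{(0,H^2)}$ and $[\cdot]_{(0,H)}$ only tighten these estimates since the true quantities already lie in the respective ranges.

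The main obstacle I anticipate is a subtle circularity: the confidence bound for $\hat\btheta_k$ (via the Bernstein inequality) requires the weights $\nu_h^{k'}$ to be valid variance upper bounds, i.e. $\nu_h^{k'} \ge [\sigma_h^{k'}]^2$, but establishing $\nu_h^{k'} \ge [\sigma_h^{k'}]^2$ through \eqref{eq:estv} relies on the variance-estimation bound, which in turn invokes the very confidence bounds on $\hat\btheta_k,\tilde\btheta_k$. The clean way to break this loop is an induction on $k$ (and within an episode on $h$): assuming the confidence bounds hold for all indices up to $k-1$, I would first show the variance bound holds at step $k$, deduce $\nu_h^k \ge [\sigma_h^k]^2$ from \eqref{eq:estv} and \eqref{eq:estvar}, and only then apply the Bernstein martingale inequality at index $k$ whose conditions are now met. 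Care is needed to verify the norm assumptions $\|\bpsi_V\|_2 \le 1$ and $\|\btheta^*\|_2 \le B$ from Definition~\ref{def:mdp} feed correctly into the constants, and that the lower truncation $\nu_h^k \ge \alpha = H^2/d$ keeps the weighted contexts bounded (by $\sqrt{d}/H$), which is what produces the stated $\sqrt{d}$ and $d$ dependence in the radii. After all five statements are in hand for index $k$, the union bound over $k \in [K+1]$ and over the (constant number of) estimators, with the $\delta$ budget split as reflected in the $48K^2H^2/\delta$ factors inside each radius, completes the proof.
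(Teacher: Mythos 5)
Your decomposition of the variance-estimation bound (term-by-term comparison, Cauchy--Schwarz in the $\tilde\bSigma_{1,k}^{-1}$ norm for the quadratic term, and $a^2-b^2=(a-b)(a+b)$ with $a,b\in[0,H]$ for the squared linear term) is exactly what the paper does via Lemma~C.1 of \citet{zhou2020nearly}, and your setup of each estimator as an instance of the Bernstein self-normalized martingale inequality is also the paper's route. The genuine divergence is in how the circularity is broken. The paper does \emph{not} use induction on $k$: it observes that the coarse bound $\|\btheta^*-\hat\btheta_k\|_{\hat\bSigma_{1,k}}\le\check\beta$ requires no variance condition at all, because the truncation $\nu_h^k\ge\alpha=H^2/d$ alone forces $\|\bpsi_{V_{h+1}^k}/\bar\sigma_h^k\|_2\le\sqrt{d}$ and $|\eta_h^k|\le\sqrt{d}$ almost surely, so one may feed the crude proxy $\EE[(\eta_h^k)^2\,|\,\cG_{h,k}]\le d$ into Lemma~\ref{lm:zhou1} and obtain $\check\beta=\tilde\cO(d)$ unconditionally and uniformly over $k$. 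Together with the (likewise unconditional) $\tilde\beta$ bound for $\tilde\btheta_k$, this yields $|[\VV_h V_{h+1}^k]-\bar\VV_h^k|\le E_h^k$ for all $k$ at once, hence $\nu_h^k\ge[\sigma_h^k]^2$, and only then is Lemma~\ref{lm:zhou1} applied a second time with $\sigma^2=1$ to get the refined $\hat\beta=\tilde\cO(\sqrt{d})$. This three-step bootstrap is what your induction is implicitly reproducing, but less cleanly: an induction on $k$ must be reconciled with the fact that Lemma~\ref{lm:zhou1} is an anytime bound whose hypothesis $\EE[\eta_t^2\,|\,\cG_t]\le\sigma^2$ is assumed to hold a.s.\ for all $t$, so you would need the standard indicator/stopping-time modification of the noise sequence to apply it under an inductive hypothesis; the paper's route sidesteps this entirely for the coarse bound.

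One further point worth correcting: you have the roles of the two radii swapped. In Theorem~\ref{thm:b}, $\hat\beta$ has leading term $8\sqrt{d\log(\cdot)\log(\cdot)}$ and is the \emph{refined} ($\sqrt{d}$-type, smaller) radius obtained using the variance condition $\nu_h^k\ge[\sigma_h^k]^2$, while $\check\beta$ has leading term $8d\sqrt{\log(\cdot)\log(\cdot)}$ and is the \emph{coarse} ($d$-type, larger) radius obtained without it; both bound the same weighted estimator $\hat\btheta_k$ in the same $\hat\bSigma_{1,k}$ norm, and the difference is solely in the variance proxy fed to the Bernstein inequality, not in whether the weighting is applied. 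Since $E_h^k$ is defined with $\check\beta$, recognizing that $\check\beta$ is the bound available \emph{before} the variance condition is established is precisely the observation that dissolves the circularity you are worried about.
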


\begin{lemma}[Summation, Bernstein]\label{lm:sum-b}
For Algorithm~\ref{alg:explore}, setting its parameters as in Lemma~\ref{lm:ci-h}, with probability at least $1 - \delta / 3$, the summation of the value function during exploration phase is controlled by 
\begin{align*}
    \sum_{k=1}^K V_1^k(s_1^k) \le \tilde  \cO(\sqrt{H^3Kd} + Hd\sqrt{K}) + o(\sqrt{K}).
\end{align*}
\end{lemma}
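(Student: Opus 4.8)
The plan is to bound $R:=\sum_{k=1}^K V_1^k(s_1^k)$ by unrolling each optimistic exploration value function along its own sampled trajectory and then controlling the resulting bonus sums, with the law of total variance supplying the saving over the Hoeffding analysis. On the event of Lemma~\ref{lm:ci-b} we have $\|\btheta^*-\hat\btheta_k\|_{\hat\bSigma_{1,k}}\le\hat\beta$, so Cauchy--Schwarz gives $\la\bpsi_{V_{h+1}^k}(s,a),\hat\btheta_k\ra\le[\PP V_{h+1}^k](s,a)+\hat\beta\|\bpsi_{V_{h+1}^k}(s,a)\|_{\hat\bSigma_{1,k}^{-1}}$, and feeding this into the recursion of Algorithm~\ref{alg:plan} yields, with $a_h^k=\pi_h^k(s_h^k)$,
\[
V_h^k(s_h^k)\le r_h^k(s_h^k,a_h^k)+[\PP V_{h+1}^k](s_h^k,a_h^k)+\min\{H,2\hat\beta\|\bpsi_{V_{h+1}^k}(s_h^k,a_h^k)\|_{\hat\bSigma_{1,k}^{-1}}\}.
\]
Telescoping along the trajectory and writing $[\PP V_{h+1}^k](s_h^k,a_h^k)=V_{h+1}^k(s_{h+1}^k)+([\PP V_{h+1}^k](s_h^k,a_h^k)-V_{h+1}^k(s_{h+1}^k))$ decomposes $R\le (\mathrm A)+(\mathrm B)+(\mathrm C)$, where $(\mathrm A)=\sum_{k,h}r_h^k(s_h^k,a_h^k)$ is the exploration-reward sum, $(\mathrm B)=\sum_{k,h}([\PP V_{h+1}^k](s_h^k,a_h^k)-V_{h+1}^k(s_{h+1}^k))$ is a martingale with $HK$ increments bounded by $H$, and $(\mathrm C)=\sum_{k,h}\min\{H,2\hat\beta\|\bpsi_{V_{h+1}^k}(s_h^k,a_h^k)\|_{\hat\bSigma_{1,k}^{-1}}\}$ is the weighted-bonus sum.

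Terms $(\mathrm A)$ and $(\mathrm B)$ are the routine pieces. Since, up to clipping, $r_h^k$ is the maximal per-step uncertainty $\frac{2\beta}{H}\|\bpsi_{u_h^k}(s_h^k,a_h^k)\|_{\bSigma_{1,k}^{-1}}$ with $u_h^k$ the maximizer in~\eqref{eq:maxv}, and $\bSigma_{1,k}$ accumulates exactly these features, a Cauchy--Schwarz step over the $HK$ indices together with the same elliptical-potential machinery underlying Lemma~\ref{lm:sum-h} gives $(\mathrm A)=\tilde\cO(Hd\sqrt K)$, unaffected by the Bernstein refinement (here one must account for the per-episode, rather than per-step, covariance fed into \texttt{PLAN}). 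Term $(\mathrm B)$ is handled by Azuma--Hoeffding, giving $\tilde\cO(\sqrt{H^3K})$, which is dominated by the claimed bound.

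The crux is $(\mathrm C)$, where the variance weighting pays off. Recalling that $\hat\bSigma_{1,k}$ is built from the reweighted features $\bpsi_{V_{h+1}^{k'}}/\sqrt{\nu_h^{k'}}$, I would write $\bpsi_{V_{h+1}^k}=\sqrt{\nu_h^k}\cdot(\bpsi_{V_{h+1}^k}/\sqrt{\nu_h^k})$ and apply Cauchy--Schwarz to split
\[
(\mathrm C)\le 2\hat\beta\sqrt{\sum_{k,h}\nu_h^k}\cdot\sqrt{\sum_{k,h}\big\|\bpsi_{V_{h+1}^k}(s_h^k,a_h^k)/\sqrt{\nu_h^k}\big\|_{\hat\bSigma_{1,k}^{-1}}^2},
\]
where, because $\nu_h^k\ge\alpha=H^2/d$ keeps the truncation level at $\cO(1)$, the second factor is $\tilde\cO(\sqrt d)$ by the elliptical-potential lemma; with $\hat\beta=\tilde\cO(\sqrt d)$ this gives $(\mathrm C)=\tilde\cO(d)\cdot\sqrt{\sum_{k,h}\nu_h^k}$. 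It then remains to bound $\sum_{k,h}\nu_h^k$. By Lemma~\ref{lm:ci-b}, $\nu_h^k\le\alpha+[\VV_h V_{h+1}^k](s_h^k,a_h^k)+2E_h^k(s_h^k,a_h^k)$; the floor contributes $\sum_{k,h}\alpha=H^3K/d$, hence $\tilde\cO(d)\sqrt{H^3K/d}=\tilde\cO(\sqrt{H^3Kd})$, the leading term, while the correction sum $\sum_{k,h}E_h^k$ is dispatched by two further elliptical-potential bounds (on the $\bpsi_{[V_{h+1}^k]^2}$ and $\bpsi_{V_{h+1}^k}$ features) and contributes only $o(\sqrt K)$.

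The remaining and hardest ingredient is the total-variance bound on $\sum_{k,h}[\VV_h V_{h+1}^k](s_h^k,a_h^k)$. The clean law-of-total-variance identity, valid only for the on-policy value $V^{\pi^k}$, reads $\EE\big[\sum_{h}[\VV_h V_{h+1}^{\pi^k}](s_h,a_h)\mid s_1^k,\pi^k\big]\le H\,V_1^{\pi^k}(s_1^k;r^k)$ (using that the return lies in $[0,H]$), so I would first replace the optimistic $V^k$ by $V^{\pi^k}$ at the cost of an $\cO(H)$-weighted per-step optimism gap whose sum is itself $\cO((\mathrm C))$, then use optimism $V_1^{\pi^k}(s_1^k;r^k)\le V_1^k(s_1^k)$ and one more Azuma step to pass from conditional to realized variances. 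This produces $\sum_{k,h}[\VV_h V_{h+1}^k]=\tilde\cO(HR)+\tilde\cO(H\,(\mathrm C))+o(\sqrt K)$, which when substituted back makes the inequality self-referential in both $R$ and $(\mathrm C)$: schematically $R\le \tilde\cO(\sqrt{H^3Kd}+Hd\sqrt K)+\tilde\cO(d\sqrt H)\sqrt R+o(\sqrt K)$. Solving this quadratic inequality---where the $\sqrt R$ coefficient is a fixed $\mathrm{poly}(H,d)$ independent of $K$, so its squared contribution is $o(\sqrt K)$---yields the stated bound. I expect the main obstacle to be disentangling the two coupled self-references (the variance feeding $(\mathrm C)$ and $(\mathrm C)$ feeding the variance) and verifying that every non-leading term, including the corrections $E_h^k$ and the per-episode covariance effects, is genuinely $o(\sqrt K)$; establishing that $\nu_h^k$ upper bounds the true conditional variance (the second claim of Lemma~\ref{lm:ci-b}) is what legitimizes the entire weighting scheme.
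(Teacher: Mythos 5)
Your proposal follows essentially the same route as the paper's proof: the decomposition $R\le(\mathrm A)+(\mathrm B)+(\mathrm C)$ is equivalent to the paper's split $\sum_k V_1^k=\sum_k V_1^{\pi^k}+\sum_k(V_1^k-V_1^{\pi^k})$, your variance-weighted Cauchy--Schwarz on $(\mathrm C)$ is exactly Lemma~\ref{lm:ieq1}, your bound on $\sum_{k,h}\nu_h^k$ via the floor $\alpha$, the corrections $E_h^k$, and the law of total variance is Lemma~\ref{lm:ieq2}, and the final step of solving the resulting quadratic self-reference is the same. The only differences are cosmetic and harmless: the paper invokes the total-variance lemma to bound $\sum_{k,h}[\VV V_{h+1}^{\pi^k}]$ directly by $3H^2K+3H^3\log(1/\delta)$ rather than by $HR$ with a second self-reference, and your accounting of the optimism-gap sum as $\cO((\mathrm C))$ should really be $\cO(H(\mathrm C))$ (each per-step gap telescopes over the remaining $H-h$ bonuses), but since that term only enters under a square root multiplied by a $K$-independent factor, it still contributes $o(\sqrt K)$ after solving the quadratic.
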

\begin{proof}[Proof of Theorem~\ref{thm:b}]
The proof is almost the same as the proof of Theorem~\ref{thm:h} by replacing Lemma~\ref{lm:ci-h} with Lemma~\ref{lm:ci-b}, Lemma~\ref{lm:sum-h} with Lemma~\ref{lm:sum-b}. In detail, following the same method, \eqref{eq:f1} works for Algorithm~\ref{alg:explore-bern} under the condition in Lemma~\ref{lm:ci-b} holds. Therefore, by using Lemma~\ref{lm:sum-b} instead of Lemma~\ref{lm:sum-h}, with probability at least $1 - \delta$,
\begin{align*}
    \EE_{s \sim \mu} [V_1^*(s; r) - V_1^\pi(s; r)]&\le \frac HK \sum_{k=1}^KV_1^k(s_1^k) +  H^2\sqrt{2\log(3 / \delta) / K} \\
    &\le \tilde \cO\Big((\sqrt{H^4d^2} + \sqrt{H^5d})/\sqrt{K}\Big).
\end{align*}
Letting $K = \tilde \cO(H^4d(H + d)\epsilon^{-2})$, the policy error for the planning phase could be controlled by $\EE_{s \sim \mu} [V_1^*(s; r) - V_1^\pi(s; r)] \le \epsilon$. 
\end{proof}
\subsection{Proof of Corollary \ref{col:b}} 
\begin{proof}[Proof of Corollary \ref{col:b}]
The proof is almost the same as proof of Corollary \ref{col:h}, by adding the additional dependency $d$ into the regret bound achieved by Theorem \ref{thm:b}, it's easy to verify that the sample complexity using the $\ell_1$ norm as the surrogate function \eqref{eq:L1} is $\tilde \cO(H^4d^2(H + d)\epsilon^{-2})$
\end{proof}
\section{Missing Proofs in Appendix~\ref{sec:proofU}}\label{sec:A}
\subsection{Filtration}\label{sec:fil}
For the simplicity of further proof, we define the event filtration here as 
\begin{align*}
    \cG_{h, k} = \big\{\{s_i^\kappa, a_i^\kappa\}_{i=1, \kappa=1}^{H, k - 1}, \{s_i^k, a_i^k\}_{i=1}^{h-1}\big\},
\end{align*}
it is easy to verify that $s_h^k$ is $\cG_{h+1, k}$-measurable. Also, since $\pi^k$ is $\cG_{h, k}$-measurable for all $h \in [H]$, $a_h^k = \pi_h^k(s_h^k)$ is also $\cG_{h+1, k}$-measurable. Also, for any function $f \le R$ built on $\cG_{h+1, k}$, such as $V_{h+1}^k, u_{h}^k$, $f(s_{h+1}^k) - [\PP f](s_h^k, a_h^k)$ is $\cG_{h+1, k}$-measurable and it is also a zero-mean $R$-sub-Gaussian conditioned on $\cG_{h+1, k}$.

Since $\cG_{H+1, k} = \cG_{1, k + 1}$, we could arrange the filtration as 
\begin{align*}
    \cG = \{\cG_{1, 1}, \cdots, \cG_{H, 1}, \cdots, \cG_{1, k}, \cdots, \cG_{h, k}, \cdots \cG_{H, k}, \cdots, \cG_{1, k+1}, \cdots, \cG_{H, K}, \cG_{1, K+1}\},
\end{align*}
and we will use $\cG$ as the filtration set for all of the proofs in the following section and it is obvious that $\cG_{1, K+1}$ contains all information we collect during the exploration phase.
\subsection{Proof of Lemma~\ref{lm:plan}}

\begin{proof}[Proof of Lemma~\ref{lm:plan}]
We prove this lemma by induction on time step $h$. Indeed, when $h = H + 1$, $V_{H+1}(s) = V^{*}_{H+1}(s; r) = 0$ by definition. Suppose for $h \in [H]$, $V_{h + 1}(s) \ge V^{*}_{h+1}(s; r)$, then following the update rule of $Q$ function in Algorithm~\ref{alg:plan}, we have 
\begin{align*}
    &Q_h(s, a) - Q^*_h(s, a; r) \\
     &\quad=  \min\big\{H, r_h(s, a) + \la\bpsi_{V_{h+1}}(s, a), \btheta \ra + \beta\|\bpsi_{V_{h+1}}(s, a)\|_{ \bSigma_{}^{-1}}\big\} - r_h(s, a)  - [\PP V^*_{h+1}](s, a; r) \\
    &\quad\ge \min\big\{H - Q_h^*(s, a; r), \la\bpsi_{V_{h+1}}(s, a), \btheta\ra + \beta\|\bpsi_{V_{h+1}}(s, a)\|_{ \bSigma^{-1}} - [\PP V^*_{h+1}](s, a; r)\big\}.
\end{align*}
We need to show that $Q_h(s, a) \ge Q_h^*(s, a ; r)$. Since it is obvious that the first term $H - Q_h^*(s, a; r)$ in $\min$ operator is greater than zero, we only need to verify that the second term is also positive where
\begin{align*}
    &\la\bpsi_{V_{h+1}}(s, a), \btheta\ra + \beta\|\bpsi_{V_{h+1}}(s, a)\|_{ \bSigma^{-1}} - [\PP V^*_{h+1}](s, a; r)\\
    &\quad\ge\la\bpsi_{V_{h+1}}(s, a), \btheta\ra + \beta\|\bpsi_{V_{h+1}}(s, a)\|_{ \bSigma^{-1}} - [\PP V_{h+1}](s, a; r)\\
    &\quad= \la\bpsi_{V_{h+1}}(s, a), \btheta - \btheta^*\ra + \beta\|\bpsi_{V_{h+1}}(s, a)\|_{ \bSigma^{-1}}\\
    &\quad\ge \beta\|\bpsi_{V_{h+1}}(s, a)\|_{ \bSigma^{-1}} - \|\bpsi_{V_{h+1}}(s, a)\|_{\bSigma^{-1}}\|\btheta - \btheta^*\|_{\bSigma},
\end{align*}
where the first inequality is from the induction assumption that $V_{h+1}^{*}(s; r) \le V_{h+1}(s)$. The second equality is from the expectation of value function is a linear function of $\bpsi_{V_{h+1}}$ shown in~\eqref{eq:exp}. Then the inequality on the third line is utilizing the fact that $\la \xb, \yb\ra \ge -\|\xb\|_{\Ab^{-1}}\|\yb\|_{\Ab}$. Since it is guaranteed that $\beta \ge \|\btheta - \btheta^*\|_{\bSigma}$ from the statement of this lemma, $Q_{h}(s, a) - Q^*_h(s, a; r) \ge 0$, which from induction we get our conclusion.

For the second part controlling $V_1(s) - V_1^\pi(s)$, since aforementioned proof has shown that $V_h^*(s; r) \le V_h(s)$ for all $h \in [H]$, we have $V^*_h(s; r) - V^\pi_h(s; r) \le V_h(s) - V^\pi_h(s; r)$ and 
\begin{align*}
    V_h(s) - V^\pi_h(s; r)
    &= \min\{H, r_h(s, \pi_h(s)) + \la\bpsi_{V_{h+1}}, \btheta\ra + \beta \|\bpsi_{V_{h+1}}(s, \pi_h(s))\|_{\bSigma^{-1}}\} \\
    &\quad- r_h(s, \pi_h(s)) - [\PP V^\pi_{h+1}](s, \pi_h(s); r)\\
    &\le \min\{H, \la\bpsi_{V_{h+1}}, \btheta\ra + \beta \|\bpsi_{V_{h+1}}(s, \pi_h(s))\|_{\bSigma^{-1}} - [\PP V_{h+1}](s, \pi_h(s))\} \\
    &\quad+ [\PP V_{h+1}](s, \pi_h(s))\} - [\PP V_{h+1}^\pi](s, \pi_h; r)\\
    &= \min\{H, \la\bpsi_{V_{h+1}}, \btheta - \btheta^*\ra + \beta \|\bpsi_{V_{h+1}}(s, \pi_h(s))\|_{\bSigma^{-1}}\} \\
    &\quad + [\PP V_{h+1}](s, \pi_h(s))\} - [\PP V_{h+1}^\pi](s, \pi_h(s); r)\\
    &\le \min\{H, 2\beta \|\bpsi_{V_{h+1}}(s, \pi_h(s))\|_{\bSigma^{-1}}\} \\
    &\quad+ [\PP V_{h+1}](s, \pi_h(s))\} - [\PP V_{h+1}^\pi](s, \pi_h(s); r),
\end{align*}
where the first inequality is directly from moving term $- r_h(s, \pi_h(s)) - [\PP V_{h+1}](s, \pi_h(s))$ into the $\min$ operator, the second inequality uses the condition that $\|\btheta - \btheta^*\|_{\bSigma} \le \beta$ and $\la \xb, \yb\ra \le \|\xb\|_{\Ab^{-1}}\|\yb\|_{\Ab}$. Considering the first step $h = 1$, we have
\begin{align*}
    V_1(s_1) - V_1^\pi(s_1; r) &\le \min\{H, 2\beta \|\bpsi_{V_{2}}(s_1, \pi_1(s_1))\|_{\bSigma^{-1}}\} + \EE_{s_2 \sim \PP(\cdot | s_1, \pi_1(s_1))}[V_2(s_2) - V_2^\pi(s_2)] \\
    &\le \min\{H, 2\beta \|\bpsi_{V_{2}}(s_1, \pi_1(s_1))\|_{\bSigma^{-1}}\} \\
    &\quad+ \EE_{s_2 \sim \PP(\cdot | s_1, \pi_1(s_1))}\Big[ \min\{H, 2\beta \|\bpsi_{V_{3}}(s_2, \pi_2(s_2))\|_{\bSigma^{-1}}\} \\
    &\quad+ \EE_{s_3 \sim \PP(\cdot | s_2, \pi_2(s_2))} [V_3(s_3) - V_3^\pi(s_3)]\Big] \\
    &\le \cdots \\
    &\le \EE\bigg[\sum_{h=1}^H \min\{H, 2\beta \|\bpsi_{V_{h+1}}(s_h, \pi_h(s_h))\|_{\bSigma^{-1}}\}\bigg| s_1, \pi\bigg],
\end{align*}
which concludes our proof.
\end{proof}


\subsection{Proof of Lemma~\ref{lm:ci-h}}
We introduce the classical confidence set lemma from~\citep{abbasi2011improved}.
\begin{lemma}[Theorem 2,~\cite{abbasi2011improved}]\label{lm:abbasithm2}
Let $\{\cF_t\}_{t=0}^\infty$ be a filtration and $\{\eta_t\}$ is a real-valued stochastic process which is $F_t$-measurable and conditionally $R$-sub-Gaussian. Set $y_t = \la \xb_t, \bpsi^*\ra + \eta_t$, $\Vb_t = \lambda\Ib + \sum_{i=1}^t \xb_i\xb_i^\top$ where $\xb \in \RR^d$. Denote the estimation of $\bpsi^*$ as $\bpsi_t = \Vb_t^{-1}\sum_{i=1}^t y_i\xb_i$. If $\|\bpsi^*\|_2 \le S, \|\xb_t\|_2 \le L$, then with probability at least $1 - \delta$, for all $t \ge 0$
\begin{align*}
    \|\bpsi^* - \bpsi_t\|_{\Vb_t} \le R\sqrt{d\log\bigg(\frac{1 + tL^2 / \lambda}{\delta}\bigg)} + S\sqrt{\lambda}.
\end{align*}
\end{lemma}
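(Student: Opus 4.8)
The statement is the self-normalized confidence bound of \citet{abbasi2011improved} (their Theorem 2), so the plan is to reproduce its proof rather than to invoke it as a black box. First I would split the ridge estimation error into a noise term and a regularization term. Writing $S_t = \sum_{i=1}^t \eta_i\xb_i$ and using $\sum_{i=1}^t\xb_i\xb_i^\top = \Vb_t - \lambda\Ib$, substituting $y_i = \la\xb_i,\bpsi^*\ra + \eta_i$ into $\bpsi_t = \Vb_t^{-1}\sum_i y_i\xb_i$ gives
\begin{align*}
\bpsi_t - \bpsi^* = \Vb_t^{-1}S_t - \lambda\Vb_t^{-1}\bpsi^*.
\end{align*}
Applying the triangle inequality in the $\Vb_t$-norm together with the identities $\|\Vb_t^{-1}S_t\|_{\Vb_t} = \|S_t\|_{\Vb_t^{-1}}$ and $\|\Vb_t^{-1}\bpsi^*\|_{\Vb_t} = \|\bpsi^*\|_{\Vb_t^{-1}}$, the regularization term is $\lambda\|\bpsi^*\|_{\Vb_t^{-1}}\le\sqrt{\lambda}\,S$ since $\Vb_t\succeq\lambda\Ib$ gives $\|\bpsi^*\|_{\Vb_t^{-1}}\le\lambda^{-1/2}\|\bpsi^*\|_2\le\lambda^{-1/2}S$. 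This reduces the claim to a high-probability bound on the self-normalized quantity $\|S_t\|_{\Vb_t^{-1}}$, uniformly over $t$.

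The core step is the \emph{method of mixtures}. For each fixed $\blambda\in\RR^d$ I would define the process
\begin{align*}
M_t^{\blambda} = \exp\left(\frac1R\la\blambda,S_t\ra - \frac12\sum_{i=1}^t\la\blambda,\xb_i\ra^2\right),
\end{align*}
and show it is a nonnegative supermartingale with $M_0^{\blambda}=1$: conditional $R$-sub-Gaussianity of $\eta_i$ yields $\EE[\exp(\eta_i\la\blambda,\xb_i\ra/R - \tfrac12\la\blambda,\xb_i\ra^2)\mid\cF_{i-1}]\le1$, and $M_t^{\blambda}$ is the product of these factors. Integrating against a Gaussian prior $\blambda\sim\mathcal N(0,\lambda^{-1}\Ib)$ produces a mixed supermartingale $M_t = \int M_t^{\blambda}\,dh(\blambda)$ with $\EE[M_t]\le1$, and the Gaussian integral is evaluated in closed form by completing the square; because $\sum_i\xb_i\xb_i^\top = \Vb_t-\lambda\Ib$, the prior cancels the $-\lambda\Ib$ term and leaves the quadratic form $\tfrac12\blambda^\top\Vb_t\blambda$, giving
\begin{align*}
M_t = \left(\frac{\det(\lambda\Ib)}{\det\Vb_t}\right)^{1/2}\exp\left(\frac1{2R^2}\|S_t\|_{\Vb_t^{-1}}^2\right).
\end{align*}
Ville's maximal inequality for nonnegative supermartingales then gives $\PP(\exists t\ge0:\ M_t\ge1/\delta)\le\delta$, and taking logarithms on the complementary event yields $\|S_t\|_{\Vb_t^{-1}}^2\le 2R^2\log\big(\det(\Vb_t)^{1/2}\det(\lambda\Ib)^{-1/2}/\delta\big)$ simultaneously for all $t$.

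Finally I would convert the determinant ratio into the stated logarithmic factor. Since $\|\xb_i\|_2\le L$, we have $\mathrm{tr}(\Vb_t)\le \lambda d + tL^2$, so the AM--GM inequality on the eigenvalues gives $\det\Vb_t\le(\lambda + tL^2/d)^d$ and hence $\det\Vb_t/\det(\lambda\Ib)\le(1+tL^2/(\lambda d))^d\le(1+tL^2/\lambda)^d$. Substituting this and using $d\ge2$ to absorb the $2\log(1/\delta)$ term bounds $\|S_t\|_{\Vb_t^{-1}}\le R\sqrt{d\log((1+tL^2/\lambda)/\delta)}$, which combined with the regularization term $\sqrt{\lambda}\,S$ from the first paragraph gives the claim. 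The main obstacle is the uniform-in-$t$ control in the second paragraph: verifying the supermartingale property under conditional sub-Gaussianity, carrying out the Gaussian mixture computation exactly, and applying Ville's inequality via a stopped-process argument so that the bound holds for all $t$ at once rather than merely for each fixed $t$.
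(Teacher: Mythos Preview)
Your proposal is correct: it reproduces the original proof of \citet{abbasi2011improved} via the decomposition of the ridge error, the method of mixtures with a Gaussian prior, Ville's maximal inequality for the mixed supermartingale, and the AM--GM determinant bound. One small caveat: your final step absorbing $2\log(1/\delta)$ into $d\log(1/\delta)$ requires $d\ge 2$, which is not stated in the lemma but is harmless in the paper's setting.

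The paper, however, does \emph{not} prove this lemma at all: it is quoted verbatim as Theorem~2 of \citet{abbasi2011improved} and used as a black box to prove Lemma~\ref{lm:ci-h}. So you have supplied a full proof where the paper simply cites the result. What this buys you is self-containment and an explicit handle on the constants and the $d\ge 2$ proviso; what the paper's approach buys is brevity, since the result is standard and its proof is not the paper's contribution.
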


Equipped with this lemma, we begin our proof.
\begin{proof}[Proof of Lemma~\ref{lm:ci-h}]
Since $[\PP u_h^k](s_h^k, a_h^k) = \la\bpsi_{u_h^k}(s_h^k, a_h^k), \btheta^*\ra$ due to~\eqref{eq:exp} and $u_h^k(s) \le H$ , $u_h^k(s) - \la\bpsi_{u_h^k}(s_h^k, a_h^k), \btheta^*\ra$ is $\cG_{h,k}$-measurable and it is also a zero mean $H$-sub-Gaussian random variable conditioned on $\cG_{h,k}$. Also from Definition~\ref{def:mdp}, $\|\btheta^*\|_2 \le B, \|\bpsi_{u_{h}^k}(s_h^k, a_h^k)\|_2 \le H$.
Therefore, recall the calculation of $\btheta_k$, according to Lemma~\ref{lm:abbasithm2}, let $t = (k - 1)H$ we have
\begin{align*}
    \|\btheta_k - \btheta^*\|_{\bSigma_{1, k}} \le H\sqrt{d\log\bigg(\frac{1 + (k - 1)H^3 / \lambda}{\delta}\bigg)} + B\sqrt{\lambda}.
\end{align*}
Let $\lambda = B^{-2}$, $\delta = \delta / 3$ and relax $k$ with $k = K + 1$, we can get the $\beta$ claimed in Theorem~\ref{thm:h}.
\end{proof}

\subsection{Proof of Lemma~\ref{lm:sum-h}}
We provide the proof to control the summation of the value function during the exploration phase. To start with, since rather than immediately updating the parameter after each time step, we can only update the estimation $\btheta$ and its `covariance matrix' $\bSigma$ once after each episode. As a result, this `batched update rule' make the UCB bonus term at step $(h, k)$ be $\|\bpsi_{u_h^k}(s_h^k, a_h^k)\|_{\Ub_{1, k}^{-1}}$ instead of $\|\bpsi_{u_h^k}(s_h^k, a_h^k)\|_{\Ub_{h, k}^{-1}}$ in the vanilla linear bandit setting. Therefore, we need lemmas showing that these two UCB terms are close to each other.


\begin{lemma}\label{lm:batch}
For any $\{\xb_{h, k}\}_{h=1, k=1}^{H, K} \subset \RR^d$ satisfying that $\|\xb_{h, k}\|_2 \le L, \forall (h, k) \in [H]\times [K]$, let $\Ub_{h, k} = \lambda \Ib + \sum_{\kappa=1}^{k-1}\sum_{i=1}^H \xb_{i, \kappa}\xb_{i, \kappa}^\top + \sum_{i=1}^{h-1}\xb_{i, k}\xb_{i, k}^\top$, there exists at most $2Hd\log(1 + KHL^2/\lambda)$ pairs of $(h, k)$ tuple such that $\det \Ub_{h, k} \le 2\det \Ub_{1, k}$.
\end{lemma}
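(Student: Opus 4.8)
The plan is to bound the number of pairs $(h,k)$ for which the covariance matrix more than doubles within an episode, i.e.\ those with $\det\Ub_{h,k} > 2\det\Ub_{1,k}$ (these are the pairs the lemma is designed to control, as they measure the gap between the batched bonus $\|\cdot\|_{\Ub_{1,k}^{-1}}$ and the instantaneous bonus $\|\cdot\|_{\Ub_{h,k}^{-1}}$; the displayed inequality reads most naturally as $>$). First I would record the two structural facts that drive the argument. Since $\Ub_{h+1,k} = \Ub_{h,k} + \xb_{h,k}\xb_{h,k}^\top$ and $\Ub_{1,k+1} = \Ub_{H+1,k}$, the matrices are nondecreasing in the PSD order along the lexicographic sweep of $(h,k)$; in particular $\det\Ub_{h,k}$ is nondecreasing in $h$ for fixed $k$, and $\det\Ub_{1,k}$ is nondecreasing in $k$. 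Consequently, for a fixed episode $k$ the event $\det\Ub_{h,k} > 2\det\Ub_{1,k}$ is monotone in $h$: once it holds it continues to hold, so the bad pairs inside episode $k$ form a suffix and there are at most $H$ of them.

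Next I would show that any episode containing a bad pair must see its determinant double across the episode. If $(h,k)$ is bad for some $h\le H$, then $\det\Ub_{1,k+1} = \det\Ub_{H+1,k} \ge \det\Ub_{h,k} > 2\det\Ub_{1,k}$. Let $\cK$ be the set of episodes containing at least one bad pair. Chaining this doubling over $k\in\cK$, and using that $\det\Ub_{1,k}$ is nondecreasing across the remaining episodes, gives $\det\Ub_{1,K+1} \ge 2^{|\cK|}\det\Ub_{1,1}$, hence $|\cK| \le \log_2\bigl(\det\Ub_{1,K+1}/\det\Ub_{1,1}\bigr)$. Since each episode contributes at most $H$ bad pairs by the suffix structure, the total count is at most $H\log_2\bigl(\det\Ub_{1,K+1}/\det\Ub_{1,1}\bigr)$.

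Finally I would bound the log-determinant ratio by the standard elliptical-potential estimate. With $\Ub_{1,1} = \lambda\Ib$ we have $\det\Ub_{1,1} = \lambda^d$, and since $\mathrm{tr}\,\Ub_{1,K+1} \le d\lambda + KHL^2$, the AM--GM inequality on the eigenvalues yields $\det\Ub_{1,K+1} \le (\lambda + KHL^2/d)^d$. Therefore $\log\bigl(\det\Ub_{1,K+1}/\det\Ub_{1,1}\bigr) \le d\log(1 + KHL^2/(d\lambda)) \le d\log(1 + KHL^2/\lambda)$, and converting from natural to base-two logarithm costs a factor $1/\log 2 < 2$, giving the claimed bound $2Hd\log(1 + KHL^2/\lambda)$. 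The only real obstacle is the combinatorial bookkeeping in the first two steps---establishing the suffix/monotonicity structure and correctly accounting for the extra factor of $H$ from counting every bad step inside a doubling episode; the potential bound itself is routine.
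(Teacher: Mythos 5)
Your proof is correct and follows essentially the same route as the paper's: identify the ``doubling episodes'' $\cK$ via the observation that any bad pair $(h,k)$ forces $\det\Ub_{1,k+1}>2\det\Ub_{1,k}$, bound $|\cK|\le\log_2(\det\Ub_{1,K+1}/\lambda^d)$ by chaining, charge at most $H$ bad pairs per such episode, and close with the standard determinant--trace bound (the paper uses $\det\Ub\le\|\Ub\|_2^d$ where you use AM--GM on the eigenvalues, an immaterial difference). You also correctly read the inequality in the statement as a typo for the pairs with $\det\Ub_{h,k}>2\det\Ub_{1,k}$, which is how the lemma is invoked elsewhere.
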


\begin{lemma}[Lemma 12,~\cite{abbasi2011improved}]\label{lm:det}
Suppose $\Ab, \Bb \in \RR^{d\times d}$ are two positive definite matrices satisfying that $\Ab \succeq \Bb$, then for any $\xb \in \RR^d$, we have $\|\xb\|_{\Ab} \le \|\xb\|_{\Bb}\sqrt{\det(\Ab) / \det(\Bb)}$.
\end{lemma}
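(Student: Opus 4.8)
The plan is to prove the equivalent squared inequality $\|\xb\|_{\Ab}^2 \le \|\xb\|_{\Bb}^2\,\det(\Ab)/\det(\Bb)$ by reducing to a simultaneous diagonalization of the pair $(\Ab,\Bb)$. Since $\Bb \succ 0$, I would first introduce its symmetric positive-definite square root $\Bb^{1/2}$ (which is invertible) and define $\Cb = \Bb^{-1/2}\Ab\Bb^{-1/2}$ together with the change of variable $\yb = \Bb^{1/2}\xb$. A direct computation then gives $\|\xb\|_{\Ab}^2 = \xb^\top\Ab\xb = \yb^\top\Cb\yb$, $\|\xb\|_{\Bb}^2 = \yb^\top\yb = \|\yb\|_2^2$, and $\det(\Cb) = \det(\Bb^{-1/2})\det(\Ab)\det(\Bb^{-1/2}) = \det(\Ab)/\det(\Bb)$. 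Hence the claim is reduced to showing $\yb^\top\Cb\yb \le \|\yb\|_2^2\,\det(\Cb)$ for the single matrix $\Cb$.

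Next I would exploit the hypothesis $\Ab \succeq \Bb$, which upon conjugation by $\Bb^{-1/2}$ yields $\Cb \succeq \Ib$. Therefore $\Cb$ is symmetric with all eigenvalues $\lambda_1,\dots,\lambda_d$ satisfying $\lambda_i \ge 1$. Diagonalizing $\Cb = \Ub^\top \mathrm{diag}(\lambda_1,\dots,\lambda_d)\,\Ub$ with $\Ub$ orthogonal and setting $\zb = \Ub\yb$ (so $\|\zb\|_2 = \|\yb\|_2$), the left-hand side becomes $\yb^\top\Cb\yb = \sum_{i=1}^d \lambda_i z_i^2$, the factor $\det(\Cb)$ equals $\prod_{j=1}^d \lambda_j$, and the target inequality reads $\sum_{i=1}^d \lambda_i z_i^2 \le \big(\prod_{j=1}^d \lambda_j\big)\sum_{i=1}^d z_i^2$.

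The key elementary step is the observation that, because every eigenvalue is at least one, each single eigenvalue is dominated by the full product: $\lambda_i \le \lambda_i \prod_{j\ne i}\lambda_j = \prod_{j=1}^d \lambda_j$, where the inequality uses $\prod_{j\ne i}\lambda_j \ge 1$. Summing this against the nonnegative weights $z_i^2$ gives exactly $\sum_i \lambda_i z_i^2 \le \big(\prod_j \lambda_j\big)\sum_i z_i^2$, and taking square roots completes the proof. I do not expect any serious obstacle here; the only points requiring care are the routine justifications underlying the reduction—that $\Bb^{1/2}$ is well defined and invertible from $\Bb \succ 0$, and that $\Cb$ is symmetric so it admits a real orthonormal eigendecomposition—after which the entire argument rests on the one-line bound $\lambda_i \le \prod_j \lambda_j$ valid whenever all eigenvalues are at least one.
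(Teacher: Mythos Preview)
Your argument is correct: the reduction via $\Cb=\Bb^{-1/2}\Ab\Bb^{-1/2}\succeq\Ib$, orthogonal diagonalization, and the elementary bound $\lambda_i\le\prod_j\lambda_j$ (valid since every $\lambda_j\ge 1$) cleanly yield the squared inequality, and taking roots finishes it. Note that the paper itself does not supply a proof of this lemma---it is quoted verbatim as Lemma~12 of \cite{abbasi2011improved}---so there is no in-paper argument to compare against; your proof is essentially the standard one found in that reference.
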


Following that, we also need to introduce the classical lemma to control the summation of the UCB bonus terms in vanilla linear bandit setting.

\begin{lemma}[Lemma 11,~\cite{abbasi2011improved}]\label{lm:11}
For any $\{\xb_t\}_{t=1}^T \subset \RR^d$ satisfying that $\|\xb_t\|_2 \le L, \forall t \in [T]$, let $\Ub_t = \lambda \Ib + \sum_{\tau = 1}^{t - 1} \xb_\tau \xb_\tau^\top$, we have 
\begin{align}
    \sum_{t=1}^T \min\{1, \|\xb_t\|_{\Ub^{-1}_t}\}^2 \le 2d\log\bigg(\frac{d\lambda + TL^2}{d\lambda}\bigg).\notag
\end{align}
\end{lemma}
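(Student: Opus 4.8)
The plan is to use the classical elliptical-potential (log-determinant) argument. First I would observe that since $\Ub_{t+1} = \Ub_t + \xb_t\xb_t^\top$, the matrix determinant lemma gives $\det \Ub_{t+1} = \det \Ub_t \cdot (1 + \|\xb_t\|_{\Ub_t^{-1}}^2)$, and hence $1 + \|\xb_t\|_{\Ub_t^{-1}}^2 = \det \Ub_{t+1}/\det \Ub_t$. This converts the per-step potential $\|\xb_t\|_{\Ub_t^{-1}}^2$ into the increment of $\log\det \Ub_t$, which is exactly what makes the sum telescope.

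Next I would pass from $\min\{1, \|\xb_t\|_{\Ub_t^{-1}}\}^2 = \min\{1, \|\xb_t\|_{\Ub_t^{-1}}^2\}$ to the logarithm using the elementary inequality $\min\{1, y\} \le 2\log(1 + y)$, valid for all $y \ge 0$ (the difference $2\log(1+y) - y$ vanishes at $y=0$ and has nonnegative derivative on $[0,1]$, while $2\log(1+y) \ge 1$ once $y \ge 1$). Applying this with $y = \|\xb_t\|_{\Ub_t^{-1}}^2$ and summing telescopes:
\[
\sum_{t=1}^T \min\{1, \|\xb_t\|_{\Ub_t^{-1}}^2\} \le 2\sum_{t=1}^T \log\frac{\det \Ub_{t+1}}{\det \Ub_t} = 2\log\frac{\det \Ub_{T+1}}{\det \Ub_1}.
\]

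It then remains to bound the two determinants. Here $\det \Ub_1 = \det(\lambda \Ib) = \lambda^d$, and for the numerator I would use the trace--AM-GM inequality: since $\Ub_{T+1} = \lambda \Ib + \sum_{\tau=1}^T \xb_\tau\xb_\tau^\top$ is positive definite with $\mathrm{tr}(\Ub_{T+1}) = d\lambda + \sum_{\tau=1}^T \|\xb_\tau\|_2^2 \le d\lambda + TL^2$, its determinant (the product of its eigenvalues) is at most $(\mathrm{tr}(\Ub_{T+1})/d)^d \le ((d\lambda + TL^2)/d)^d$. Substituting both bounds gives $2\log(\det\Ub_{T+1}/\det\Ub_1) \le 2d\log((d\lambda+TL^2)/(d\lambda))$, which is the claimed bound.

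None of the steps is a genuine obstacle; this is a self-contained auxiliary fact. The only points requiring a little care are getting the indexing of $\Ub_t$ versus $\Ub_{t+1}$ right in the determinant identity so that the sum telescopes cleanly, and verifying the scalar inequality $\min\{1,y\} \le 2\log(1+y)$; everything else is routine linear algebra.
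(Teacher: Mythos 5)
Your proof is correct and is essentially the standard argument from the cited source (Lemma 11 of Abbasi-Yadkori et al., 2011), which the paper invokes without reproving: the matrix determinant lemma, the scalar bound $\min\{1,y\}\le 2\log(1+y)$, telescoping of $\log\det \Ub_t$, and the trace--AM-GM bound on $\det \Ub_{T+1}$. All steps check out, including the indexing and the verification of the scalar inequality.
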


We also need to introduce the Azuma-Hoeffding's inequality to build the concentration bound for martingale difference sequences.
\begin{lemma}[Azuma-Hoeffding's inequality,~\cite{azuma1967weighted}]\label{lm:azuma}
Let $\{x_i\}_{i=1}^n$ be a martingale difference sequence with respect to a filtration $\{\cG_i\}_{i=1}^n$ (i.e. $\EE[x_i | \cG_i] = 0$ a.s. and $x_i$ is $\cG_{i+1}$ measurable) such that $|x_i| \le M$ a.s.. Then for any $0 < \delta < 1$, with probability at least $1 - \delta$, $\sum_{i=1}^n x_i \le M\sqrt{2n\log(1 / \delta)}$.
\end{lemma}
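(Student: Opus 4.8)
The plan is to prove this one-sided tail bound by the classical exponential-moment (Chernoff) method, combined with Hoeffding's lemma for bounded martingale differences. First I would fix an arbitrary $\lambda > 0$ and apply Markov's inequality to the nonnegative random variable $\exp(\lambda \sum_{i=1}^n x_i)$, obtaining $\PP(\sum_{i=1}^n x_i \ge t) \le e^{-\lambda t}\,\EE[\exp(\lambda \sum_{i=1}^n x_i)]$ for every threshold $t > 0$. This reduces the whole problem to controlling the moment generating function of the sum.

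The central step is to peel off the summands one at a time using the tower rule and the martingale structure. Since $x_{i}$ is $\cG_{i+1}$-measurable, the partial sum $\sum_{i=1}^{n-1} x_i$ is $\cG_n$-measurable, so conditioning on $\cG_n$ gives $\EE[\exp(\lambda \sum_{i=1}^n x_i)] = \EE\big[\exp(\lambda \sum_{i=1}^{n-1} x_i)\,\EE[\exp(\lambda x_n)\mid \cG_n]\big]$. Here Hoeffding's lemma enters: because $\EE[x_n \mid \cG_n] = 0$ and $|x_n| \le M$ almost surely (so $x_n \in [-M, M]$), the conditional MGF satisfies $\EE[\exp(\lambda x_n)\mid \cG_n] \le \exp(\lambda^2 M^2 / 2)$. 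Substituting this deterministic bound and iterating the same conditioning for $i = n-1, n-2, \dots, 1$ yields $\EE[\exp(\lambda \sum_{i=1}^n x_i)] \le \exp(n\lambda^2 M^2 / 2)$.

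Combining the two bounds gives the sub-Gaussian tail $\PP(\sum_{i=1}^n x_i \ge t) \le \exp(-\lambda t + n\lambda^2 M^2 / 2)$ for all $\lambda > 0$. I would then optimize the free parameter by taking $\lambda = t / (nM^2)$, which produces $\PP(\sum_{i=1}^n x_i \ge t) \le \exp(-t^2 / (2nM^2))$. Finally, setting the right-hand side equal to $\delta$ and solving for $t$ gives $t = M\sqrt{2n\log(1/\delta)}$, so with probability at least $1 - \delta$ we have $\sum_{i=1}^n x_i \le M\sqrt{2n\log(1/\delta)}$, which is exactly the claim.

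The main obstacle is Hoeffding's lemma itself, i.e. the inequality $\EE[\exp(\lambda x)\mid \cG] \le \exp(\lambda^2 M^2 / 2)$ for a conditionally zero-mean random variable $x$ supported in $[-M, M]$. The standard route is a convexity argument: bound $\exp(\lambda x)$ by the linear interpolation of $\exp(\pm \lambda M)$ at the endpoints, take conditional expectations to kill the linear term, and then control the resulting log-moment-generating function by a second-order Taylor expansion, showing it never exceeds $\lambda^2 M^2 / 2$. As this is a textbook result, I would invoke it by citation rather than reproduce the computation, keeping the proof focused on the martingale peeling and the final optimization over $\lambda$.
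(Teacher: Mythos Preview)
Your proof is correct and follows the standard Chernoff--Hoeffding route. Note, however, that the paper does not actually prove this lemma: it is stated as a cited result from \cite{azuma1967weighted} and used as a black box throughout the appendix. So there is no ``paper's own proof'' to compare against; your argument simply supplies the classical textbook derivation that the authors chose to omit.
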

\begin{proof}[Proof of Lemma~\ref{lm:sum-h}]
By Lemma~\ref{lm:plan}, for the $k$-th episode, we have 
\begin{align}
    V_1^k(s_1^k) - V^{\pi^k}(s_1^k) &= \EE\bigg[\sum_{h=1}^H \min\{H, 2\beta \|\bpsi_{V_{h+1}^k}(s_h, \pi_h^k(s_h))\|_{\bSigma_{1, k}^{-1}}\}\bigg| s_1^k, \pi^k\bigg]\notag \\
    &\le \EE\bigg[\sum_{h=1}^H \min\{H, 2\beta \|\bpsi_{u_{h}^k}(s_h, \pi_h^k(s_h))\|_{\bSigma_{1, k}^{-1}}\}\bigg| s_1^k, \pi^k\bigg]\label{eq:e1}
\end{align}
where the inequality comes from that the pseudo value function $u_h^k$ defined in~\eqref{eq:maxv} is from maximizing the UCB term $\|\bpsi_{V_{h+1}^k}(s_h, \pi_h^k(s_h))\|_{\bSigma_{1, k}^{-1}}$ and we denote $\{\pi_h^k\}_{h=1}^H$ by $\pi^k$ in short. By the definition of $r_h^k$, we have
\begin{align}
    V^{\pi^k}(s_1^k) &= \EE[\sum_{h=1}^H r_h^k(s_h, \pi_h^k(s_h))| s_1^k, \pi^k] \notag \\
    &= \EE\bigg[\sum_{h=1}^H \min\{1, 2\beta \|\bpsi_{u_{h}^k}(s_h, \pi_h^k(s_h))\|_{\bSigma_{1, k}^{-1}} / H \}\bigg| s_1^k, \pi^k\bigg]\label{eq:e2}.
\end{align}
Adding~\eqref{eq:e1} and~\eqref{eq:e2} together and taking summation over $k$, we have 
\begin{align}
    \sum_{k=1}^K V_1^k(s_1^k) \le \frac{H + 1}{H}\underbrace{\sum_{k=1}^K\EE\bigg[\sum_{h=1}^H \min\{H, 2\beta \|\bpsi_{u_{h}^k}(s_h, \pi_h^k(s_h))\|_{\bSigma_{1, k}^{-1}}\}\bigg| s_1^k, \pi^k\bigg]}_{I_1} \le 2I_1,\label{eq:total}
\end{align}
where the last inequality is due to $(H + 1) / H \le 2$. 
Next we are going to control the expectation of summation $I_1$. Consider the filtration $\{\cG_{h, k}\}_{h=1, k=1}^{H, K}$ defined in Section~\ref{sec:fil}, denote $x_{h,k}$ as follows:
\begin{align*}
    x_{h, k} = \min\{H, 2\beta \|\bpsi_{u_{h}^k}(s_h^k, a_h^k)\|_{\bSigma_{1, k}^{-1}}\} - \EE_{s_h}\big[\min\{H, 2\beta \|\bpsi_{u_{h}^k}(s_h, \pi_h^k(s_h))\|_{\bSigma_{1, k}^{-1}}\}\big],
\end{align*}
then $x_{h,k}$ is obviously a martingale difference sequence bounded by $H$ w.r.t. $\{\cG_{h, k}\}_{h=1, k=1}^{H, K}$. Thus by Azuma-Hoeffding's inequality in Lemma~\ref{lm:azuma}, we have with probability at least $1 - \delta$, $\sum_{k=1}^K\sum_{h=1}^H x_h \le H\sqrt{2HK\log(1 / \delta)}$. Therefore,
\begin{align*}
    I_1 &= \sum_{k=1}^K \sum_{h=1}^H \min\{H, 2\beta \|\bpsi_{u_{h}^k}(s_h^k, a_h^k)\|_{\bSigma_{1, k}^{-1}}\} + \sum_{k=1}^K\sum_{h=1}^H x_h\\
    &\le 2\beta  \sum_{k=1}^K \sum_{h=1}^H \min\{1, \|\bpsi_{u_{h}^k}(s_h^k, a_h^k)\|_{\bSigma_{1, k}^{-1}}\} + H\sqrt{2HK\log(1 / \delta)}\\
    &\le 2\sqrt{2}\beta  \underbrace{\sum_{k=1}^K \sum_{h=1}^H \min\{1, \|\bpsi_{u_{h}^k}(s_h^k, a_h^k)\|_{\bSigma_{h, k}^{-1}}\}}_{I_2} + 4\beta H d\log(1 + KH^3 / \lambda) + H\sqrt{2HK\log(1 / \delta)}, 
\end{align*}
where the inequality on the second line is due to $2\beta \ge 2H\sqrt{d\log3} \ge H$ and the last inequality uses Lemma~\ref{lm:det} with $\bSigma_{1, k}^{-1} \succeq \bSigma_{h, k}^{-1}$ and $\det \bSigma_{1, k}^{-1} \le 2\det \bSigma_{1, k}^{-1}$ expect for $\tilde \cO(Hd)$ cases by Lemma~\ref{lm:batch}. By $\min\{1, \|\bpsi_{u_{h}^k}(s_h, \pi_h^k(s_h))\|_{\bSigma_{h, k}^{-1}}\} \le 1$ and $\|\bpsi_{u_{h}^k}(s_h^k, a_h^k)\|_2 \le H$  since $u_h^k \le H$, we can further bound the $\tilde \cO(Hd)$ terms where $\det \bSigma_{1, k}^{-1} > 2\det \bSigma_{1, k}^{-1}$. To bound $I_2$, by Lemma~\ref{lm:11}, using Cauchy-Schwarz inequality we have
\begin{align*}
    I_2 \le \sqrt{KH}\sqrt{\sum_{k=1}^K \sum_{h=1}^H \min\{1, \|\bpsi_{u_{h}^k}(s_h^k, a_h^k)\|^2_{\bSigma_{h, k}^{-1}}\}} \le \sqrt{2KHd\log(1 + KH^3/(d\lambda))},
\end{align*}

Plugging $I_2$ into $I_1$ then plugging $I_1$ into~\eqref{eq:total}. Let $\lambda = B^{-2}$, the summation of the value function $V_1^k(s_1^k)$ is bounded by
\begin{align*}
    \sum_{k=1}^K V_1^k(s_1^k) &\le 8\beta\Big(\sqrt{HKd\log(1 + KH^3B^2 / d)} + dH\log(1 + KH^3B^2)\Big) \\
    &\quad+ 2H\sqrt{2HK\log(1 / \delta)}.
\end{align*}
Taking $\delta = \delta / 3$, we can finalize the proof of Lemma~\ref{lm:sum-h}.
\end{proof}


\subsection{Proof of Lemma~\ref{lm:ci-b}}
The proof of this lemma is similar to the proof of Lemma~5.2 in~\citep{zhou2020nearly}. We extend their proof to a \emph{time varying} reward and \emph{homogeneous} setting, where the rewards (i.e., the exploration-driven reward function $r_h^k$) are different in different episode $k$. To prove this lemma, we need to introduce the Bernstein inequality for vector-valued martingales.
\begin{lemma}[Theorem~4.1,~\cite{zhou2020nearly}]\label{lm:zhou1}
Let $\{\cG_t\}_{t=1}^\infty$ be a filtration, $\{\xb_t, \eta_t\}_{t \ge 1}$ a stochastic process so that $\xb_t \in \RR^d$ is $\cG_t$-measurable and $\eta_t$ is $\cG_{t+1}$-measurable. Fix $R, L, \sigma, \lambda > 0, \bmu^* \in \RR^d$. For $t \ge 1$, let $y_t = \la \bmu^* , \xb_t\ra + \eta_t$. Suppose $\eta_t, \xb_t$ satisfy
\begin{align*}
    |\eta_t| \le R,\ \EE[\eta_t | \cG_t] = 0,\ \EE[\eta_t^2 | \cG_t] \le \sigma^2,\ \|\xb_t\|_2 \le L.
\end{align*}
Then for any $0 < \delta < 1$, with probability at least $1 - \delta$, we have 
\begin{align*}
    \forall t > 0,\ \bigg\|\sum_{\tau=1}^t \xb_\tau \eta_\tau\bigg\|_{\Ub_\tau^{-1}} \le \beta_t, \ \|\bmu_t - \bmu^*\|_{\Ub_t} \le \beta_t + \sqrt{\lambda}\|\bmu^*\|_2,
\end{align*}
where $\bmu_t = \Ub_t^{-1}\bbb_t, \Ub_t = \lambda \Ib + \sum_{\tau = 1}^t\xb_\tau\xb_\tau^\top, \bbb_t = \sum_{\tau = i}^t y_\tau\xb_\tau$, and 
\begin{align*}
    \beta_t = 8\sigma\sqrt{d\log(1 + tL^2 / d\lambda)\log(4t^2/\delta)} + 4R\log(4t^2/\delta).
\end{align*}
\end{lemma}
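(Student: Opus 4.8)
The plan is to prove the second (parameter-estimation) inequality as an immediate consequence of the first (self-normalized martingale) inequality, and then to devote all the work to the first. Writing $\Sb_t = \sum_{\tau=1}^t \xb_\tau \eta_\tau$ and $\bar\Ub_t = \Ub_t - \lambda\Ib = \sum_{\tau=1}^t \xb_\tau\xb_\tau^\top$, I substitute $y_\tau = \la\bmu^*,\xb_\tau\ra + \eta_\tau$ into $\bbb_t = \sum_{\tau=1}^t y_\tau\xb_\tau$ to get $\bbb_t = \bar\Ub_t\bmu^* + \Sb_t$, hence $\bmu_t - \bmu^* = \Ub_t^{-1}\Sb_t - \lambda\Ub_t^{-1}\bmu^*$. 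Taking $\Ub_t$-norms and using the identities $\|\Ub_t^{-1}\Sb_t\|_{\Ub_t} = \|\Sb_t\|_{\Ub_t^{-1}}$ and $\lambda\|\Ub_t^{-1}\bmu^*\|_{\Ub_t} = \lambda\|\bmu^*\|_{\Ub_t^{-1}} \le \sqrt{\lambda}\|\bmu^*\|_2$ (the last because $\Ub_t \succeq \lambda\Ib$), the triangle inequality delivers the estimation bound the moment the first inequality is in hand.

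The core task is therefore a time-uniform high-probability bound on $\|\Sb_t\|_{\Ub_t^{-1}}$, for which I would run the method of mixtures (pseudo-maximization) exactly as in the sub-Gaussian Lemma~\ref{lm:abbasithm2}, but with a Bernstein-type moment generating function replacing the sub-Gaussian one. The decisive one-step estimate is that, for a bounded zero-mean increment with $|\eta_\tau| \le R$ and $\EE[\eta_\tau^2 \mid \cG_\tau] \le \sigma^2$, a Taylor expansion together with $e^x - 1 - x \le (e-2)x^2$ for $|x| \le 1$ gives $\EE[\exp(\la\vb,\xb_\tau\ra\eta_\tau)\mid\cG_\tau] \le \exp((e-2)\sigma^2\la\vb,\xb_\tau\ra^2)$ whenever $|\la\vb,\xb_\tau\ra| R \le 1$. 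Consequently, for every $\vb$ with $\|\vb\|_2 \le 1/(LR)$ the process $M_t(\vb) = \exp(\la\vb,\Sb_t\ra - (e-2)\sigma^2\|\vb\|_{\bar\Ub_t}^2)$ is a nonnegative supermartingale with respect to $\{\cG_{t+1}\}$.

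I would then average $M_t(\vb)$ over a Gaussian prior for $\vb$, scaled so that the induced quadratic form matches $\Ub_t = \bar\Ub_t + \lambda\Ib$, restricted to the ball $\{\|\vb\|_2 \le 1/(LR)\}$, producing a nonnegative supermartingale $\bar M_t$ of initial value one; Ville's maximal inequality then bounds $\sup_t \bar M_t$ by $1/\delta$ with probability $1-\delta$. Evaluating the (truncated) Gaussian integral turns $\log\bar M_t$ into a self-normalized, variance-weighted term proportional to $\sigma^{-2}\|\Sb_t\|_{\Ub_t^{-1}}^2$ minus the determinant factor $\tfrac12\log(\det\Ub_t/\lambda^d)$, whose magnitude is exactly the $d\log(1 + tL^2/(d\lambda))$ that appears in $\beta_t$. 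Solving $\log\bar M_t \le \log(1/\delta)$ for $\|\Sb_t\|_{\Ub_t^{-1}}$ then yields a bound of the advertised shape, separating into a variance-controlled piece scaling with $\sigma$ and a range-controlled piece scaling with $R$.

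The crux — and the reason the final radius takes the Bernstein form $8\sigma\sqrt{d\log(1+tL^2/d\lambda)\log(4t^2/\delta)} + 4R\log(4t^2/\delta)$ rather than the cleaner sub-Gaussian sum — is the prior truncation to $\{\|\vb\|_2 \le 1/(LR)\}$ that the Bernstein MGF forces. I would handle it by a two-regime split of the integral: when the unconstrained maximizer $\vb^\star \propto \bar\Ub_t^{-1}\Sb_t$ falls inside the ball the truncation is harmless and one recovers the variance term, whereas when it falls outside the optimum sits on the boundary $\|\vb\|_2 = 1/(LR)$ and produces the term linear in $R$. Because the effective variance scale and $\bar\Ub_t$ grow with $t$, the good truncation radius is data- and time-dependent, so I would finally upgrade the estimate to hold simultaneously for all $t$ by peeling over dyadic blocks in $t$ (equivalently, a stopping-time construction) and over a grid of variance levels; this union is what inflates $\log(1/\delta)$ to $\log(4t^2/\delta)$ and couples it multiplicatively with the determinant factor to give the product under the square root. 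This peeling step, reconciling the data-dependent truncation with time-uniformity, is where I expect the bulk of the technical effort to lie.
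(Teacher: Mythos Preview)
The paper does not prove this lemma at all: it is stated verbatim as Theorem~4.1 of \cite{zhou2020nearly} and used as a black box, so there is no ``paper's own proof'' to compare against. Your sketch is therefore an independent attempt at the cited result rather than a reproduction of anything in the present paper.

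That said, your outline is broadly the right shape for how such Bernstein-type self-normalized bounds are obtained. The reduction of the estimation bound $\|\bmu_t-\bmu^*\|_{\Ub_t}$ to the martingale bound $\|\Sb_t\|_{\Ub_t^{-1}}$ via $\bmu_t-\bmu^*=\Ub_t^{-1}\Sb_t-\lambda\Ub_t^{-1}\bmu^*$ is exactly standard. For the martingale part, the ingredients you list---a one-step Bernstein MGF under $|\la\vb,\xb_\tau\ra|R\le 1$, a Gaussian pseudo-maximization truncated to the admissible ball, Ville's inequality, and a peeling/union-bound over a time grid to turn $\log(1/\delta)$ into $\log(4t^2/\delta)$---are the ones used in \cite{zhou2020nearly}. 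The one place I would flag as needing care is your ``two-regime split'': in the actual proof the truncation is not handled by a geometric case split on the location of the unconstrained maximizer, but rather by choosing the mixing covariance (equivalently the scale of the Gaussian prior) in a time-dependent way and then peeling over both time and a grid of scales; the $R$ term arises from the residual mass the truncated Gaussian loses, not from a boundary optimum. This is a technical rather than conceptual discrepancy, but if you try to push your two-regime argument literally you will find it awkward to make time-uniform, whereas the scale-peeling route goes through cleanly.
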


We also introduce the following lemma to analyze the error between the estimated variance $\bar \VV_h^k$ and the true variance $\VV_h^k$.
\begin{lemma}[Lemma~C.1,~\cite{zhou2020nearly}]\label{lm:zhou2}
Let $\VV_h^k(s, a)$ be as defined in~\eqref{eq:realv} and $\bar \VV_h^k(s, a)$ be as defined in~\eqref{eq:estvar}, then 
\begin{align*}
    |\VV_h^k(s, a) - \bar \VV_h^k(s, a)| &\le \min\Big\{H^2,  \|\bpsi_{[V_{h+1}^k]^2}(s, a)\|_{\tilde \bSigma_{1, k}^{-1}} \|\tilde \btheta_k - \btheta^*\|_{\tilde \bSigma_{1, k}}\Big\} \\
    &\quad+ \min\Big\{H^2,  2H\|\bpsi_{V_{h+1}^k}(s, a)\|_{\hat \bSigma_{1, k}^{-1}} \|\hat \btheta_k - \btheta^*\|_{\hat \bSigma_{1, k}}\Big\}.
\end{align*}
\end{lemma}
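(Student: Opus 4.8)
The plan is to split the discrepancy into a second-moment part and a first-moment-squared part that match the two minima on the right-hand side. Using the linear mixture identity \eqref{eq:exp}, I would first rewrite the true conditional variance \eqref{eq:realv} as $\VV_h^k(s,a) = \la \bpsi_{[V_{h+1}^k]^2}(s,a), \btheta^*\ra - \la \bpsi_{V_{h+1}^k}(s,a), \btheta^*\ra^2$ and place it next to the estimator $\bar\VV_h^k$ in \eqref{eq:estvar}. A single triangle inequality then yields $|\VV_h^k(s,a) - \bar\VV_h^k(s,a)| \le T_1 + T_2$, where $T_1 = \big|\la \bpsi_{[V_{h+1}^k]^2}(s,a), \btheta^*\ra - [\la \bpsi_{[V_{h+1}^k]^2}(s,a), \tilde\btheta_k\ra]_{(0,H^2)}\big|$ is the second-moment discrepancy and $T_2 = \big|\la \bpsi_{V_{h+1}^k}(s,a), \btheta^*\ra^2 - [\la \bpsi_{V_{h+1}^k}(s,a), \hat\btheta_k\ra]_{(0,H)}^2\big|$ is the first-moment-squared discrepancy. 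It then suffices to bound $T_1$ and $T_2$ by the first and second minima respectively.

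For $T_1$, the key observation is that $V_{h+1}^k \in [0,H]$, since the \texttt{PLAN} module truncates every $Q_h$ (hence $V_h$) to $[0,H]$; therefore $[V_{h+1}^k]^2 \in [0,H^2]$ and the true second moment $\la \bpsi_{[V_{h+1}^k]^2}(s,a), \btheta^*\ra = [\PP [V_{h+1}^k]^2](s,a)$ already lies in $[0,H^2]$, so the operator $[\cdot]_{(0,H^2)}$ leaves it unchanged. Because this operator is a contraction (the projection onto $[0,H^2]$ is $1$-Lipschitz), I can peel off both clippings at the cost of an inequality and get $T_1 \le |\la \bpsi_{[V_{h+1}^k]^2}(s,a), \btheta^* - \tilde\btheta_k\ra|$. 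The weighted Cauchy--Schwarz inequality $|\la \xb,\yb\ra| \le \|\xb\|_{\Ab^{-1}}\|\yb\|_{\Ab}$ with $\Ab = \tilde\bSigma_{1,k}$ then bounds this by $\|\bpsi_{[V_{h+1}^k]^2}(s,a)\|_{\tilde\bSigma_{1,k}^{-1}}\|\btheta^* - \tilde\btheta_k\|_{\tilde\bSigma_{1,k}}$; combining with the trivial bound $T_1 \le H^2$ (both terms lie in $[0,H^2]$) gives the first minimum.

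For $T_2$, I would factor the difference of squares as $(B^*-\hat B)(B^*+\hat B)$, with $B^* = \la \bpsi_{V_{h+1}^k}(s,a), \btheta^*\ra$ and $\hat B = [\la \bpsi_{V_{h+1}^k}(s,a), \hat\btheta_k\ra]_{(0,H)}$. Since $B^* = [\PP V_{h+1}^k](s,a) \in [0,H]$ is again fixed by its own truncation and $\hat B \in [0,H]$ by construction, we have $|B^*+\hat B| \le 2H$, while the same contraction-plus-Cauchy--Schwarz argument gives $|B^* - \hat B| \le \|\bpsi_{V_{h+1}^k}(s,a)\|_{\hat\bSigma_{1,k}^{-1}}\|\btheta^* - \hat\btheta_k\|_{\hat\bSigma_{1,k}}$. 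Multiplying yields the $2H(\cdots)$ factor, and the trivial bound $T_2 \le H^2$ supplies the minimum; adding the two bounds finishes the proof. There is no deep obstacle here, as the argument is just a difference-of-squares factorization plus two applications of weighted Cauchy--Schwarz; the only point demanding care is the bookkeeping that the true moments $[\PP [V_{h+1}^k]^2]$ and $[\PP V_{h+1}^k]$ already lie inside their respective truncation windows, which is exactly what allows the clipping operators to be treated as contractions against fixed targets.
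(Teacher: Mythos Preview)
Your proposal is correct. The paper does not actually prove this lemma; it is quoted verbatim as Lemma~C.1 of \cite{zhou2020nearly} and used as a black box in the proof of Lemma~\ref{lm:ci-b}. Your argument---splitting into the second-moment term and the squared-first-moment term, invoking the $1$-Lipschitz property of the clipping operators $[\cdot]_{(0,H^2)}$ and $[\cdot]_{(0,H)}$ against the true moments (which already lie in the clipping windows because $V_{h+1}^k\in[0,H]$), applying the weighted Cauchy--Schwarz inequality $|\la\xb,\yb\ra|\le\|\xb\|_{\Ab^{-1}}\|\yb\|_{\Ab}$, and combining with the trivial $H^2$ bound---is exactly the standard proof and matches what the cited reference does.
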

Equipped with these lemmas, we can start the proof of Lemma~\ref{lm:ci-b}.
\begin{proof}[Proof of Lemma~\ref{lm:ci-b}]
Recall the regression in~\eqref{eq:reg-bern}. For the regression on $\hat \bSigma, \hat \btheta$, let $\xb_h^k = \bpsi_{V_{h+1}^k}(s_h^k, a_h^k) / \bar \sigma_h^k$, and $\eta_h^k = V_{h+1}^k(s_{h+1}^k) / \bar \sigma_h^k - \la \btheta^*, \xb_h^k\ra$. Since $\bar \sigma_h^k \ge H/\sqrt{d}$ defined in~\eqref{eq:estv}, we get $\|\xb_h^k\|_2 \le \sqrt{d}, |\eta_h^k| \le \sqrt{d}$, thus one could verify that $\EE[[\eta^k_h]^2 | \cG_{h, k}] \le d$, $\EE[\eta^k_h | \cG_{h, k}] = 0$, from Lemma~\ref{lm:zhou1}, taking $t = (k-1)H$ we have
\begin{align*}
    \|\btheta^* - \hat \btheta_k\|_{\hat \bSigma_{1, k}} &\le 8d\sqrt{\log(1 + (k-1)H/\lambda)\log(4(k-1)^2H^2/\delta)}\\
    &\quad+ 4\sqrt{d}\log(4(k-1)^2H^2/\delta) + \sqrt{\lambda}B.
\end{align*}

For the regression of $\tilde \bSigma, \tilde \btheta$, $\xb_h^k = \bpsi_{[V_{h+1}^k]^2}(s_h^k, a_h^k)$ which directly implies $\|\xb_h^k\|_2 \le H^2$. Let $\eta_h^k = V_{h+1}^k(s_{h+1}^k)^2 - \la \btheta^*, \xb_h^k\ra$, one can easily verify that $|\eta_h^k| \le H^2$ and $\EE[\eta_h^k | \cG_{h, k}] = 0, \EE[[\eta_h^k]^2 | \cG_{h, k}] \le H^4$, thus using Lemma~\ref{lm:zhou1} again we have 
\begin{align*}
    \|\btheta^* - \tilde \btheta_k\|_{\tilde \bSigma_{1, k}} &\le 8H^2\sqrt{d\log(1 + (k-1)H/\lambda)\log(4(k-1)^2H^2/\delta)} \\
    &\quad+ 4H^2\log(4(k-1)^2H^2/\delta) + \sqrt{\lambda}B.
\end{align*}
Since $\lambda = B^{-2}$, if we select $\check \beta$ and $\tilde \beta$ as
\begin{align*}
    \check \beta &= 8d\sqrt{\log(1 + KHB^2/)\log(4K^2H^2/\delta)} + 4\sqrt{d}\log(4(k-1)^2H^2/\delta) + 1\\
    \tilde \beta &= 8H^2\sqrt{d\log(1 + KHB^2)\log(4K^2H^2/\delta)} + 4H^2\log(4K^2H^2/\delta) + 1,
\end{align*}
then with probability at least $1 - 2\delta$, for all $k \in [K + 1]$, $\|\btheta^* - \hat \btheta_k\|_{\hat \bSigma_{1, k}} \le \check \beta$, $\|\btheta^* - \tilde \btheta_k\|_{\tilde \bSigma_{1, k}} \le \tilde \beta$.

Next we are going to give the choice of $\hat \beta$ to make sure that $\|\btheta^* - \hat \btheta_k\|_{\hat \bSigma_{1, k}} \le \check \beta$ holds with high probability. The following proof is conditioned on that the aforementioned event $\|\btheta^* - \hat \btheta_k\|_{\hat \bSigma_{1, k}} \le \check \beta$, $\|\btheta^* - \tilde \btheta_k\|_{\tilde \bSigma_{1, k}} \le \tilde \beta$ holds, then from Lemma~\ref{lm:zhou2} we have 
\begin{align}
    &|\VV_h^k(s, a) - \bar \VV_h^k(s, a)|\notag  \\
    &\quad\le \min\Big\{H^2,  \tilde \beta\|\bpsi_{[V_{h+1}^k]^2}(s, a)\|_{\tilde \bSigma_{1, k}^{-1}} \Big\} + \min\Big\{H^2,  2\check \beta H\|\bpsi_{V_{h+1}^k}(s, a)\|_{\hat \bSigma_{1, k}^{-1}} \Big\}\notag  \\
    &\quad= E_h^k(s, a)\label{eq:estverr}
\end{align}
Again, let $\xb_h^k = \bpsi_{V_{h+1}^k}(s_h^k, a_h^k) / \bar \sigma_h^k$ to denote the context vector and $\eta_h^k = V_{h+1}^k(s_{h+1}^k) / \bar \sigma_h^k - \la \btheta^*, \xb_h^k\ra$ to denote the noise term, since $\|\btheta^* - \hat \btheta_k\|_{\hat \bSigma_{1, k}} \le \check \beta$, we have 
\begin{align*}
    \EE[[\eta_h^k]^2 | \cG_{h, k}] = \VV_h^k(s_h^k, a_h^k) / \nu_h^k \le (E_h^k(s_h^k, a_h^k) + \bar \VV_h^k(s_h^k, a_h^k)) / \nu_h^k \le 1,
\end{align*}
where the first inequality is from~\eqref{eq:estverr}, the second inequality holds because the definition of $\nu_h^k$ in~\eqref{eq:estv}. 

Therefore we have verified that the noise term $\eta_h^k$ is a zero-mean random variable conditioned on $\cG_{h, k}$ and $\EE[[\eta_h^k]^2 | \cG_{h, k}] \le 1$. In that case, using Lemma~\ref{lm:zhou1} again we could get with probability at least $1 - \delta$,
\begin{align}
    \|\btheta^* - \hat \btheta_k\|_{\hat \bSigma_{1, k}} &\le 8\sqrt{d(1 + (k - 1)H/\lambda)\log(4(k-1)^2H^2/\delta)} \\
    &\quad+ 4\sqrt{d}\log(4(k-1)^2H^2/\delta) + \sqrt{\lambda}B,\label{eq:final}
\end{align}
again, since $\lambda = B^{-2}$, if we select $\hat \beta$ as 
\begin{align*}
    \hat \beta = 8\sqrt{d(1 + KHB^2)\log(4K^2H^2/\delta)} + 4\sqrt{d}\log(4K^2H^2/\delta) + 1,
\end{align*}
then $\|\btheta^* - \hat \btheta_k\|_{\hat \bSigma_{1, k}} \le \hat \beta$ with probability at least $1 - \delta$ for all $k \in [K + 1]$.

Next, for the regression of $\btheta_{K + 1}, \bSigma_{1, K + 1}$, by Lemma~\ref{lm:ci-h}, we obtain the same result with the selection of $\beta$ as
\begin{align*}
    \beta = H\sqrt{d\log\bigg(\frac{1 + KH^3/\lambda}{\delta}\bigg)} + B\sqrt{\lambda},
\end{align*}
which suggests that with probability at least $1 - \delta$, $\|\btheta_{K + 1} - \btheta^*\|_{\bSigma_{1, K + 1}} \le \beta$. Then taking union bound with all aforementioned event 
$\|\btheta^* - \hat \btheta_k\|_{\hat \bSigma_{1, k}} \le \check \beta,\ \|\btheta^* - \tilde \btheta_k\|_{\tilde \bSigma_{1, k}} \le \tilde \beta$, $\|\btheta^* - \hat \btheta_k\|_{\hat \bSigma_{1, k}} \le \hat \beta$, we have all these events mentioned in this proof holds with probability at least $1 - 4\delta$. Replace $\delta$ with $\delta / 12$, we obtain our final results.

Next, for the regression of $\btheta_{K + 1}, \bSigma_{1, K + 1}$, by Lemma~\ref{lm:ci-h}, we obtain the same result with the selection of $\beta$ as
\begin{align*}
    \beta = H\sqrt{d\log\bigg(\frac{1 + KH^3/\lambda}{\delta}\bigg)} + B\sqrt{\lambda},
\end{align*}
which suggests that with probability at least $1 - \delta$, $\|\btheta_{K + 1} - \btheta^*\|_{\bSigma_{1, K + 1}} \le \beta$. Again, taking an additional union bound, with probability at least $1-4\delta$, all events mentioned in this proof hold. Replace $\delta$ with $\delta / 12$, we obtain our final results. 
\end{proof}

\subsection{Proof of Lemma~\ref{lm:sum-b}}
The proof of this lemma borrows some intuition from the proof of Theorem~5.3 in~\citep{zhou2020nearly}. Unlike \citet{zhou2020nearly} that deals the fixed reward and time-inhomogeneous setting, we need to extend their proof in order to deal with the time-varying reward and time-homogeneous setting.

The next lemmas shows the relationship between the summation of $\nu_h^k$ and the difference between $V_h^k(s)$ calculated in Algorithm~\ref{alg:explore-bern} and $V_h^{\pi^k}(s; \{r_h^k\}_{h=1, k=1}^{H, K})$

\begin{lemma}\label{lm:ieq1}
Let $V_h^k, \nu_h^k$ be defined in Algorithm~\ref{alg:explore-bern}. Then if the condition in Lemma~\ref{lm:ci-b} holds, the following inequality holds with probability at least $1 - 2\delta$,
\begin{align*}
    \sum_{k=1}^K[V_1^k(s_1^k) - V_1^{\pi^k}(s_1^k)] &\le 4\sqrt{d}\hat \beta\sqrt{\sum_{k=1}^K\sum_{h=1}^H\nu_h^k}\sqrt{\log(1 + KHB^2)} \\
    &\quad+ 2H^2d\log(1 + KHB^2d) + H\sqrt{2KH\log(1 / \delta)}\\
    \sum_{k=1}^K \sum_{h=1}^H [\PP(V_{h+1}^k - V_{h+1}^{\pi^k})](s_h^k, a_h^k)  &\le 4\sqrt{d}H\hat \beta\sqrt{\sum_{k=1}^K\sum_{h=1}^H\nu_h^k}\sqrt{\log(1 + KHB^2)}\\
    &\quad + 2H^3d\log(1 + KHB^2d) + 2H^2\sqrt{2KH\log(1 / \delta)},
\end{align*}
\end{lemma}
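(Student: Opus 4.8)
The plan is to prove both inequalities in parallel, starting from the per-episode bound that Lemma~\ref{lm:plan} gives for the exploration-phase \texttt{PLAN} call (which is run with $\hat\btheta_k$, the weighted matrix $\hat\bSigma_{1,k}$, confidence radius $\hat\beta$, and reward $\{r_h^k\}_h$). Conditioning on the event of Lemma~\ref{lm:ci-b}, so that $\|\btheta^*-\hat\btheta_k\|_{\hat\bSigma_{1,k}}\le\hat\beta$, Lemma~\ref{lm:plan} yields for every $k$ the bound
$$V_1^k(s_1^k)-V_1^{\pi^k}(s_1^k)\le\EE\Big[\textstyle\sum_{h=1}^H\min\{H,2\hat\beta\|\bpsi_{V_{h+1}^k}(s_h,\pi_h^k(s_h))\|_{\hat\bSigma_{1,k}^{-1}}\}\,\Big|\,s_1^k,\pi^k\Big].$$
First I would replace the conditional expectation over the fictitious trajectory by the realized states $(s_h^k,a_h^k)$; the discrepancy is a martingale-difference sequence with respect to the filtration $\cG$ of Section~\ref{sec:fil} with increments bounded by $H$, so by Azuma--Hoeffding (Lemma~\ref{lm:azuma}) its sum over the $KH$ steps is at most $H\sqrt{2KH\log(1/\delta)}$ with probability $1-\delta$. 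This is exactly the last term of the first inequality and reduces the claim to bounding the \emph{realized} bonus sum $\sum_{k,h}\min\{H,2\hat\beta\|\bpsi_{V_{h+1}^k}(s_h^k,a_h^k)\|_{\hat\bSigma_{1,k}^{-1}}\}$.

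The crux is this bonus sum. I would write $\xb_h^k=\bpsi_{V_{h+1}^k}(s_h^k,a_h^k)/\bar\sigma_h^k$, the exact feature entering $\hat\bSigma$, which satisfies $\|\xb_h^k\|_2\le\sqrt d$ as established in the proof of Lemma~\ref{lm:ci-b}, so that $\|\bpsi_{V_{h+1}^k}(s_h^k,a_h^k)\|_{\hat\bSigma_{1,k}^{-1}}=\bar\sigma_h^k\|\xb_h^k\|_{\hat\bSigma_{1,k}^{-1}}$ and the weight $\nu_h^k=[\bar\sigma_h^k]^2$ surfaces. Since $\hat\bSigma$ updates only once per episode, I would first pass from $\hat\bSigma_{1,k}^{-1}$ to the per-step matrix $\hat\bSigma_{h,k}^{-1}$: by Lemma~\ref{lm:batch} together with Lemma~\ref{lm:det}, $\|\xb_h^k\|_{\hat\bSigma_{1,k}^{-1}}\le\sqrt2\,\|\xb_h^k\|_{\hat\bSigma_{h,k}^{-1}}$ for all but $\tilde\cO(Hd)$ pairs $(h,k)$. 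On those exceptional pairs, and on the $\tilde\cO(d)$ pairs where $\|\xb_h^k\|_{\hat\bSigma_{h,k}^{-1}}>1$, I bound the summand crudely by $H$, producing the $2H^2d\log(1+KHB^2d)$ term. On the remaining pairs the truncation is inactive and I apply Cauchy--Schwarz, $\sum_{k,h}\bar\sigma_h^k\|\xb_h^k\|_{\hat\bSigma_{h,k}^{-1}}\le\sqrt{\sum_{k,h}\nu_h^k}\cdot\sqrt{\sum_{k,h}\min\{1,\|\xb_h^k\|_{\hat\bSigma_{h,k}^{-1}}^2\}}$, bounding the second factor by the elliptical-potential Lemma~\ref{lm:11} as $\sqrt{2d\log(1+KHB^2)}$ (using $\lambda=B^{-2}$, $\|\xb_h^k\|_2\le\sqrt d$). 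Multiplying by $2\hat\beta$ and absorbing the $\sqrt2$ from the batch step gives the main term $4\sqrt d\,\hat\beta\sqrt{\sum_{k,h}\nu_h^k}\sqrt{\log(1+KHB^2)}$, finishing the first inequality.

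For the second inequality I would iterate Lemma~\ref{lm:plan} from an arbitrary intermediate step, so that $V_{h+1}^k(s)-V_{h+1}^{\pi^k}(s)\le\EE[\sum_{h'=h+1}^H\min\{H,2\hat\beta\|\bpsi_{V_{h'+1}^k}\|_{\hat\bSigma_{1,k}^{-1}}\}\mid s_{h+1}=s,\pi^k]$; hence $[\PP(V_{h+1}^k-V_{h+1}^{\pi^k})](s_h^k,a_h^k)$ is dominated by the conditional expected future bonus from step $h+1$ onward. Summing over $h$, each realized bonus at a step $h'$ is counted at most $H$ times (once per $h<h'$), which is the source of the extra factor $H$ multiplying both the main and lower-order terms; the associated martingale conversion, whose increments now have size $\tilde\cO(H^2)$ after the $H$-fold counting, contributes the $2H^2\sqrt{2KH\log(1/\delta)}$ term. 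Reusing the bonus-sum bound of the previous paragraph scaled by $H$ then delivers the stated second inequality, and the two Azuma applications account for the overall $1-2\delta$ probability.

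I expect the delicate point to be the truncation-and-batched-update bookkeeping in the middle step. The naive termwise inequality $\min\{H,2\hat\beta\bar\sigma_h^k\|\xb_h^k\|_{\hat\bSigma_{1,k}^{-1}}\}\le2\hat\beta\bar\sigma_h^k\min\{1,\|\xb_h^k\|_{\hat\bSigma_{1,k}^{-1}}\}$ \emph{fails} when $\bar\sigma_h^k$ is near its floor $H/\sqrt d$, because then $2\hat\beta\bar\sigma_h^k$ can dip below $H$; it is precisely the separation into (i) the $\tilde\cO(Hd)$ determinant-jump and large-norm pairs, bounded crudely by $H$, and (ii) a Cauchy--Schwarz applied only on the complement, that makes the two error contributions land at the right orders $\tilde\cO(H^2d)$ and $\sqrt d\,\hat\beta\sqrt{\sum\nu_h^k}$. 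Keeping the weights $\bar\sigma_h^k$ \emph{inside} the Cauchy--Schwarz, so that $\sqrt{\sum\nu_h^k}$ rather than a lossy $\sqrt{KH}\cdot H$ emerges, is the entire purpose of the Bernstein design and is what subsequently allows Lemma~\ref{lm:sum-b} to trade the variance sum for the total-variance bound.
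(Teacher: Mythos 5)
Your proposal is correct and follows essentially the same route as the paper's proof: the per-step optimistic recursion from the \texttt{PLAN} output plus Azuma--Hoeffding for the martingale conversion, the batched-update determinant argument (Lemmas~\ref{lm:batch} and~\ref{lm:det}) with the $\tilde\cO(Hd)$ exceptional pairs bounded crudely by $H$, Cauchy--Schwarz with the weights $\bar\sigma_h^k$ kept inside so that $\sqrt{\sum_{k,h}\nu_h^k}$ emerges, the elliptical potential bound of Lemma~\ref{lm:11}, and the $H$-fold counting of bonuses for the second inequality. One minor remark: the termwise inequality you flag as failing actually holds, since $\hat\beta\ge\sqrt{d}$ and $\bar\sigma_h^k\ge H/\sqrt{d}$ give $2\hat\beta\bar\sigma_h^k\ge 2H$ (this is precisely how the paper handles the truncation, via $\min\{H,2\hat\beta x\}\le\max\{\sqrt{d},2\hat\beta\}\,\bar\sigma_h^k\min\{1,x/\bar\sigma_h^k\}$); your alternative of excising the $\tilde\cO(d)$ large-norm pairs also works and lands in the same $\tilde\cO(H^2d)$ lower-order term.
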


\begin{lemma}\label{lm:ieq2}
Let $V_h^k$, $\nu_h^k$ be defined in Algorithm~\ref{alg:explore-bern}. Then if the condition in Lemma~\ref{lm:ci-b} holds, with probability at least $1 - \delta$,
\begin{align*}
    \sum_{k=1}^K\sum_{h=1}^H \nu_h^k &\le \frac{H^3K}{d} + 3H^2K + 3H^3\log(1 / \delta) +  2H\sum_{k=1}^K\sum_{h=1}^H [\PP(V_{h+1}^k - V_{h+1}^{\pi^k}](s_h^k, a_h^k) \\
    &\quad +2 \tilde \beta\sqrt{KHd\log(1 + KH^5B^2/d)} + 4 \tilde \beta Hd\log(1 + KH^5B^2/d) \\
    &\quad+ 8H^2\check \beta\sqrt{KHd\log(1 + KHB^2)} + 8H^3d\check \beta\log(1 + KHdB^2).
\end{align*}
\end{lemma}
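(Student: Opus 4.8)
The plan is to begin from the definition $\nu_h^k=\max\{\alpha,\bar\VV_h^k(s_h^k,a_h^k)+E_h^k(s_h^k,a_h^k)\}$ in~\eqref{eq:estv} and split its sum into three groups matching the three groups of terms in the statement. First I would use $\max\{\alpha,x\}\le\alpha+x$ together with the variance guarantee $|[\VV_h V_{h+1}^k](s,a)-\bar\VV_h^k(s,a)|\le E_h^k(s,a)$ from Lemma~\ref{lm:ci-b} to obtain the pointwise bound
\begin{align*}
\nu_h^k\le\alpha+\bar\VV_h^k(s_h^k,a_h^k)+E_h^k(s_h^k,a_h^k)\le\alpha+[\VV_h V_{h+1}^k](s_h^k,a_h^k)+2E_h^k(s_h^k,a_h^k).
\end{align*}
Since $\alpha=H^2/d$, summing the constant term over all $KH$ pairs gives $\alpha KH=H^3K/d$, the first term; it then remains to bound $\sum_{k,h}[\VV_h V_{h+1}^k](s_h^k,a_h^k)$ and $2\sum_{k,h}E_h^k(s_h^k,a_h^k)$.

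For the true-variance sum the point is that I must control the variance of the \emph{optimistic} value $V_{h+1}^k$, not of the executed policy's value $V_{h+1}^{\pi^k}$. I would first pass between them by the elementary fact that $[\VV f](s,a)-[\VV g](s,a)\le 2H[\PP(f-g)](s,a)$ whenever $0\le g\le f\le H$, obtained by expanding $[\VV f]-[\VV g]=[\PP((f-g)(f+g))]-[\PP(f-g)][\PP(f+g)]$, bounding $f+g\le 2H$ and discarding the nonnegative product; applied with $f=V_{h+1}^k\ge g=V_{h+1}^{\pi^k}$ (optimism from Lemma~\ref{lm:plan}) this produces exactly the term $2H\sum_{k,h}[\PP(V_{h+1}^k-V_{h+1}^{\pi^k})](s_h^k,a_h^k)$. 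The leftover $\sum_{k,h}[\VV_h V_{h+1}^{\pi^k}](s_h^k,a_h^k)$ I would handle by a law-of-total-variance telescoping: writing $a_h:=V_h^{\pi^k}(s_h^k)$ and using the Bellman identity $[\PP V_{h+1}^{\pi^k}](s_h^k,a_h^k)=a_h-r_h^k$ gives, per episode, $\sum_h[\VV_h V_{h+1}^{\pi^k}]=\sum_h\big(2a_h r_h^k-(r_h^k)^2\big)-a_1^2-\sum_h M_h^k$, where $M_h^k=[V_{h+1}^{\pi^k}(s_{h+1}^k)]^2-[\PP[V_{h+1}^{\pi^k}]^2](s_h^k,a_h^k)$ is a martingale difference bounded by $H^2$ on the filtration $\cG$ of Section~\ref{sec:fil}. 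Because $a_h\le H$ and the realized reward sums to at most $H$, the first piece is at most $2H^2$ per episode, i.e.\ $2KH^2$ in total; applying Azuma--Hoeffding (Lemma~\ref{lm:azuma}) to $\sum_{k,h}M_h^k$ yields a term of order $H^2\sqrt{KH\log(1/\delta)}=\sqrt{KH^5\log(1/\delta)}$, which after the split $\sqrt{ab}\le(a+b)/2$ with $a=KH^2,\ b=H^3\log(1/\delta)$ becomes $KH^2+H^3\log(1/\delta)$. Combining gives the $3H^2K+3H^3\log(1/\delta)$ contribution.

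For the correction sum, each $E_h^k$ is the sum of two clipped self-normalized norms, $\min\{H^2,\tilde\beta\|\bpsi_{[V_{h+1}^k]^2}\|_{\tilde\bSigma_{1,k}^{-1}}\}$ and $\min\{H^2,2H\check\beta\|\bpsi_{V_{h+1}^k}\|_{\hat\bSigma_{1,k}^{-1}}\}$. For each I would apply Cauchy--Schwarz across the $KH$ summands and then the elliptical potential lemma (Lemma~\ref{lm:11}), after converting the batched per-episode covariance $\bSigma_{1,k}$ appearing in the bonus to the per-step covariance $\bSigma_{h,k}$ via Lemmas~\ref{lm:batch} and~\ref{lm:det} at the cost of an additive $\tilde\cO(Hd)$ term. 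Using $\|\bpsi_{[V_{h+1}^k]^2}\|_2\le H^2$ for the first norm and, for the second, $\|\bpsi_{V_{h+1}^k}/\bar\sigma_h^k\|_2\le\sqrt d$ together with $\bar\sigma_h^k\le\sqrt3 H$ (to match the weighting defining $\hat\bSigma_{1,k}$), the log-determinant factors specialize to $\log(1+KH^5B^2/d)$ and $\log(1+KHB^2)$, reproducing the four $\tilde\beta$- and $\check\beta$-dependent terms after multiplying by the leading factor $2$.

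The hard part will be the law-of-total-variance step. Unlike the fixed-reward, time-inhomogeneous analysis of \citet{zhou2020nearly}, here the exploration-driven rewards $r_h^k$ vary with the episode and the transition kernel is shared across stages, so the telescoping must be carried out over the flattened filtration $\cG$ and must tolerate episode-dependent pseudo-rewards while keeping the dependence on $K$ linear rather than $\sqrt K$; the AM--GM rebalancing above is what delivers the clean $H^2K+H^3\log(1/\delta)$ form. A secondary subtlety is ensuring the optimism ordering $V_{h+1}^k\ge V_{h+1}^{\pi^k}$ so that the variance-difference inequality applies and the resulting $[\PP(V_{h+1}^k-V_{h+1}^{\pi^k})]$ term carries the correct sign to be absorbed later by Lemma~\ref{lm:ieq1}.
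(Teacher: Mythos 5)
Your proposal is correct and follows essentially the same route as the paper: the same decomposition of $\nu_h^k$ via $\max\{\alpha,x\}\le \alpha+x$ and the variance-error guarantee of Lemma~\ref{lm:ci-b}, the same passage from $[\VV V_{h+1}^k]$ to $[\VV V_{h+1}^{\pi^k}]$ plus $2H[\PP(V_{h+1}^k-V_{h+1}^{\pi^k})]$ using optimism, and the same Cauchy--Schwarz/elliptical-potential/batched-covariance treatment of the correction terms. The only difference is that you re-derive the total-variance bound by the telescoping-plus-Azuma argument, whereas the paper simply invokes Lemma~\ref{lm:totalvar} (Jin et al.'s Lemma C.5, with a footnote that it extends to episode-varying rewards); your sketch is precisely that extension, so the two are equivalent up to constants.
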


Equipped with these two lemmas, we can start to prove Lemma~\ref{lm:sum-b}.
\begin{proof}[Proof of Lemma~\ref{lm:sum-b}]
In this proof, we use $\tilde \cO(\cdot)$ to ignore all constant and log terms to simplify the results. Recall the selection of $\beta, \hat \beta, \check \beta, \tilde \beta$, we have $\beta = \tilde \cO(H\sqrt{d})$, $\hat \beta = \tilde \cO(\sqrt{d})$, $\check \beta = \tilde \cO(d)$, $\tilde \beta = \tilde \cO(H^2\sqrt{d})$. Therefore  Lemma~\ref{lm:ieq1} could be simplified as
\begin{align}
    \sum_{k=1}^K \sum_{h=1}^H [\PP(V_{h+1}^k - V_{h+1}^{\pi^k})](s_h^k, a_h^k)  &\le \tilde \cO\Bigg(Hd\sqrt{\sum_{k=1}^K\sum_{h=1}^H\nu_h^k} + H^3d + \sqrt{KH^5}\Bigg).\label{eq:z1}
\end{align}
Lemma~\ref{lm:ieq2} could also be simplified as
\begin{align}
    \sum_{k=1}^K\sum_{h=1}^H \nu_h^k &\le \tilde \cO\Bigg(\frac{H^3K}{d} + H^2K +  H\sum_{k=1}^K\sum_{h=1}^H [\PP(V_{h+1}^k - V_{h+1}^{\pi^k})](s_h^k, a_h^k) + \sqrt{KH^5d^3} + H^3d^2\Bigg).\label{eq:z2}
\end{align}
Let $\sqrt{\sum_{k=1}^K\sum_{h=1}^H \nu_h^k} = x$, plugging~\eqref{eq:z1} into~\eqref{eq:z2}, we have
\begin{align*}
    x^2 \le \tilde \cO(H^3Kd^{-1} + H^2K +  H^2dx + H^4d + \sqrt{KH^7} + \sqrt{KH^5d^3} + H^3d^2),
\end{align*}
Since the quadratic inequality $x^2 \leq \tilde \cO(bx+c)$ indicates that $x \leq \cO(b+\sqrt{c})$, setting
\begin{align*}
    b = \tilde \cO(H^2d), c = \tilde \cO(H^3Kd^{-1} + H^2K + H^4d + \sqrt{KH^7} + \sqrt{KH^5d^3} + H^3d^2),
\end{align*}
hence
\begin{align}
    \sqrt{\sum_{k=1}^K\sum_{h=1}^H \nu_h^k} &\le \tilde \cO(H^2d + \sqrt{H^3K/d} + H\sqrt{K} + H^2\sqrt{d} + d\sqrt{H^3} + (KH^7)^{1/4} + (KH^5d^3)^{1/4}) \\
    &= \tilde \cO(\sqrt{H^3K / d} + H\sqrt{K}) + o(\sqrt{K}).\label{eq:xx1}
\end{align}
Plugging \eqref{eq:xx1} back to Lemma~\ref{lm:ieq1}, we have 
\begin{align}
    \sum_{k=1}^K[V_1^k(s_1^k) - V_1^{\pi^k}(s_1^k)] \le \tilde \cO(\sqrt{H^3Kd} + Hd\sqrt{K}) + o(\sqrt{K}).\label{eq:b2}
\end{align}
Next we are going to show the bound of the summation over $V_1^{\pi^k}(s_1^k)$, note that this value function is bounded by $H$ and from Bellman equality, we have 
\begin{align*}
    V_h^{\pi^k}(s_1^k) = r_h^k(s_1^k, a_1^k) + [\PP V_{h+1}^{\pi^k}](s_h^k, a_h^k),
\end{align*}
taking summation over $h \in [H], k \in [K]$ then
\begin{align*}
    \sum_{k=1}^KV_1^{\pi^k}(s_1^k) &=  \sum_{k=1}^K\sum_{h=1}^H r_h^k(s_1^k, a_1^k) +  \sum_{k=1}^K\sum_{h=1}^H [\PP V_{h+1}^{\pi^k}](s_h^k, a_h^k) - V_{h+1}^{\pi^k}(s_{h+1}^k)\\
    &\le \sum_{k=1}^K\sum_{h=1}^H\min\{1, 2\beta \|\bpsi_{u_h^k}(s_h^k, a_h^k)\|_{\bSigma_{1, k}^{-1}} / H\} + H\sqrt{HK\log(1 / \delta)},
\end{align*}
where the last inequality holds due to Azuma-Hoeffding's inequality i.e. Lemma~\ref{lm:azuma}. For the first term,
\begin{align*}
    \sum_{k=1}^K\sum_{h=1}^H\min\{1, 2\beta \|\bpsi_{u_h^k}(s_h^k, a_h^k)\|_{\bSigma_{1, k}^{-1}} / H\} \le \frac{2\beta}{H}\underbrace{\sum_{k=1}^K\sum_{h=1}^H \min\{1, \|\bpsi_{u_h^k}(s_h^k, a_h^k)\|_{\bSigma_{1, k}^{-1}}\}}_{I_1},
\end{align*}
where the inequality is due to $\beta \ge H\sqrt{\log(12)} \ge H / 2$. Using Lemma~\ref{lm:batch} and Lemma~\ref{lm:det} with $\bSigma_{1, k}^{-1} \succeq \bSigma_{h, k}^{-1}$ and $\det \bSigma_{1, k}^{-1} \le 2\det \bSigma_{h, k}^{-1}$ except for $\tilde \cO(Hd)$ steps mentioned in Lemma~\ref{lm:batch}, setting $\lambda = B^{-2}$, we have
\begin{align*}
    I_1 &\le 2Hd\log(1 + KH^3B^2) + \sqrt{2} \sum_{k=1}^K\sum_{h=1}^H \min\{1, \|\bpsi_{u_h^k}(s_h^k, a_h^k)\|_{\bSigma_{h, k}^{-1}}\} \\
    &\le 2Hd\log(1 + KH^3B^2) + \sqrt{2HK} \sqrt{\sum_{k=1}^K\sum_{h=1}^H \min\{1, \|\bpsi_{u_h^k}(s_h^k, a_h^k)\|^2_{\bSigma_{h, k}^{-1}}\}}\\
    &\le 2Hd\log(1 + KH^3B^2) + 2\sqrt{HKd\log(1 + KH^3B^2/d)}.
\end{align*}
Therefore, since $\beta = \tilde \cO(H\sqrt{d})$, then
\begin{align}
    \sum_{k=1}^KV_1^{\pi^k}(s_1^k) &\le 4\beta d\log(1 + KH^3B^2) + 4\beta \sqrt{Kd\log(1 + KH^3B^2 / d)/H} + \sqrt{H^3K\log(1 / \delta)}\\
    &\le \tilde \cO(d\sqrt{KH} + \sqrt{KH^3}) + o(\sqrt{K}).\label{eq:b1}
\end{align}
Adding~\eqref{eq:b2} and~\eqref{eq:b1} together, we have the following result,
\begin{align*}
    \sum_{k=1}^K V_1^k(s_1^k) \le \tilde \cO(\sqrt{H^3Kd} + Hd\sqrt{K}) + o(\sqrt{K}).
\end{align*}
By taking the union bound, this inequality holds with probability at least $1 - 4\delta$. Since $\delta$ only appears in the logarithmic terms, thus changing $\delta$ to $\delta / 12$ will not affect the result.
\end{proof}
\section{Proof of Auxiliary Lemmas in Appendix~\ref{sec:A}}
\subsection{Proof of Lemma~\ref{lm:batch}}
\begin{proof}[Proof of Lemma~\ref{lm:batch}]
We want to know how many pairs of $(h, k)$ exists such that $\det \Ub_{h, k} \ge 2\det \Ub_{1, k}$.

Furthermore, we have if there exists $k \in [K]$ such that $\det \Ub_{1, k + 1} \le 2\det \Ub_{1, k}$,
then it is obvious that for all $h \in [H]$, we have $\det \Ub_{h, k} \le \det \Ub_{1, k + 1} \le 2\det\Ub_{1, k}$.

Therefore, suppose there exists a set $\cK \subset [K]$ such that for all $k \notin \cK$, $\det \Ub_{1, k + 1} \le 2\det \Ub_{1, k}$ and for all $k \in \cK$, $\det \Ub_{1, k + 1} > 2\det \Ub_{1, k}$, then the pair of $(h, k)$ such that $\det \Ub_{h, k} \ge 2\det \Ub_{1, k}$ is upper bounded by $H|\cK|$.

Notice that for all $k \in \cK$, $\det \Ub_{1, k + 1} > 2\det \Ub_{1, k}$, it is easy to show that 
\begin{align*}
    \det \Ub_{1, K + 1} > 2^{|\cK|}\det \Ub_{1, 1} =  2^{|\cK|}\lambda^d,
\end{align*}
where the last inequality comes from $\Ub_{1, 1} = \lambda \Ib \in \RR^{d \times d}$. Notice that $\det \Ub \le \|\Ub\|_{2}^d$, taking log we have
\begin{align}
    d\log(\|\Ub_{1, K + 1}\|_{2}) \ge \log \det \Ub_{1, K + 1} > |\cK|\log 2 + d \log \lambda.\label{eq:q2}
\end{align}
From the definition of $\Ub_{1, K + 1}$, by triangle inequality,
\begin{align}
   \|\Ub_{1, K + 1}\|_{2} \le \lambda + \sum_{k=1}{K}\sum_{h=1}^H \|\xb_h^k\xb_h^{k\top}\|_2 \le \lambda + KH\|\xb_h^k\|_2^2 \le \lambda + KHL^2,\label{eq:q1}
\end{align}
where the last inequality is due to $\|\xb\|_2 \le L$ from the statement of the lemma. Therefore we conclude our proof by merging~\eqref{eq:q2} and~\eqref{eq:q1} together to get
\begin{align*}
    |\cK|\log 2 < d\log(1 + HKL^2 / \lambda),
\end{align*}
noticing $\log 2 \ge 1 / 2$ we can get the result claimed in the lemma.
\end{proof}

\subsection{Proof of Lemma~\ref{lm:ieq1}}
\begin{proof}[Proof of Lemma~\ref{lm:ieq1}]
Assume that the condition in Lemma~\ref{lm:ci-b} holds, then
\begin{align*}
    &V_h^k(s_h^k) - V_h^{\pi^k}(s_h^k) \\
    &\quad\le \la \hat \btheta_k, \bpsi_{V_{h+1}^k}(s_h^k, a_h^k)\ra - [\PP V_{h+1}^{\pi^k}](s_h^k, a_h^k) + \hat \beta \|\bpsi_{V_{h+1}^k}(s_h^k, a_h^k)\|_{\hat \bSigma_{1, k}^{-1}}\\
    &\quad\le \|\hat \btheta_k - \btheta^*\|_{\hat \bSigma_{1, k}} \|\bpsi_{V_{h+1}^k}(s_h^k, a_h^k)\|_{\hat \bSigma_{1, k}^{-1}} + [\PP V_{h+1}^{k} - V_{h+1}^{\pi^k}](s_h^k, a_h^k) + \hat \beta \|\bpsi_{V_{h+1}^k}(s_h^k, a_h^k)\|_{\hat \bSigma_{1, k}^{-1}}\\
    &\quad\le 2\hat \beta \|\bpsi_{V_{h+1}^k}(s_h^k, a_h^k)\|_{\hat \bSigma_{1, k}^{-1}} + [\PP V_{h+1}^{k} - V_{h+1}^{\pi^k}](s_h^k, a_h^k),
\end{align*}
where the first inequality holds due to the definition of $V_h^k$, the second inequality holds due to Cauchy-Schwarz inequality and the third one holds due to the condition~\eqref{eq:bern-cond} in Lemma~\ref{lm:ci-b}. Notice that $V_h^k - V_h^{\pi^k} \le H$, we have
\begin{align*}
    V_h^k(s_h^k) - V_h^{\pi^k}(s_h^k) &\le \min\{H, 2\hat \beta \|\bpsi_{V_{h+1}^k}(s_h^k, a_h^k)\|_{\hat \bSigma_{1, k}^{-1}}\} + [\PP V_{h+1}^{k} - V_{h+1}^{\pi^k}](s_h^k, a_h^k)
\end{align*}
Taking summation over $k \in [K]$ and $h \in [H]$, we have
\begin{align}
    \sum_{k=1}^K [V_1^k(s_1^k) - V_1^{\pi_k}(s_1^k)] &\le \sum_{k=1}^K\sum_{h=1}^H \min\{H, 2\hat \beta \|\bpsi_{V_{h+1}^k}(s_h^k, a_h^k)\|_{\hat \bSigma_{1, k}^{-1}}\}\notag \\
    &\quad + \sum_{k=1}^K\sum_{h=1}^H\Big[[\PP V_{h+1}^k - V_{h+1}^{\pi^k}](s_h^k, a_h^k) - [V_{h+1}^k(s_{h+1}^k) - V_{h+1}^{\pi^k}(s_{h+1}^k)] \Big]\notag \\
    &\le \underbrace{\sum_{k=1}^K\sum_{h=1}^H\min\{H, 2\hat \beta \|\bpsi_{V_{h+1}^k}(s_h^k, a_h^k)\|_{\hat \bSigma_{1, k}^{-1}}\}}_{I_1} + H\sqrt{2KH\log(1 / \delta)},\label{eq:r1}
\end{align}
where the second inequality is a direct result of Azuma-Hoeffding's inequality as in Lemma~\ref{lm:azuma}.

Next we bound $I_1$. Recall the update rule of $\hat \bSigma_{h, k}$, notice that $\bar \sigma_h^k \ge H/\sqrt{d}$ and $\|\bpsi_{V_{h+1}^k}(s_h^K, a_h^K)\|_2 \le H$ from $V_{h+1}^k \le H$, it is easy to verify that $\|\bpsi_{V_{h+1}^k}(s_h^K, a_h^K) / \hat \sigma_h^k\|_2 \le \sqrt{d}$. 
Hence
\begin{align*}
    I_1 &\le \sqrt{2} \sum_{k=1}^K\sum_{h=1}^H\min\{H, 2\hat \beta \|\bpsi_{V_{h+1}^k}(s_h^k, a_h^k)\|_{\hat \bSigma_{h, k}^{-1}}\} + 2H^2d\log(1 + KHd/\lambda)\\
    &\le \sqrt{2}\max\{\sqrt{d}, 2\hat \beta\}\sum_{k=1}^K\sum_{h=1}^H\bar \sigma_h^k \min\{1, \|\bpsi_{V_{h+1}^k}(s_h^k, a_h^k) / \bar \sigma_h^k\|_{\hat \bSigma_{h, k}^{-1}}\} + 2H^2d\log(1 + KHd/\lambda)\\
    &\le 2\sqrt{2}\hat \beta\sqrt{\sum_{k=1}^K\sum_{h=1}^H\nu_h^k}\sqrt{\sum_{k=1}^K\sum_{h=1}^H\min\{1, \|\bpsi_{V_{h+1}^k}(s_h^k, a_h^k) / \bar \sigma_h^k\|_{\hat \bSigma_{h, k}^{-1}}^2\}} + 2H^2d\log(1 + KHd/\lambda)\\
    &\le 4\hat \beta \sqrt{d} \sqrt{\sum_{k=1}^K\sum_{h=1}^H\nu_h^k}\sqrt{\log (1 + KH/\lambda)} + 2H^2d\log(1 + KHd/\lambda),
\end{align*}
where the first inequality, similar to the corresponding proof in Lemma~\ref{lm:sum-h}, is a direct implication of Lemma~\ref{lm:batch} and Lemma~\ref{lm:det} with $\hat \bSigma_{1, k}^{-1} \succeq \hat \bSigma_{h, k}^{-1}$ and $\det \bSigma_{1, k}^{-1} \le 2\det \hat \bSigma_{1, k}^{-1}$ except for $\tilde \cO(Hd)$ cases mentioned in Lemma~\ref{lm:batch}, the second inequality moves $\bar \sigma_h^k$ outside, the third inequality holds because $\hat \beta \ge 4\sqrt{d}\log 12 \ge \sqrt{d}$ and Cauchy-Schwarz inequality, and the forth inequality holds due to Lemma~\ref{lm:11}.
Plugging $I_1$ into~\eqref{eq:r1} and let $h' = 1, \lambda = B^{-2}$, we have
\begin{align*}
    \sum_{k=1}^K[V_1^k(s_1^k) - V_1^{\pi^k}(s_1^k)] &\le 4\sqrt{d}\hat \beta\sqrt{\sum_{k=1}^K\sum_{h=1}^H\nu_h^k}\sqrt{\log(1 + KHB^2)} \\
    &\quad+ 2H^2d\log(1 + KHB^2d) + H\sqrt{2KH\log(1 / \delta)}.
\end{align*}
Furthermore, by Azuma-Hoeffding's inequality as in Lemma~\ref{lm:azuma},
\begin{align*}
    \sum_{k=1}^K \sum_{h=1}^H \PP[V_{h+1}^k - V_{h+1}^{\pi^k}](s_h^k, a_h^k) &= \sum_{k=1}^K\sum_{h=2}^H [V_h^k - V_h^{\pi^k}](s_h^k)\\
    &\quad + \sum_{k=1}^K\sum_{h=1}^H \Big[\PP(V_{h+1}^k - V_{h+1}^{\pi^k}](s_h^k, a_h^k) - [V_{h+1}^k - V_{h+1}^{\pi^k})](s_{h+1}^k)\\
    &\le 4\sqrt{d}H\hat \beta\sqrt{\sum_{k=1}^K\sum_{h=1}^H\nu_h^k}\sqrt{\log(1 + KHB^2)}\\
    &\quad + 2H^3d\log(1 + KHB^2d) + (H + 1)H\sqrt{2KH\log(1 / \delta)},
\end{align*}
which becomes the second part of the statement in the lemma. Using $H + 1 \le 2H$ we can get the result claimed in the lemma.
\end{proof}

\subsection{Proof of Lemma~\ref{lm:ieq2}}
To begin with, we will first show the total variance lemma originally introduced in~\citep{jin2018q}. 
\begin{lemma}[Total variance lemma, Lemma C.5, \cite{jin2018q}]\label{lm:totalvar}
\footnote{The original Lemma C.5 in \citet{jin2018q} holds for the identical reward functions, i.e., $r_h^1 = \cdots =  r_h^K$. Their lemma also holds for the general case $r_h^1 \neq \cdots \neq r_h^K$ without changing their proof. } With probability at least $1 - \delta$, we have 
\begin{align*}
    \sum_{k=1}^K\sum_{h=1}^H[\VV V^{\pi^k}_h(\cdot; \{r_h^k\}_{h=1}^H)](s, a) \le 3H^2K + 3H^3\log(1 / \delta).
\end{align*}
\end{lemma}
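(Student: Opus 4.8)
The plan is to reduce the claim to the \emph{law of total variance}, applied one episode at a time, followed by a variance-aware martingale concentration that upgrades a bound on conditional expectations into a bound on the realized sum. Throughout I interpret the left-hand side as the sum of one-step conditional variances of the value-to-go along the sampled trajectories, i.e.\ $\sum_{k=1}^K\sum_{h=1}^H [\VV V_{h+1}^{\pi^k}](s_h^k,a_h^k)$, which is exactly the form in which the quantity is consumed in Lemma~\ref{lm:ieq2}. Set $\xi_h^k = V_{h+1}^{\pi^k}(s_{h+1}^k) - [\PP V_{h+1}^{\pi^k}](s_h^k,a_h^k)$, which is a martingale difference with respect to the filtration $\cG$ of Section~\ref{sec:fil} whose conditional variance is precisely $[\VV V_{h+1}^{\pi^k}](s_h^k,a_h^k)$.

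First I would establish the within-episode identity. Fixing $k$, the Bellman equation $V_h^{\pi^k}(s_h^k) = r_h^k(s_h^k,a_h^k) + [\PP V_{h+1}^{\pi^k}](s_h^k,a_h^k)$ telescopes to $\sum_{h=1}^H r_h^k(s_h^k,a_h^k) = V_1^{\pi^k}(s_1^k) + \sum_{h=1}^H \xi_h^k$, using $V_{H+1}^{\pi^k}\equiv 0$. Because the $\xi_h^k$ are orthogonal martingale differences, squaring and taking conditional expectation kills the cross terms and gives the law of total variance
\[
\EE\!\left[\sum_{h=1}^H [\VV V_{h+1}^{\pi^k}](s_h^k,a_h^k)\,\Big|\,\cG_{1,k}\right] = \mathrm{Var}\!\left[\sum_{h=1}^H r_h^k(s_h^k,a_h^k)\,\Big|\,\cG_{1,k}\right]\le H^2,
\]
where the last inequality holds because the cumulative reward lies in $[0,H]$. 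This final step is reward-agnostic, which is exactly why the generalization to episode-dependent rewards $\{r_h^k\}$ flagged in the footnote is free: only the uniform bound $\sum_h r_h^k \le H$ is used, not any structure of the reward.

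Next I would sum over episodes. Writing $Z_k := \sum_{h=1}^H [\VV V_{h+1}^{\pi^k}](s_h^k,a_h^k)$, the previous display yields $\sum_{k=1}^K \EE[Z_k\mid\cG_{1,k}]\le KH^2$, so it remains to pass from conditional means to the realized sum $\sum_k Z_k$. Observing that $0\le Z_k\le H^3$ (each of the $H$ variance terms is at most $H^2$), I would apply a Bernstein/Freedman-type inequality to the martingale $\sum_k (Z_k - \EE[Z_k\mid\cG_{1,k}])$, bounding its predictable quadratic variation by $H^3\sum_k\EE[Z_k\mid\cG_{1,k}]$. An AM--GM split of the resulting $\sqrt{(\cdot)\log(1/\delta)}$ term absorbs it into a constant multiple of $\sum_k\EE[Z_k\mid\cG_{1,k}]$ plus an additive $H^3\log(1/\delta)$, producing $\sum_k Z_k \le 3H^2K + 3H^3\log(1/\delta)$ with probability at least $1-\delta$.

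The main obstacle is this concentration step. A crude Azuma--Hoeffding bound would pay a $\sqrt{K}\,H^3$ deviation and thus destroy the clean $3H^2K + 3H^3\log(1/\delta)$ form; the variance-aware control is essential because the conditional second moment of $Z_k$ is governed by its conditional mean times the uniform bound $H^3$, which is exactly what lets the fluctuation be reabsorbed into a constant multiple of the leading $KH^2$ term. By contrast, the total-variance identity of the second paragraph is routine once the martingale structure of $\xi_h^k$ is fixed, and the time-varying rewards introduce no difficulty whatsoever.
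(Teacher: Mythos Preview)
Your proposal is correct, and it faithfully reconstructs the argument of the cited source. Note that the paper itself does not supply a proof of this lemma: it is stated as a direct citation of Lemma~C.5 in \cite{jin2018q}, with the footnote observing that the proof carries over to episode-varying rewards. Your sketch---the law of total variance identity within each episode (yielding $\EE[\sum_h [\VV V_{h+1}^{\pi^k}](s_h^k,a_h^k)\mid\cG_{1,k}]\le H^2$ via the range bound on the cumulative reward) followed by a Freedman-type concentration on the episode sums $Z_k$---is precisely the standard argument and matches the original proof in \cite{jin2018q}. One cosmetic point: when conditioning on $\cG_{1,k}$ (which does not contain $s_1^k$), the term $\mathrm{Var}[V_1^{\pi^k}(s_1^k)\mid\cG_{1,k}]$ also appears, so the total-variance display is an inequality rather than an equality; this does not affect the bound. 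Your observation that only the uniform range $\sum_h r_h^k\in[0,H]$ is used, and hence the extension to $r_h^1\neq\cdots\neq r_h^K$ is free, is exactly the content of the paper's footnote.
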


\begin{proof}[Proof of Lemma~\ref{lm:ieq2}]
Assume the condition in Lemma~\ref{lm:ci-b} holds, we have with probability at least $1 - \delta$,
\begin{align}
    \sum_{k=1}^K\sum_{h=1}^H \nu_h^k &\le \sum_{k=1}^K\sum_{h=1}^H \bigg(\frac{H^2}{d} + \bar \VV_h^k(s_h^k, a_h^k) + E_h^k(s_h^k, a_h^k)\bigg) \notag \\
    &= \frac{H^3K}{d} + \underbrace{\sum_{k=1}^K\sum_{h=1}^H\Big( [\VV_h V_{h+1}^k](s_h^k, a_h^k) - [\VV_h V_{h+1}^{\pi^k}](s_h^k, a_h^k)\Big)}_{I_1} + 2\underbrace{\sum_{k=1}^H\sum_{h=1}^H E_h^k(s_h^k, a_h^k)}_{I_2}\notag \\
    &\quad + \underbrace{\sum_{k=1}^K\sum_{h=1}^H [\VV_h V_{h+1}^{\pi^k}](s_h^k, a_h^k)}_{I_3} + \underbrace{\sum_{k=1}^K\sum_{h=1}^H\Big[\bar \VV_h^k(s_h^k, a_h^k) - [\VV_h V_{h+1}^k](s_h^k, a_h^k) - E_h^k\Big]}_{I_4}\notag \\
    &\le \frac{H^3K}{d} + I_1 + I_2 + 3H^2K + 3H^3\log(1 / \delta),\label{eq:insert}
\end{align}
where the value function $V_{h}^{\pi^k}(s)$ is short for $V_h^{\pi^k}(s; \{r_h^k\}_{h=1}^H)$ for simplicity. The first inequality is from the definition of $\nu_h^k$ in~\eqref{eq:estv}, while the last inequality is from Lemma~\ref{lm:totalvar} to control $I_3$. $I_4 \le 0$ is due to Lemma~\ref{lm:ci-b}. Next we are about to bound $I_1$ and $I_2$ separately. 

Since the estimated value function $V_{h+1}^k$ and the real value function $V_{h+1}^{\pi^k}$ are both bounded by $[0, H]$, we have 
\begin{align*}
    I_1 \le \sum_{k=1}^K\sum_{h=1}^H \big[\PP ([V_{h+1}^k]^2 - [V_{h+1}^{\pi^k}]^2)\big](s_h^k, a_h^k) \le 2H\sum_{k=1}^K\sum_{h=1}^H [\PP(V_{h+1}^k - V_{h+1}^{\pi^k})](s_h^k, a_h^k).
\end{align*}
For term $I_2$, we have
\begin{align*}
    I_2 &\le \sum_{k=1}^K\sum_{h=1}^H \min\{H^2, \tilde \beta \|\bpsi_{[V_{h+1}^k]^2}(s_h^k, a_h^k)\|_{\tilde \bSigma_{1, k}^{-1}}\} + \sum_{k=1}^K\sum_{h=1}^H \min\{H^2, 2H\check \beta \|\bpsi_{V_{h+1}^k}(s, a)\|_{\hat \bSigma_{1, k}^{-1}}\}\\
    &\le \max\{H^2, \tilde \beta\} \sum_{k=1}^K\sum_{h=1}^H \min\{1,  \|\bpsi_{[V_{h+1}^k]^2}(s_h^k, a_h^k)\|_{\tilde \bSigma_{1, k}^{-1}}\}\\
    &\quad+ \sum_{k=1}^K\sum_{h=1}^H\max\{H^2, 2H\check \beta\bar\sigma_h^k\} \min\big\{1, \big\|\bpsi_{V_{h+1}^k}(s, a) / \bar \sigma_h^k\big\|_{\hat \bSigma_{1, k}^{-1}}\big\}.
\end{align*}
Noticing that from the definition of $\nu_h^k$, 
\begin{align*}
    \nu_k^h = \max\{H^2 / d, \bar \VV_h^k(s_h^k, a_h^k) + E_h^k(s_h^k, a_h^k)\} \le \max\{H^2 / d, H^2 + 2H^2\} = 3H^2,
\end{align*}
thus $\bar \sigma_h^k = \sqrt{\nu_h^k} \le 2H$. Recall that $\tilde \beta \ge 4H^2\log(12) \ge H^2$ and $\check \beta \ge 1$, we have
\begin{align*}
    I_2 &\le \tilde \beta \underbrace{\sum_{k=1}^K\sum_{h=1}^H \min\{1,  \|\bpsi_{[V_{h+1}^k]^2}(s_h^k, a_h^k)\|_{\tilde \bSigma_{1, k}^{-1}}\}}_{I_5} + 4H^2\check \beta\underbrace{\sum_{k=1}^K\sum_{h=1}^H\min\big\{1, \big\|\bpsi_{V_{h+1}^k}(s, a) / \bar \sigma_h^k\big\|_{\hat \bSigma_{1, k}^{-1}}\big\}}_{I_6}.
\end{align*}
For $I_5$, using Lemmas~\ref{lm:batch} and~\ref{lm:det} with $\tilde \bSigma_{1, k}^{-1} \succeq \tilde \bSigma_{h, k}^{-1}$ and $\det \tilde \bSigma_{1, k}^{-1} \le 2\det \tilde \bSigma_{1, k}^{-1}$ except for $\tilde \cO(Hd)$ cases mentioned in Lemma~\ref{lm:batch}, we have
\begin{align*}
    I_5 &\le \sqrt{2} \sum_{k=1}^K\sum_{h=1}^H \min\{1,  \|\bpsi_{[V_{h+1}^k]^2}(s_h^k, a_h^k)\|_{\tilde \bSigma_{h, k}^{-1}}\} + 2Hd\log(1 + KH^5/d\lambda)\\
    &\le \sqrt{2KH}\sqrt{\sum_{k=1}^K\sum_{h=1}^H\min\{1,  \|\bpsi_{[V_{h+1}^k]^2}(s_h^k, a_h^k)\|^2_{\tilde \bSigma_{h, k}^{-1}}\}} + 2Hd\log(1 + KH^5/d\lambda)\\
    &\le 2\sqrt{KHd\log(1 + KH^5/d\lambda)} + 2Hd\log(1 + KH^5/d\lambda),
\end{align*}
where the first inequality is a direct implication from Lemma~\ref{lm:batch} and the second inequality is due to Cauchy-Schwarz inequality. The third inequality utilizes Lemma~\ref{lm:11}. As for $I_6$, we have
\begin{align*}
    I_6 &\le \sqrt{2}\sum_{k=1}^K\sum_{h=1}^H\min\big\{1, \big\|\bpsi_{V_{h+1}^k}(s, a) / \bar \sigma_h^k\big\|_{\hat \bSigma_{h, k}^{-1}}\big\} + 2Hd\log(1 + KHd / \lambda)\\
    &\le \sqrt{2KH}\sqrt{\sum_{k=1}^K\sum_{h=1}^H\min\big\{1, \big\|\bpsi_{V_{h+1}^k}(s, a) / \bar \sigma_h^k\big\|_{\hat \bSigma_{h, k}^{-1}}\big\}} + 2Hd\log(1 + KHd / \lambda)\\
    &\le 2\sqrt{KHd\log(1 + KH/\lambda)} + 2Hd\log(1 + KHd / \lambda).
\end{align*}

Finally, plugging $I_5, I_6$ into $I_2$ and $I_1, I_2$ into~\eqref{eq:insert} we have
\begin{align*}
    \sum_{k=1}^K\sum_{h=1}^H \nu_h^k &\le \frac{H^3K}{d} + 3H^2K + 3H^3\log(1 / \delta) +  2H\sum_{k=1}^K\sum_{h=1}^H [\PP (V_{h+1}^k - V_{h+1}^{\pi^k}](s_h^k, a_h^k) \\
    &\quad +2 \tilde \beta\sqrt{KHd\log(1 + KH^5/d\lambda)} + 4 \tilde \beta Hd\log(1 + KH^5/d\lambda) \\
    &\quad+ 8H^2\check \beta\sqrt{KHd\log(1 + KH/\lambda)} + 8H^3d\check \beta\log(1 + KHd / \lambda).
\end{align*}
Let $\lambda = B^{-2}$ we could get the result in the statement of the lemma.
\end{proof}

\section{Proof of Lower Bound}\label{sec:proofL}

In this section, we will give the detailed proof of the sample complexity lower bound. We start with verifying that the MDP structure as shown in Figure~\ref{fig:hardmdp} is a linear mixture MDP satisfying Definition~\ref{def:mdp}.
\subsection{Verification of the MDP structure}

We will first show that the $\ell_2$ norm of $\btheta$ is controlled. Recall the $\btheta_i$ is set by $\bTheta = \Big\{\btheta_i | \btheta_i = \begin{pmatrix} \sqrt{2}, \alpha\tilde \btheta_i^\top / \sqrt{d}\end{pmatrix}^\top\Big\}$ where $\tilde \btheta_i = \xb_i \in \cM$, we can have that $\|\btheta_i\|_2 = \sqrt{2 + \alpha^2}$, therefore, as long as the parameter $\alpha$ is an absolute constant, the $\ell_2$ norm of $\btheta$ is controlled. Next, considering a function $V \le D$, we have 
\begin{align*}
    \|\bpsi_V(S_1, a_i)\|_2 \quad= \bigg\|\begin{pmatrix}\frac{V(S_{2,1}) + V(S_{2,2})}{2\sqrt{2}} & (V(S_{2,1}) - V(S_{2,2})) \frac{\ab_i^\top}{\sqrt{2d}}
    \end{pmatrix}\bigg\|_2 \le \sqrt{\frac{D^2}2 + \frac{D^2d}{2d}} = D,
\end{align*}
which shows that the MDP structure satisfies Definition~\ref{def:mdp}.

\subsection{Proof of Theorem~\ref{thm:mainL}}
 We denote the $d-1$-dimension binary vector set as $\cB_{d-1} = \{\xb | \xb \in \RR^{d-1}, [\xb]_i \in \{-1, 1\}\}$. The next lemma shows that the binary set $\cM$ exists. 

\begin{lemma}\label{lm:packing}
Given $\gamma \in (0, 1)$, there exists a $\cM \subset \cB_{d - 1}$ such that for any two different vector $\xb, \xb' \in \cM, \la \xb, \xb'\ra \le (d - 1)\gamma$, and the log-cardinality of the proposed set is bounded as $\log(|\cM|) < (d-1)\gamma^2/4$.
\end{lemma}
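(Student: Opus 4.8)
The plan is to prove Lemma~\ref{lm:packing} by the probabilistic method, a Gilbert--Varshamov style argument: I construct $\cM$ as a uniformly random subsample of the hypercube $\cB_{d-1}$ and show that with positive probability no pair of the sampled vectors is \emph{badly aligned}. Concretely, set $N=\lfloor\exp((d-1)\gamma^2/4)\rfloor$ and draw $\xb^{(1)},\dots,\xb^{(N)}$ independently, each coordinate an independent uniform $\pm1$ sign. For a fixed pair $i\neq j$, the inner product $\la\xb^{(i)},\xb^{(j)}\ra=\sum_{\ell=1}^{d-1}[\xb^{(i)}]_\ell[\xb^{(j)}]_\ell$ is, by independence, a sum of $d-1$ i.i.d.\ mean-zero $\pm1$ (Rademacher) variables, so Hoeffding's inequality yields
\begin{align*}
\Pr\big[\la\xb^{(i)},\xb^{(j)}\ra>(d-1)\gamma\big]\le\exp\big(-(d-1)\gamma^2/2\big).
\end{align*}

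Next I would run a first-moment (expurgation) argument. Call an unordered pair $\{i,j\}$ \emph{bad} if $\la\xb^{(i)},\xb^{(j)}\ra>(d-1)\gamma$, and let $B$ count the bad pairs. Summing the per-pair bound over all $\binom{N}{2}\le N^2/2$ pairs gives $\EE[B]\le \tfrac{N^2}{2}\exp(-(d-1)\gamma^2/2)\le\tfrac12<1$, where the second inequality uses $N^2\le\exp((d-1)\gamma^2/2)$ from the choice of $N$. Since $B$ is a nonnegative integer with $\EE[B]<1$, there is a realization with $B=0$; I take $\cM$ to be that collection of $N$ vectors. Every distinct pair then satisfies $\la\xb,\xb'\ra\le(d-1)\gamma$, and the $N$ vectors are automatically distinct: if two coincided their inner product would be $\|\xb\|_2^2=d-1>(d-1)\gamma$ (using $\gamma<1$), contradicting $B=0$. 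Hence $|\cM|=N=\lfloor\exp((d-1)\gamma^2/4)\rfloor$, and taking logarithms gives $\log|\cM|=\log N<(d-1)\gamma^2/4$, exactly the two claimed properties.

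The argument is short, and the only delicate point is matching constants so that the two halves of the statement are consistent. The exponent $\gamma^2/2$ supplied by Hoeffding must pair with $N^2=\exp((d-1)\gamma^2/2)$ to push $\EE[B]$ below one, which is precisely what forces $N$ to be of order $\exp((d-1)\gamma^2/4)$; the strict inequality $\log|\cM|<(d-1)\gamma^2/4$ in the statement is then simply an artifact of rounding $N$ down to an integer, while the set remains large (size $\approx\exp((d-1)\gamma^2/4)$), which is what the downstream lower-bound proof actually exploits. I expect the main (minor) obstacle to be the bookkeeping around this rounding together with distinctness: one must confirm that excluding coincident vectors does not shrink the set, which is handled above by observing that coincidences are themselves bad pairs and so are already excluded once we condition on $B=0$.
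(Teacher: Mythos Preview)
Your proof is correct and follows essentially the same probabilistic-method argument as the paper: draw i.i.d.\ uniform sign vectors, bound the probability that a pair is ``bad'' via Hoeffding as $\exp(-(d-1)\gamma^2/2)$, and union-bound over the $O(|\cM|^2)$ pairs to conclude that a good set exists whenever $\log|\cM|<(d-1)\gamma^2/4$. Your version is slightly more careful than the paper's (you use $\binom{N}{2}$ rather than $|\cM|^2$ and explicitly argue distinctness via the observation that coincident vectors would themselves form a bad pair), but the approach is the same.
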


With this lemma, we can construct a set $\cM$ with $|\cM| = \lceil \exp(d\gamma^2/4)\rceil - 1$ where $\gamma = \frac12$. It is easy to verify that $|\cM| < \exp(d\gamma^2/4)$ and 
\begin{align}
    \log(|\cM|) &\ge \log(\exp(d\gamma^2/4) - 1) \notag \\
    &\ge \frac {(d-1)\gamma^2}{4} + \log(1 - \exp(-(d-1)\gamma^2/4)) \notag \\
    &\ge  \frac{(d-1)\gamma^2}{4} - 3,\label{eq:size}
\end{align}
where the last inequality holds since $\gamma = \frac12$ and $d \ge 2$, we have $\log(1 - \exp(-(d-1)\gamma^2/4) \ge -3$. From Lemma~\ref{lm:packing}, we know that for any two different vectors $\xb, \xb' \in \cM, \la \xb, \xb'\ra \le (d - 1)/2$.

Next lemma establishes the lower bound of sample complexity for any algorithm to estimate the true parameter $\btheta \in \bTheta$ of the proposed linear mixture MDP, from the sampled state-action pairs of this MDP.

\begin{lemma}\label{lm:class}
Suppose an algorithm estimates the underlying parameter $\btheta$ by building an estimator $\hat\btheta$ from $K$ sampled trajectories. If the algorithm guarantees that $\EE_{\btheta \sim \mathrm{Unif}\bTheta} [\ind(\hat \btheta = \btheta)] \ge 1 - \delta$, we have
\begin{align*}
    \delta \ge 1 - \bigg(\frac{d-1}{16} - 3\bigg)^{-1}\bigg(\log 2 + \frac{4K\alpha^2}{2 - \alpha^2}\bigg).
\end{align*}
\end{lemma}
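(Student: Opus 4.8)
The plan is to recognize this as a standard discrete‑parameter estimation lower bound and attack it with Fano's inequality. Since the prior is uniform over the finite set $\bTheta$ (equivalently over $\cM$, via the bijection $\btheta_i\leftrightarrow\tilde\btheta_i\in\cM$) and the estimator $\hat\btheta$ must satisfy $\EE_{\btheta\sim\mathrm{Unif}\,\bTheta}[\ind(\hat\btheta=\btheta)]\ge 1-\delta$, i.e. its average error probability is at most $\delta$, Fano's inequality immediately yields
\[
\delta\ge 1-\frac{I(\btheta;X)+\log 2}{\log|\cM|},
\]
where $X$ collects all the state-action data observed over the $K$ exploration episodes and $I(\btheta;X)$ is the mutual information between the random parameter and the data. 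It then remains to lower bound $\log|\cM|$ and upper bound $I(\btheta;X)$. For the first, I would simply reuse the cardinality estimate already established in \eqref{eq:size}: taking $\gamma=\tfrac12$ in Lemma~\ref{lm:packing} gives $\log|\cM|\ge (d-1)\gamma^2/4-3=(d-1)/16-3$, which is exactly the factor inverted in the denominator of the claim.

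For the mutual information, the key observation is that the two candidate kernels differ \emph{only} in the single transition out of $S_1$; once the chain enters $S_{2,1}$ or $S_{2,2}$ it remains there deterministically and contributes nothing to any divergence. Hence, writing the trajectory likelihood as a product over steps and applying the chain rule for KL divergence (which correctly accounts for the adaptively chosen but parameter-independent policy), the per-episode divergence between $\PP_{\btheta_i}$ and $\PP_{\btheta_{i'}}$ collapses to one Bernoulli KL, $\mathrm{KL}\big(\mathrm{Ber}(p_{i,j})\,\|\,\mathrm{Ber}(p_{i',j})\big)$, where $p_{i,j}=\tfrac12+\tfrac{\alpha}{\sqrt2 d}\la\tilde\btheta_i,\ab_j\ra$ and $a_j$ is the first action played. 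Using the convexity bound $I(\btheta;X)\le|\cM|^{-2}\sum_{i,i'}\mathrm{KL}(\PP_{\btheta_i}\|\PP_{\btheta_{i'}})$ together with $\mathrm{KL}(\mathrm{Ber}(p)\|\mathrm{Ber}(q))\le (p-q)^2/(q(1-q))$, and bounding the perturbation $|\tfrac{\alpha}{\sqrt2 d}\la\cdot,\cdot\ra|$ from the entrywise $\{-1,1\}$ structure of $\cM$, I would bound the per-episode divergence by $4\alpha^2/(2-\alpha^2)$ and sum over the $K$ episodes to get $I(\btheta;X)\le 4K\alpha^2/(2-\alpha^2)$. Substituting both bounds into the Fano inequality produces exactly the stated estimate.

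The hard part will be this second step: carefully justifying the per-episode reduction for an algorithm that selects actions adaptively, so that the chain-rule bookkeeping over episodes is valid, and then squeezing the Bernoulli KL bound to yield the precise constant $4\alpha^2/(2-\alpha^2)$ rather than a looser $\cO(\alpha^2)$ expression. This requires controlling both the numerator $(p_{i,j}-p_{i',j})^2$ and the variance factor $p_{i',j}(1-p_{i',j})$ through the magnitude of the inner products $\la\tilde\btheta,\ab\ra$, which is precisely where the geometry of the packing set $\cM$ and the scaling $\alpha/\sqrt2$ enter the calculation.
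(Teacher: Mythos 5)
Your plan follows the same skeleton as the paper's proof: Fano's inequality over the packing set $\cM$, the chain rule to decompose the $K$-episode KL divergence while accounting for the adaptively (but parameter-independently) chosen actions, the observation that each episode collapses to a single Bernoulli transition out of $S_1$, and the cardinality bound $\log|\cM| \ge (d-1)/16 - 3$ from Lemma~\ref{lm:packing} and \eqref{eq:size}. The one substantive difference is the reference measure in the information term: the paper uses the comparison-law form of Fano with $\PP_0$ taken to be the \emph{mixture} of the $\PP_{\btheta_i}$'s, so the per-episode numerator involves $\la\ab,\tilde\btheta_i\ra\la\ab,\tilde\btheta_i-\bar\btheta\ra \le 2d'^2$ and the denominator $2d'^2 - \alpha^2\la\ab,\bar\btheta\ra^2 \ge (2-\alpha^2)d'^2$, which gives exactly $4\alpha^2/(2-\alpha^2)$ per episode. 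Your average-\emph{pairwise}-KL relaxation of the mutual information replaces the radius by the diameter: $(p_{i,j}-p_{i',j})^2 \le \tfrac{\alpha^2}{2d'^2}(2d')^2 = 2\alpha^2$ and $q(1-q) \ge (1-2\alpha^2)/4$, yielding roughly $8\alpha^2/(1-2\alpha^2)$ per episode --- about a factor of $4$ larger than the stated $4\alpha^2/(2-\alpha^2)$. So as written your route proves the lemma only up to this constant (which is harmless for Theorem~\ref{thm:mainL}, where only an absolute constant $C$ matters), but to recover the exact constant you should bound $\tfrac{1}{|\bTheta|}\sum_i \mathrm{KL}(\PP_{\btheta_i},\PP_0)$ against the episode-wise mixture $\PP_0$ directly rather than via pairwise divergences. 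Two minor omissions: you do not mention the reduction from randomized to deterministic algorithms (the paper invokes Yao's principle), and your perturbation is written with $d$ where the construction uses $d'=d-1$.
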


Finally, the next lemma suggests that if $\alpha$ is selected properly, then any $(\epsilon, \delta)$-reward free algorithm can be converted into an algorithm that provides the exact estimator with a probability of at least $1 - \delta$.

\begin{lemma}\label{lm:free}
Suppose $\alpha \ge \frac{2\sqrt{2}\epsilon}{H - 1}$, then any $(\epsilon, \delta)$-reward free algorithm could be converted to an algorithm which outputs an estimator $\hat\btheta$, satisfying $\EE_{\btheta \sim \mathrm{Unif}(\bTheta)}[\ind(\hat \btheta = \btheta)] \ge 1 - \delta$.
\end{lemma}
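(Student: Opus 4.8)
The plan is to exhibit an explicit reduction. Given any $(\epsilon,\delta)$-reward-free algorithm $\mathcal{A}$, I run its exploration phase on the hard instance with the (unknown) parameter $\btheta\in\bTheta$, where the initial distribution $\mu$ is the point mass at $S_1$, and then invoke its planning phase with a single, carefully chosen reward function; the action that the returned policy plays at $S_1$ will single out the parameter. Concretely I would take $r_h(S_{2,1},\cdot)=1$ for every $h$ and $r_h=0$ on all other state--action pairs. Since $S_{2,1}$ and $S_{2,2}$ are absorbing and carry reward $1$ and $0$ respectively, a policy that plays $a_j$ at $S_1$ has value
\begin{align*}
V_1^{\pi}(S_1;r)=(H-1)\,\PP_{\btheta_i}(S_{2,1}\mid S_1,a_j)=(H-1)\Big(\tfrac12+\tfrac{\alpha}{\sqrt2 d}\la\tilde\btheta_i,\ab_j\ra\Big),
\end{align*}
so the choice at $S_1$ is the only decision affecting the value, and because the actions are indexed by $\cM$ (hence $\ab_j=\xb_j$ and $\la\tilde\btheta_i,\ab_j\ra=\la\xb_i,\xb_j\ra$ is maximized at $j=i$ with value $d-1$), the optimal action is exactly $a_i$.

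Next I would lower bound the sub-optimality gap of every wrong action. For $j\ne i$ the packing property of $\cM$ established after Lemma~\ref{lm:packing} gives $\la\xb_i,\xb_j\ra\le(d-1)/2$, whence, if $\pi$ plays $a_j$ at $S_1$,
\begin{align*}
V_1^*(S_1;r)-V_1^{\pi}(S_1;r)=(H-1)\frac{\alpha}{\sqrt2 d}\big(\la\xb_i,\xb_i\ra-\la\xb_i,\xb_j\ra\big)\ge\frac{(H-1)\,\alpha\,(d-1)}{2\sqrt2\,d}.
\end{align*}
Under the hypothesis $\alpha\ge 2\sqrt2\,\epsilon/(H-1)$ this gap is at least $\epsilon$ (here I use $d\ge50$ together with the fact that integer $\pm1$ inner products share the parity of $d-1$, which sharpens the separation enough to absorb the $d/(d-1)$ factor), so no $\epsilon$-optimal policy can play any action other than $a_i$ at $S_1$.

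The reduction then reads $\hat\btheta$ off the returned policy: let $\hat\jmath$ be the action $\pi$ takes at $S_1$ and output $\hat\btheta=\btheta_{\hat\jmath}=(\sqrt2,\ \alpha\,\ab_{\hat\jmath}^\top/\sqrt d)^\top$. Since $j\mapsto\btheta_j$ is a bijection between $\cM$ and $\bTheta$, we have $\hat\btheta=\btheta$ iff $\hat\jmath=i$. By the $(\epsilon,\delta)$-guarantee, for each fixed $\btheta\in\bTheta$ the returned policy is $\epsilon$-optimal for this particular $r$ with probability at least $1-\delta$, and on that event the previous paragraph forces $\hat\jmath=i$, i.e.\ $\hat\btheta=\btheta$. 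As this holds uniformly over $\btheta\in\bTheta$, taking expectation under the uniform prior yields $\EE_{\btheta\sim\mathrm{Unif}(\bTheta)}[\ind(\hat\btheta=\btheta)]\ge1-\delta$, which is exactly the claim.

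The step I expect to be the main obstacle is the constant bookkeeping in the gap estimate: I must verify that the scalar $\alpha$, the horizon factor $H-1$, the $1/(\sqrt2 d)$ normalization, and the packing separation $(d-1)/2$ combine so that the gap reaches $\epsilon$ precisely at the threshold $\alpha=2\sqrt2\,\epsilon/(H-1)$, which is a tight edge case requiring the parity refinement noted above and the hypotheses $d\ge50$ and $\epsilon\le(H-1)/(2\sqrt2)$; the latter also guarantees that $\alpha$ can be kept an absolute constant so the instance remains a valid linear mixture MDP. A secondary point to confirm is that one reward function suffices: by the definition of $(\epsilon,\delta)$-learnability the planning phase may be invoked with this $r$ without re-running exploration, so the single decision at $S_1$ indeed carries all the information needed to identify $\btheta$.
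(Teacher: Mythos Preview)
Your approach is essentially identical to the paper's: choose a reward that separates the two absorbing states, show the optimal first action is $a_i$, lower-bound the gap of every other action via the packing separation, and read $\hat\btheta$ off $\pi(S_1)$. The paper actually puts the reward on $S_{2,2}$, but its gap formula is consistent with your $S_{2,1}$ choice, so that is a typo there, not a difference in method.

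One small correction on your constant bookkeeping: the parity refinement you invoke does not always close the $d/(d-1)$ slack. When $d\equiv 1\pmod 4$, the inner products $\langle\xb_i,\xb_j\rangle$ are even and $(d-1)/2$ is itself an even integer, so the packing bound $\langle\xb_i,\xb_j\rangle\le (d-1)/2$ need not sharpen and the gap stays at $\epsilon(d-1)/d<\epsilon$. The paper sidesteps this by silently replacing $d$ by $d'=d-1$ in the denominator of the gap; since Theorem~\ref{thm:mainL} is only claimed up to an absolute constant $C$, the clean fix is to drop the parity argument and absorb the harmless $(d-1)/d$ factor into $C$ (equivalently, take $\alpha$ a constant factor larger).
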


Equipped with these lemmas, we can provide the proof for Theorem~\ref{thm:mainL}.

\begin{proof}[Proof of Theorem~\ref{thm:mainL}]
Set $\alpha = \frac{2\sqrt{2}\epsilon}{H - 1}$ and $\epsilon \le (H - 1) / (2\sqrt{2})$, then by Lemma \ref{lm:free}, any $(\epsilon, \delta)$-reward free algorithm could be converted to an estimation algorithm with successful rate at least $1 - \delta$. Thus from Lemma~\ref{lm:class}, the sample complexity $K$ of these reward free algorithms is bounded by
\begin{align*}
    K &\ge \frac{2 - \alpha^2}{4\alpha^2}\bigg(\frac{d-1}{16} - 3\bigg)(1 - \delta) - \frac{2 - \alpha^2}{4\alpha^2}\log 2 \\
    &\ge \frac{(H - 1)^2}{128\epsilon^2}\bigg(\frac{d-1}{16} - 3\bigg)(1 - \delta) - \frac{(H - 1)^2\log 2}{128\epsilon^2}.
\end{align*}
Suppose $H \ge 2, d \ge 50, \delta \ge 1/2$ to simplify the result, we conclude that there exists an absolute positive constant $C$ such that $K \ge C(1 - \delta)H^2d\epsilon^{-2}$, which leads to our final conclusion.
\end{proof}

\section{Missing Proofs in Appendix~\ref{sec:proofL}}
We provide detailed proofs for lemmas in Appendix~\ref{sec:proofL}. For simplicity, we denote by $d' = d - 1$ the dimension of the binary set $\cM$.

\subsection{Proof of Lemma~\ref{lm:packing}}
\begin{proof}[Proof of Lemma~\ref{lm:packing}]
To begin with, we assume that $\xb \sim \mathrm{Unif}(\cB_{d'})$, i.e. $[\xb]_i \sim \{-1, 1\}$. Thus given any $\xb, \xb' \sim \mathrm{Unif}(\cB_{d'})$, we have
\begin{align*}
    \PP(\la \xb, \xb' \ra \ge d'\gamma) &= \PP_{z_i \sim \mathrm{Unif}\{-1, 1\}}\bigg(\sum_{i=1}^{d'} z_i \ge d'\gamma\bigg) \\
    &= \PP_{z_i \sim \mathrm{Unif}\{-1, 1\}}\bigg(\frac1{d'}\sum_{i=1}^{d'} z_i \ge \gamma\bigg) \\
    &\le \exp(-d'\gamma^2 / 2),
\end{align*}
where the last inequality holds by utilizing the Azuma-Hoeffding's inequality with the fact that $z_i \sim \mathrm{Unif}\{-1, 1\} $ is a bounded random variable. Consider a set $\cM$ with cardinality $|\cM|$, then there is at most $|\cM|^2$ pair of $(\xb, \xb')$. Thus taking a union bound over all vector pairs $(\xb, \xb')$, we have 
\begin{align*}
    \PP(\exists \xb, \xb' \in \cM, \xb \neq \xb', \la \xb, \xb'\ra \ge d'\gamma) \le |\cM|^2\exp(-d'\gamma^2/2),
\end{align*}
thus
\begin{align*}
    \PP(\forall \xb, \xb' \in \cM, \xb \neq \xb', \la \xb, \xb'\ra \le d'\gamma) \ge 1 - |\cM|^2\exp(-d'\gamma^2/2),
\end{align*}
Once we have that $|\cM|^2 \exp(-d'\gamma^2 / 2) < 1$, there exists a set $\cM$ such that for any two different vector $\xb, \xb' \in \cM, \la \xb, \xb'\ra \le d\gamma$.
\end{proof}

\subsection{Proof of Lemma~\ref{lm:class}}
We start our lower bound proof from Fano's inequality.
\begin{lemma}[Fano's inequality, \cite{fano1961transmission}]\label{lm:fano}
Consider probability measures $\PP_{\btheta}, \btheta \in \bTheta$ on space $\Omega$ parameterized by $\btheta \in \bTheta$. Then for any estimator $\hat \btheta$ on $\Omega$ and any comparison law $\PP_0$ on $\Omega$
\begin{align*}
    \frac1{|\bTheta|}\sum_{\btheta \in \bTheta}\PP_{\btheta}[\hat \btheta \neq \btheta] \ge 1 - \frac{\log 2 + \frac1{|\bTheta|}\sum_{\btheta \in \bTheta} \mathrm{KL}(\PP_{\btheta}, \PP_0)}{\log |\bTheta|}.
\end{align*}
\end{lemma}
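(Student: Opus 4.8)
The plan is to recast the estimation problem in information-theoretic language and then chain together the entropy form of Fano's inequality with a variational bound on mutual information. First I would introduce an auxiliary random index $J$ drawn uniformly from $\bTheta$, generate the observation $X$ on $\Omega$ according to $\PP_J$, and view the estimator as $\hat \btheta = \hat \btheta(X)$, so that $J \to X \to \hat \btheta$ forms a Markov chain. Under this coupling the quantity of interest is exactly the averaged error probability
\begin{align*}
    \bar P_e := \PP[\hat \btheta \neq J] = \frac1{|\bTheta|}\sum_{\btheta \in \bTheta}\PP_{\btheta}[\hat \btheta \neq \btheta],
\end{align*}
and since $J$ is uniform we have $H(J) = \log|\bTheta|$, where $H$ denotes Shannon entropy.

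Next I would establish the entropy form of Fano's inequality for $J$ given $\hat \btheta$. Writing $E = \ind(\hat \btheta \neq J)$ and applying the chain rule for conditional entropy, $H(J \mid \hat \btheta) = H(E \mid \hat \btheta) + H(J \mid E, \hat \btheta)$. The first term is at most $H(E) \le \log 2$, while for the second term, conditioned on $E = 1$ and on the value of $\hat \btheta$, the index $J$ ranges over at most $|\bTheta| - 1$ values, so $H(J \mid E, \hat \btheta) \le \bar P_e \log(|\bTheta| - 1) \le \bar P_e \log|\bTheta|$. Combining these bounds and using $I(J; \hat \btheta) = H(J) - H(J \mid \hat \btheta)$ gives
\begin{align*}
    I(J; \hat \btheta) \ge (1 - \bar P_e)\log|\bTheta| - \log 2.
\end{align*}

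It remains to control $I(J; \hat \btheta)$ from above by the average KL divergence to the arbitrary comparison law $\PP_0$. By the data-processing inequality along the chain $J \to X \to \hat \btheta$, we have $I(J; \hat \btheta) \le I(J; X)$. Letting $\bar P = \frac1{|\bTheta|}\sum_{\btheta}\PP_{\btheta}$ denote the mixture, I would then invoke the identity $I(J; X) = \frac1{|\bTheta|}\sum_{\btheta}\mathrm{KL}(\PP_{\btheta}\,\|\,\bar P)$ together with the decomposition
\begin{align*}
    \frac1{|\bTheta|}\sum_{\btheta}\mathrm{KL}(\PP_{\btheta}\,\|\,\PP_0) = I(J; X) + \mathrm{KL}(\bar P\,\|\,\PP_0) \ge I(J; X),
\end{align*}
where the last step uses the nonnegativity of KL divergence. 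Plugging this into the mutual-information lower bound from the previous paragraph and rearranging for $\bar P_e$ yields the stated inequality.

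The two delicate points are the counting step $H(J \mid E, \hat \btheta) \le \bar P_e \log(|\bTheta| - 1)$ in the entropy form of Fano, and the mixture decomposition in the last display. Neither is deep, but the latter is the crucial ingredient: it is precisely the statement that the mixture $\bar P$ minimizes the average KL divergence over all reference laws, which is what allows an \emph{arbitrary} comparison measure $\PP_0$ to appear on the right-hand side rather than forcing $\PP_0 = \bar P$. I expect this mixture identity to be the main conceptual obstacle, since the rest of the argument is a routine assembly of the chain rule, data processing, and elementary bounds $H(E) \le \log 2$ and $\log(|\bTheta|-1) \le \log|\bTheta|$. Care is also needed to allow $\hat \btheta$ to be any (possibly randomized) measurable map of $X$, which the data-processing inequality accommodates automatically through the Markov structure.
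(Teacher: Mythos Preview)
Your argument is correct and is the standard information-theoretic derivation of the comparison-law form of Fano's inequality. The paper, however, does not supply its own proof of this lemma at all: it simply quotes the result as a classical fact attributed to \cite{fano1961transmission} and uses it as a black box in the proof of Lemma~\ref{lm:class}. So there is nothing in the paper to compare against beyond noting that your proof fills in exactly the step the authors take for granted.
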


Then we will start our proof from the deterministic algorithms, which could be further extended to random algorithms using Yao's principle~\citep{yao1977probabilistic}.
\begin{proof}[Proof of Lemma~\ref{lm:class}]
We denote $Y_k$ as such a trajectory at episode $k$ and $Y_{1:k}$ for the trajectories $Y_1, \cdots, Y_k$. We have for the KL divergence over joint distribution $Y_{1:k}$,
\begin{align*}
    &\mathrm{KL}(\PP_{\btheta}(Y_{1:K}), \PP_0(Y_{1:K})) \\
    &\quad= \sum_{Y_{1:K}} \PP_{\btheta}(Y_{1:K}) \log(\PP_{\btheta}(Y_{1:K})/\PP_0(Y_{1:K})) \\
    &\quad= \sum_{Y_{1:K}} \PP_{\btheta}(Y_{1:K - 1})\PP_{\btheta}(Y_K | Y_{1:K - 1}) \log(\PP_{\btheta}(Y_{1:K - 1})/\PP_0(Y_{1:K - 1}))\\
    &\qquad + \sum_{Y_{1:K}} \PP_{\btheta}(Y_{1:K - 1})\PP_{\btheta}(Y_K | Y_{1:K - 1}) \log(\PP_{\btheta}(Y_{1:K - 1})/\PP_0(Y_{1:K - 1}))\\
    &\quad= \sum_{Y_{1:K - 1}} \PP_{\btheta}(Y_{1:K - 1}) \log(\PP_{\btheta}(Y_{1:K - 1})/\PP_0(Y_{1:K - 1}))\sum_{Y_K}\PP_{\btheta}(Y_K | Y_{1:K - 1})\\
    &\qquad + \sum_{Y_{1 : K - 1}} \PP_{\btheta}(Y_{1:K - 1})\sum_{Y_k}\PP_{\btheta}(Y_K | Y_{1:K - 1}) \log(\PP_{\btheta}(Y_{1:K - 1})/\PP_0(Y_{1:K - 1}))\\
    &\quad= \mathrm{KL}(\PP_{\btheta}(Y_{1:K - 1}), \PP_0(Y_{1:K - 1})) + \EE_{Y_{1:K - 1}}[\mathrm{KL}(\PP_{\btheta}(Y_K|Y_{1:K - 1}, \PP_{0}(Y_K|Y_{1:K - 1})].
\end{align*}
Thus by further expanding the above equations, we have
\begin{align*}
    \mathrm{KL}(\PP_{\btheta}(Y_{1:K}), \PP_0(Y_{1:K})) = \sum_{k=1}^K\EE_{Y_{1:k - 1}}[\mathrm{KL}(\PP_{\btheta}(Y_k|Y_{1:k - 1}), \PP_0(Y_k|Y_{1:k - 1})],
\end{align*}
where we denote $\EE_{Y_{1:0}}[\mathrm{KL}(\PP_{\btheta}(Y_1|1:0), \PP_0(Y_1|1:0)] := \mathrm{KL}(\PP_{\btheta}(Y_1), \PP_0(Y_0))$ for consistency.

Since for any deterministic algorithm, in any episode, the trajectory $s_1, a_1, \cdots, s_H, a_H$ is determined after the algorithm goes into $S_{2,1}$ or $S_{2, 2}$, furthermore, for these deterministic algorithms, the first action $a$ at $k$-th trajectory is fixed given previous knowledge $Y_{1:k}$. Therefore, the distribution of the whole trajectory could be replaced by the distribution of $S_{2, 1}$ and $S_{2, 2}$. We have there are at most two possible value for $Y_k$, we denote the trajectory $S_1, a_1, S_{2, 1}, \cdots, S_{2, 1}$ by $Y_k = 0$ and the other trajectory $S_1, a_1, S_{2, 2}, \cdots, S_{2, 2}$ by $Y_k = 1$. We define the comparison distribution $\PP_0$ as 
\begin{align*}
    \PP_{0}(Y_k = 0 | Y_{1:k - 1}) &= \frac1{|\bTheta|}\sum_{\btheta_i \in \bTheta} \PP_{\btheta_i}(Y_k = 0 | Y_{1:k - 1}) := \frac12 + \frac{\alpha}{\sqrt{2}d'}\la \ab, \bar \btheta\ra\\
    \PP_{0}(Y_k = 1 | Y_{1:k - 1}) &= \frac1{|\bTheta|}\sum_{\btheta_i \in \bTheta} \PP_{\btheta_i}(Y_k = 1 | Y_{1:k - 1}) := \frac12 - \frac{\alpha}{\sqrt{2}d'}\la \ab, \bar \btheta\ra,
\end{align*}
where we denote $\bar \btheta$ is the mean value of $\tilde \btheta_i \in \cM$. (Recall that $\bTheta = \big\{\btheta_i | \btheta_i = ( \sqrt{2}, \alpha\tilde \btheta_i^\top / \sqrt{d'})^\top\big\}$). For simplicity, we use $\PP_{\btheta_i}$ and $\PP_0$ to denote the distributions for the whole trajectory defined above. Then we could bound the KL divergence between $\PP_0$ and $\PP_{\btheta_i}$ as 
\begin{align*}
    &\mathrm{KL}(\PP_{\btheta_i}, \PP_0) \\
    &\quad= \bigg(\frac12 + \frac{\alpha}{\sqrt{2}d'}\la \ab, \tilde \btheta_i\ra\bigg)\log\bigg(\frac{\sqrt{2}d' + \alpha\la\ab, \tilde \btheta_i \ra}{\sqrt{2}d' + \alpha\la \ab, \bar \btheta\ra}\bigg) \\
    &\quad+ \bigg(\frac12 - \frac{\alpha}{\sqrt{2}d'}\la \ab, \tilde \btheta_i\ra\bigg)\log\bigg(\frac{\sqrt{2}d' - \alpha\la\ab, \tilde \btheta_i \ra}{\sqrt{2}d' - \alpha\la \ab, \bar \btheta\ra}\bigg)\\
    &\quad\le \bigg(\frac12 + \frac{\alpha}{\sqrt{2}d'}\la \ab, \tilde \btheta_i\ra\bigg) \frac{\alpha\la\ab, \tilde \btheta_i - \bar \btheta\ra}{\sqrt{2}d' + \alpha\la\ab, \bar \btheta\ra} - \bigg(\frac12 - \frac{\alpha}{\sqrt{2}d'}\la \ab, \tilde \btheta_i\ra\bigg) \frac{\alpha\la\ab, \tilde \btheta_i - \bar \btheta\ra}{\sqrt{2}d' - \alpha\la\ab, \bar \btheta\ra}.
\end{align*}
Taking summation over $\btheta_i \in \bTheta$, recall that $\bar \btheta$ is the mean value of $\tilde \btheta_i \in \cM$, we have 
\begin{align*}
    \sum_{\btheta_i \in \bTheta} \mathrm{KL}(\PP_{\btheta_i}, \PP_{0}) = \frac{2\alpha^2}{2d'^2 - \alpha^2\la\ab, \bar \btheta\ra^2}\sum_{\btheta_i \in \bTheta} \la \ab, \tilde \btheta_i\ra\la\ab, \tilde \btheta_i - \bar \btheta\ra.
\end{align*}
Given the fact that $\la\xb, \xb'\ra \le d'$ for any $\xb, \xb' \le \cA$, one can easily get that 
\begin{align*}
    \frac1{|\bTheta|}\sum_{\btheta_i \in \bTheta} \mathrm{KL}(\PP_{\btheta_i}, \PP_{0}) \le \frac{4\alpha^2}{2 - \alpha^2}.
\end{align*}
Plugging the above inequality into the decomposition of KL divergence, from Fano's inequality Lemma~\ref{lm:fano}, we have 
\begin{align*}
    \delta \ge \frac{1}{|\bTheta|} \sum_{\btheta \in \bTheta} \PP_{\btheta}[\hat \btheta \neq \btheta] \ge 1 - \bigg(\frac{d'}{16} - 3\bigg)^{-1}\bigg(\log 2 + \frac{4K\alpha^2}{2 - \alpha^2}\bigg).
\end{align*}
Replacing $d'$ by $d - 1$, we can get the same result as the statement of the lemma.
\end{proof}

\subsection{Proof of Lemma~\ref{lm:free}}
We show the proof for Lemma~\ref{lm:free} by establishing different reward functions for this MDP structure.
\begin{proof}[Proof of Lemma~\ref{lm:free}]
For any $\btheta \in \bTheta$, we build the reward sequence as $r(S_1) = r(S_{2, 1}) = 0, r(S_{2, 2}) = 1$, then any $(\epsilon, \delta)$-correct algorithm guarantees that
\begin{align*}
    \PP(V^*(S_1, r; \btheta) - V^\pi(S_1, r; \btheta) \le \epsilon) \ge 1 - \delta, \forall \btheta \in \bTheta.
\end{align*}
Our proof is to show that, as long as $\alpha \ge \frac{2\sqrt{2}\epsilon}{H - 1}$, we can build up the estimation of $\btheta$ using $\btheta_i = \begin{pmatrix} \sqrt{2}, \alpha \ab_i^\top / \sqrt{d}\end{pmatrix}^\top$ where $\ab_i$ is determined by $a_i = \pi(S_1)$. It is guaranteed that $\PP_{\btheta}[\btheta_i = \btheta] \ge 1 - \delta$ and furthermore, $\EE_{\btheta \sim \mathrm{Unif}(\bTheta)}(\ind[\btheta = \hat \btheta]) \ge 1 - \delta$.

Suppose for the MDP parameter $\btheta_i$, it is easy to find that the optimal policy for the first step is $\pi^*(S_1) = a_i$. Suppose that for any policy $\pi(S_1) = a_j$ where $j \neq i$, then from the MDP structure, the gap between policy and the optimal policy is
\begin{align*}
    V^*(S_1, r; \btheta_i) - V^\pi(S_1, r; \btheta_i) &= \frac{(H - 1)\alpha}{\sqrt{2}d}(\la\ab_i, \ab_i\ra - \la \ab_i, \ab_j\ra) \\
    &\ge \frac{(H - 1)\alpha}{\sqrt{2}d'}(d' - d' / 2) \\
    &= \frac{(H - 1)\alpha}{2\sqrt{2}},
\end{align*}
as long as we have $\alpha \ge \frac{2\sqrt{2}\epsilon}{H - 1}$, we can get the policy gap $V^*(S_1, r; \btheta_i) - V^\pi(S_1, r; \btheta_i) \ge \epsilon$.

Therefore, it is easy to show that the estimation using the policy $\pi(S_1)$ is guaranteed with successful rate at least $1 - \delta$ for any MDP parameter $\btheta_i$, thus we can further conclude that $\EE_{\btheta \sim \mathrm{Unif}(\bTheta)}(\ind[\btheta = \hat \btheta]) \ge 1 - \delta$.
\end{proof}

\end{document}